\documentclass[twoside]{article}

%
%

%
\usepackage[accepted]{aistats2026}
%


\usepackage[round]{natbib}

\usepackage{proj}

\bibliographystyle{apalike}

\begin{document}

%

%

\twocolumn[

\aistatstitle{Incoherence in Goal-Conditioned Autoregressive Models}

\aistatsauthor{ Jacek Karwowski \And Raymond Douglas }

\aistatsaddress{ Department of Computer Science \\ University of Oxford \And Telic Research} ]

\begin{abstract}
  We investigate mathematically the notion of \emph{incoherence}: a structural issue with reinforcement learning policies derived by naive goal-conditioning of autoregressive models. We focus on the process of re-training models on their own actions, that is, fine-tuning offline-learned policies with online RL. We prove that it decreases incoherence and leads to an improvement in return, and we aim to characterise the resulting trajectory of policies. By re-framing standard notions of control-as-inference and soft Q learning, we establish a three-way correspondence with two other ways of understanding the iterative re-training process: as \emph{folding the posterior into the reward} and, in the deterministic case, as \emph{decreasing the temperature parameter}; the correspondence has computational content via the training-inference trade-off. Through soft-conditioning generative models, we discuss the link between incoherence and the \emph{effective horizon} of \cite{laidlaw_bridging_2024}.
\end{abstract}

\section{INTRODUCTION}\label{sec:introduction}

\begin{figure}[t]
    \centering
    \begin{tikzpicture}[thick, scale=1]
      \node (root) at (0,0) {$\emptyset$};
      \node (R) at (2,1) {$\scalebox{2}{\mountain}$};
      \node (Rc) at (2,-1) {$\scalebox{2}{\forest}$};
      \node (RR) at (4,1.5) {$\scalebox{2}{\gold}$};
      \node (RRc) at (4,0.5) {$\scalebox{2}{\skull}$};
      \node (RcR) at (4,-0.5) {$\scalebox{2}{\silver}$};
      \node (RcRc) at (4,-1.5) {$\scalebox{2}{\silver}$};
    
      \draw[->] (root) -- (R) node [midway, above left] {$\frac{1}{2}$};
      \draw[->] (root) -- (Rc) node [midway, below left] {$\frac{1}{2}$};
      \draw[->] (R) -- (RR) node [midway, above left] {$\frac{1}{2}$};
      \draw[->] (R) -- (RRc) node [midway, below left] {$\frac{1}{2}$};
      \draw[->] (Rc) -- (RcR) node [midway, above left] {$\frac{1}{2}$};
      \draw[->] (Rc) -- (RcRc) node [midway, below left] {$\frac{1}{2}$};
    
      \node at (6,1.5) {$\Prob{R = 1|\gold} = 1$};
      \node at (6,0.5) {$\Prob{R = 1|\skull} = 0$};
      \node at (6,-0.5) {$\Prob{R = 1|\silver} = \frac{3}{4}$};
      \node at (6,-1.5) {$\Prob{R = 1|\silver} = \frac{3}{4}$};
    \end{tikzpicture}
    \caption{Tree representation of the MDP defining the \emph{mountain race} (\Cref{example:two-cards}). States are represented as the tree nodes, we put a uniform prior over actions (depicted as arrows $\nearrow, \searrow$). Rewards for each terminal state (tree leaf) are written on the right.}
    \label{fig:two-cards}
\end{figure}

Control-as-inference reframes reinforcement learning as an inference problem: instead of explicitly trying to search for an optimal policy in a given environment, one first constructs a generative model over actions or trajectories, and then conditions it on the goal, deriving the policy from the posterior. Doing so allows for more abstract characterisation of the resulting policies, without reference to the internals of the particular learning algorithm.

In this work, we focus on a multi-step environment, where the derived policy is used autoregressively: in each time-step, the generative model is conditioned on both the fixed goal and the current state. In this setup, we refine and characterize the recently introduced notion of \emph{predictor-policy incoherence}~\citep{douglas2024limitations}, or simply \emph{incoherence}. This is a \emph{structural} problem with such policies, which is not fixed by improving predictive accuracy of the underlying model. It stems from the fact that, given a binary reward $R$ and a prior over actions $p(a|s)$, the conditioned policy $\pi(a|s) = p(a|s, R = 1)$ is an answer to the question: 
\begin{quote}
    \emph{(1) Which action to take in state $s$, such that, if later choices are made according to the prior $p$, the outcome will lead to R?}
\end{quote}
and \emph{not} the question:
\begin{quote}
    \emph{(2) Which action to take in state $s$, such that, if later choices are sampled auto-regressively from $\pi$, the outcome will lead to R?}
\end{quote}
Let us illustrate this problem by looking at a simple deterministic Markov decision process.

\begin{example}[Mountain race]\label{example:two-cards}
An agent is racing in the mountains. Starting in the state $\emptyset$, it is given a choice between two trails: a slower path down through the forest $\forest$, and a faster path up over the ridge $\mountain$. Both trails fork in the middle: following the path up on the $\mountain$ junction leads quickly to the finish line $\gold$, while the path down ends in a chasm $\skull$. On the $\forest$ path, both choices lead, albeit more slowly, to the finish line $\silver$. Full game tree is presented in~\Cref{fig:two-cards};

The agent is given binary reward $R$ with probability $1$ if finished $\gold$, with probability $\frac{3}{4}$ if finished $\silver$, and with probability $0$ if finished $\skull$.  Below we denote actions as $\{\nearrow, \searrow\}$. The game is deterministic w.r.t. player's choice, but for the purpose of computing control-as-inference solution, we assume a uniform prior over actions.
\end{example}

We might intuitively understand this game in the following way: an agent is given a choice between ``risky" play of $\mountain$, which gives it a large payoff of $1$ -- but only if it chooses the trail correctly later on, and a ``safe" play of $\forest$, which gives it a smaller but more certain payoff, with probability $\frac{3}{4}$.

We compute the policy given by conditioning on the outcome $R = 1$ as:
\begin{align*}
    \pi(a|\emptyset) = \pr{a|\emptyset, R=1} &=
    \frac{\pr{R=1|\emptyset, a}\pr{a|\emptyset}}{\pr{R=1|\emptyset}}\ \\ &\ \propto\ \pr{R=1|\emptyset, a}
\end{align*}
Since the transition dynamics are deterministic, we plug this into the formulas for $\pi$ and get:
\begin{align*}
    \pi(\nearrow|\emptyset) &\ \propto\ \pr{R=1|\emptyset, \nearrow} =  \pr{R=1|\mountain} = \frac{1}{2}\\
    \pi(\searrow|\emptyset) &\ \propto\ \pr{R=1|\emptyset, \searrow} =  \pr{R=1|\forest} = \frac{3}{4}
\end{align*}

We now observe that the RHS evaluates simply to the prior probability of attaining $R$. In other words, it does not take into account that the policy $\pi$ is to be used autoregressively. On the other hand, a coherent policy $\hat\pi$ should have conditioned on the fact that $\hat\pi(R = 1 | \mountain) = 1$, since, arriving in the state $\mountain$, the reward-conditioned policy is also used.

As the above example shows, a naive autoregressive control-as-inference approach might result in incoherent policies. Intuitively, we understand the incoherence as the incapacity of the agent to anticipate its own future actions, in line with the difference between Questions (1) and (2). This also suggests a possible, and indeed, widely-used, fix: to fine-tune the agent on its own actions. Understanding the dynamics of this process, that is, what kinds of policies does it produce along the re-training trajectory, as well as relating it to incoherence, is thus the second point of this work.

\subsection{Contributions and outline}
The primary contribution of this paper is to properly (re-)define incoherence, and then reframe and unify existing ways of \emph{tightening} goal‑conditioned autoregressive policies in that framework. We discuss connections and parallels to many well-known approaches to soft RL in \Cref{sec:related-work}. After preliminaries in~\Cref{sec:preliminaries}, \Cref{sec:incoherence} concerns the issues of incoherence; we refine and generalise the definition given previously by~\citet{douglas2024limitations} to better account for the difference between \emph{coherence} and \emph{optimality}, and describe some properties of coherent policies. In \Cref{sec:removing-incoherence}, we discuss three ways of updating RL policies to remove incoherence: 
\begin{itemize}
    \item In~\Cref{def:goal-conditioning}, as fine-tuning (re-conditioning) policies on their own trajectories, implemented e.g. as collecting trajectories from the model acting in an environment and re-training the model based on the augmented dataset.
    \item In~\Cref{def:temperature} as decreasing the temperature parameter, which can also be understood as manipulating the strength of the entropy regularisation in KL-regularised RL, implemented using e.g. inference-time best-of-$n$ rejection sampling in practice.
    \item In~\Cref{def:folded,def:folded-orig}, as folding the posterior over actions into the reward, which resembles the trick of disregarding a prior by folding it into the reward~\citep{levine_reinforcement_2018}. Since the posterior depends on the reward, the process has to be repeated iteratively until convergence. This technically modifies the MDP, instead of modifying the policy. Even if the starting reward was only given sparsely in end-state, this process distributes it around the MDP.
\end{itemize}
The connections between those perspectives allow us to transfer properties between those formulations, for example, to provide a rate of convergence to the optimal policy of the re-conditioning. We discuss some related perspectives, in particular a connection to effective horizon~\cite{laidlaw_bridging_2024}, and limitations, in~\Cref{sec:effective}. All proofs are given in~\Cref{appendix:proofs}. We also give a code appendix implementing tabular MDPs experiments and validating our main results numerically at \href{https://github.com/jkarwowski/incoherence}{https://github.com/jkarwowski/incoherence}.

\section{RELATED WORK}\label{sec:related-work}

\textbf{Control as Inference} view, where optimality is modeled via a binary variable $\mathcal O$ with likelihood $p(\mathcal O| s,a)\ \propto\ e^{\alpha r(s,a)}$, is a central influence on our work. In that setup, deterministic dynamics allow for exact inference; in stochastic dynamics the policy solves a variational problem~\citep{levine_reinforcement_2018}. We emphasize that \emph{coherence} requires computing expectations under the posterior policy, not under a fixed prior; which forces the iterative posterior‑folding we analyse here. \cite{odonoghue_making_2020} focus on a similarly-termed incoherence in RL-as-inference, having to do with the fact that posterior probabilities do not reflect epistemic uncertainty about actions, and the effect this has on the exploration-exploitation trade-off. The connection between control and inference has been originally studied by~\citet{todorov_linearly-solvable_2006}, and in the context of Inverse RL by~\citet{ziebart_maximum_2008,gleave_primer_2022}.

\textbf{$\KL$‑regularized policy search methods} optimise expected return with a $\KL$ trust region to a prior, yielding E‑step that is a Boltzmann distribution over Q and an M‑step that updates the actor~\citep{peters_relative_2010,schulman_proximal_2017,abdolmaleki_maximum_2018}. Our ``temperature" view is the Lagrange multiplier of the $\KL$ constraint; our iterative posterior‑folding viewpoint recovers the same policy sequence when dynamics are deterministic. We thus reframe $\KL$‑regularized policy improvement as \emph{restoring coherence} for goal‑conditioned policies, as well as extend it to the stochastic or multi-step environment as compared to~\citet{korbak_rl_2022}.

\textbf{RvS~\citep{brandfonbrener_when_2023}} as well as Upside-down RL~\citep{srivastava_training_2021} and Decision Transformer~\citep{chen_decision_2021} approaches condition actions on desired returns and act autoregressively. Theory and experiments show that naive conditioning yields systematic failures in stochastic environments (trajectory ``luck”), and that separating controllable from uncontrollable randomness improves behavior \cite{strupl_upside-down_2022,paster_you_2022,yang_large_2024}. Our notion of incoherence is a structural account of the same issue: the posterior used for conditioning assumes futures incompatible with the deployment policy; our equivalence results characterize procedures that realign them.

\textbf{Expert iteration and MCTS} are a widely used and successful method of improving models' performance. Well-known applications include AlphaZero~\citep{silver_mastering_2017, silver_general_2018} and MuZero~\citep{schrittwieser_mastering_2020}, algorithms using Monte Carlo Tree Search~\citep{browne_survey_2012}. One treatment of those kinds of algorithms combining search and improvement of the policy was proposed by~\citep{anthony_thinking_2017} under the name of Expert Iteration. We focus on the abstract properties of the re-training process, in a situation of soft-conditioning policies, a combination which prior work did not address. Self-play in games~\citep{macleod_game_2005, openai_dota_2019, vinyals_grandmaster_2019} is another related strategy of improving the performance of a policy, however, it is distinct from the model learning about its own future policy in a non-competitive setup we study here.

\textbf{Large language models} are capable of general world modelling~\citep{radford_language_2019, brown_language_2020, bai_constitutional_2022, touvron_llama_2023}, and thus capable of simulating agents~\citep{shanahan_role-play_2023, douglas2024limitations}. Although our setup here considers a model trained on a single environment,~\citet{andreas_language_2022} argues that this perspective applies to LLMs whose training corpus comes from human actions in the internet environment. Prior work on eliciting agents through prompting and scaffolding methods~\citep{significant_gravitas_autogpt_2024, yang_auto-gpt_2023} conditions the base model in a purely formal sense. The exact nature of this form of conditioning, as well as its connection to RL fine-tuning methods (such as RLHF~\citep{christiano_deep_2017}, DPO~\citep{rafailov_direct_2023}, GRPO~\citep{shao_deepseekmath_2024}) is an open problem, such as e.g.~\emph{RLHF Conditioning Hypothesis}~\citet{hubinger_conditioning_2023}. It has been argued that the next token prediction objective alone encourages local consistency but not global planning~\citep{mccoy_embers_2023}. Recent work on \textsc{Coconut}~\cite[Section 5.1, Fig. 7]{hao_training_2024} shows that the answer‑token distribution along a latent reasoning tree acts like an implicit value function, an empirical point of support for our energy/value interpretation of goal‑conditioning.

\textbf{Residual Energy‑Based Models} for text generation add residual energy atop a base autoregressive model to steer sequence probabilities~\citep{deng_residual_2020}. Our folding-posterior‑into‑reward iteration is the control analogue of adding residual energy $\log p_\pi(a|s)$ to the base reward, with dynamics then re‑evaluated under the new energy. This clarifies when temperature annealing can substitute for residual terms (deterministic dynamics) and when it cannot (stochastic). Energy-based policies of the general form $\pi(a|s)\ \propto\ \exp(E(s, a))$ had been studied by~\citet{haarnoja_reinforcement_2017}, and in the context of off-policy RL used to develop Soft Actor-Critic algorithm~\citet{haarnoja_soft_2018}. This line of work is focused on the iterative approach using Bellman updates and countering distributional shift, while we look at the re-training as making policy internally consistent (still requiring computational effort).

\section{PRELIMINARIES}\label{sec:preliminaries}

A \emph{Markov Decision Process} (MDP) with a time horizon $T \in \mathbb{N}$ is a tuple $\FullMDP$, where $\S$ is a set of \emph{states}, $\A$ is a set of \emph{actions}, $\T: \SxA \to \dist{S}$ is a transition function, $\m \in \dist{S}$ is the initial distribution over states, $r: \SxA \to [-\infty, 0]$ is the reward function (assumed to be non-positive), and $\gamma \in [0, 1]$ is a time discount factor. We will assume $\gamma = 1$ without loss of generality (as one can always convert an MDP with $\gamma < 1$ to $\gamma = 1$ by introducing an auxiliary terminal state). A \emph{trajectory} is a sequence $\xi = (s_0, a_0, s_1, \ldots, s_T, a_T)$ such that $a_i \in \A$, $s_i \in \S$ for all $i$. A \emph{policy} is a function $\policy :\S \to \dist{A}$. We say that the policy $\policy$ is deterministic if for each state $s$ there is some $a \in \A$ such that $\policy(s) = \delta_{a}$. Each policy $\policy$ on an MDP induces a probability distribution over trajectories $p_\pi(\xi)$; drawing a trajectory $(s_0, a_0 \ldots, s_T, a_T)$ from a policy $\policy$ means that $s_0$ is drawn from $\m$, each $a_i$ is drawn from $\policy(a_i|s_i)$, and $s_{i+1}$ is drawn from $\T(s_{i+1}|a_i,s_i)$ for each $i$. For a policy $\pi(a|s)$ and a reward $r(s,a)$ we define the return $J(\pi)$ to be $J(\pi) = \Expect{s_t, a_t \sim \pi}{\sum_{i=0}^{T} r(s_i, a_i)}$. 
We will sometimes distinguish a policy $\pi(a|s)$ and a prior over actions $p(a|s)$: even though these have the same type, they play different conceptual roles, with the policy being subject to the optimisation process and thus not fixed.

\section{THE THEORY OF INCOHERENCE}\label{sec:incoherence}

To build a quantitative measure of incoherence, we first need to define soft $Q$ and $V$ functions, which are better-suited for working in a probabilistic setup than the standard definitions.

\begin{definition}[Soft $Q$ and $V$ functions]\label{def:softvq}
    For a policy $\pi(a_t|s_t)$, a reward function $r(s_t,a_t)$ and transition dynamics $\tau(s_{t+1}|s_t, a_t)$, the soft $V$ and $Q$ functions are defined by mutual recursion as:
    \begin{align*}
    Q^\pi(a_t, s_t) &= r(s_t,a_t) + \log \Expect{s_{t+1} \sim \tau(s_t,a_t)}{\exp{V^\pi(s_{t+1})}} \\
    V^\pi(s_t) &= \log \Expect{a_t \sim \pi(a_t|s_t)}{\exp{Q^\pi(s_t,a_t)}}
    \end{align*}
\end{definition}

We note that our definition is parametrised by a policy $\pi$, and thus slightly different from the one given in~\citep{haarnoja_reinforcement_2017, levine_reinforcement_2018}: the difference is in how we compute the expectation in $V^\pi$. Instead of drawing the action from a fixed prior, which is then routinely replaced w.l.o.g. by a uniform distribution over $A$, we draw it according to the policy $\pi$. The point of this subtlety will become apparent in the next section. In any case, above definition gives us the following alternative characterisation.

\begin{proposition}[Characterisation of $V$ and $Q$]\label{prop:charqv}
    For any prior $\pi(a|s)$ and any non-positive reward function $r(a,s)$, we have simple expressions for the soft $Q$ and $V$ functions given by:
    \begin{align*}
        Q^\pi(a_t, s_t) &= \log p_\pi(\OO_{t:T} = 1|s_t,a_t) \\
        V^\pi(s_t) &= \log p_\pi(\OO_{t:T} = 1|s_t)
    \end{align*}
    where we define the auxiliary \emph{optimality variables} $\OO_t$ to have Bernoulli distributions with:
    \[
    p_\pi(\OO_t = 1|s_t,a_t) = e^{r(s_t,a_t)}
    \]
\end{proposition}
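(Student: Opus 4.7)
The plan is to prove both identities simultaneously by backward induction on the time index $t$, descending from $t = T$ to $t = 0$. The definitions of $Q^\pi$ and $V^\pi$ in \Cref{def:softvq} are themselves given by mutual recursion with exactly this structure, so the induction slots in naturally: at each step, the inductive hypothesis on $V^\pi(s_{t+1})$ feeds into the formula for $Q^\pi(a_t, s_t)$, which in turn feeds into $V^\pi(s_t)$. The bridge between the two sides of each identity is the graphical model induced by $\pi$ and the family $\{\OO_t\}$: trajectories are drawn as in \Cref{sec:preliminaries}, and conditional on $(s_t, a_t)$, each $\OO_t$ is an independent Bernoulli with success probability $e^{r(s_t, a_t)}$ (this is a probability since $r \le 0$).

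For the base case $t = T$, there are no future time steps, so $Q^\pi(a_T, s_T) = r(s_T, a_T) = \log p_\pi(\OO_T = 1|s_T, a_T)$ by definition of the optimality variable, and then $V^\pi(s_T) = \log \Expect{a_T \sim \pi}{\exp r(s_T,a_T)} = \log p_\pi(\OO_T = 1|s_T)$ by marginalising over $a_T$. For the inductive step, assuming the identity for $V^\pi(s_{t+1})$, I would plug it into the definition of $Q^\pi$ to get
\begin{align*}
Q^\pi(a_t, s_t) &= r(s_t,a_t) + \log \Expect{s_{t+1} \sim \tau}{p_\pi(\OO_{t+1:T}=1|s_{t+1})} \\
&= \log\bigl(p_\pi(\OO_t = 1|s_t,a_t)\,p_\pi(\OO_{t+1:T}=1|s_t,a_t)\bigr),
\end{align*}
and then symmetrically take the log-expectation over $a_t \sim \pi$ to recover $V^\pi(s_t) = \log p_\pi(\OO_{t:T}=1|s_t)$.

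The one genuine step to verify carefully, and the main (minor) obstacle, is the factorisation in the second line above. It relies on two facts about the graphical model: first, that $\OO_t$ is conditionally independent of $\OO_{t+1:T}$ given $(s_t, a_t)$, since $\OO_t$ depends only on $(s_t, a_t)$; and second, the Markov property, which gives $p_\pi(\OO_{t+1:T}=1|s_t,a_t) = \Expect{s_{t+1} \sim \tau(\cdot|s_t,a_t)}{p_\pi(\OO_{t+1:T}=1|s_{t+1})}$ because the future optimality variables depend on $(s_t, a_t)$ only through $s_{t+1}$ (and the policy and dynamics downstream of it). Once this factorisation is recorded once and for all, the induction goes through mechanically, and the non-positivity assumption on $r$ is used only to ensure the Bernoulli probabilities are well-defined.
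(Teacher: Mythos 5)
Your proposal is correct and follows essentially the same route as the paper: backward induction from $t=T$, with the inductive step resting on the factorisation $p_\pi(\OO_{t:T}=1|s_t,a_t) = p_\pi(\OO_t=1|s_t,a_t)\,\Expect{s_{t+1}\sim\tau}{p_\pi(\OO_{t+1:T}=1|s_{t+1})}$ and the $V$ identity obtained by averaging over $a_t\sim\pi$. If anything, your explicit statement of the two conditional-independence/Markov facts underlying that factorisation is slightly more careful than the paper's version of the same step.
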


The above characterisation, making use of the auxiliary optimality variables $\OO_t$, showcases an approach that we will be utilizing throughout the rest of this paper. To put it differently, one might look at the (non-positive) reward $r(s, a)$ as the probability that taking action $a$ in the state $s$ is \emph{correct} - correctness constituting a latent property, as a more convenient way to cast MDP reward dynamics in a probabilistic setup.

We also have an alternative characterisation. Having a probability distribution over trajectories:
\[
p(\xi) = \prod_{t = 1}^{T} p(a_t|s_t)\tau(s_{t+1}|a_{t}, s_{t})
\]
as in~\Cref{sec:preliminaries}, we derive the goal-conditioned:
\[
p(\xi|\OO_{1:T}) = \frac{p(\OO_{1:T}|\xi)p(\xi)}{p(\OO_{1:T})}
\]
On the other hand, we have the distribution over trajectories given by locally goal-conditioning the prior over actions on the future reward:
\[
\hat{p}(\xi) = \prod_{t = 1}^{T} p(a_t|s_t, \OO_{t:T})\tau(s_{t+1}|a_{t}, s_{t})
\]
\begin{proposition}
    In case of deterministic dynamics, we have that $p(\xi|\OO_{1:T}) = \hat{p}(\xi)$.
\end{proposition}

Following~\citet{levine_reinforcement_2018}, we highlight a potential issue here. Since the policy derived by control-as-inference conditions the prior \emph{trajectories} distribution, (given by $(p, \tau)$ jointly) on the goal $\OO$, it results in an over-optimistic estimation of the environment dynamics, since it also conditions the stochastic dynamics. This will be the source of various issues separating deterministic and non-deterministic $\tau$ we encounter in next sections.

\subsection{Incoherence}
Incoherence is, in a sense, a statement about a policy's failure to act according to \emph{its} future roll-outs, and relying on the prior instead, as in the~\Cref{example:two-cards}. Thus, we judge a policy more \emph{coherent} the more its future returns influence the present decision in a consistent way. If we do not want to \emph{a priori} prescribe the exact way of the influence is exerted, we can leave the function as a parameter in the definition below.
\begin{definition}[Order‑respecting $f$]
    We say that a function $f$ is order-respecting if for all $x$ and $i\neq j$ we have that $f_i(x)$ is non‑decreasing in $x_i$ holding other coordinates fixed; $f_i(x)$ is non‑increasing in $x_j$ holding the rest fixed; and if $x_i\ge x_j$ then $f_i(x)\ge f_j(x)$.
\end{definition}
Examples of order-respecting $f$ include softmax with any temperature and the $\arg\max$ indicator (with discontinuities at ties).

\begin{definition}[$f$-soft Q policy]
    For an order-respecting function $f$ and a policy $\pi(a|s)$ its $Q, f$-soft policy $\pi^{Q,f}(a|s)$ is the probability distribution:
    \[
    \pi^{Q, f}(\cdot|s)\ \propto\ f(Q^\pi(s, \cdot))
    \]
\end{definition}
We note that because our MDP setup allowed rewards with zero-probability $p(\mathcal{O}_t = 1| s_t, a_t) = 0$,  we allow the rewards to be $-\infty$. We also note that in~\Cref{example:two-cards} (as well as in examples down below) for clarity of presentation, we have used reward $R$ to mean $\mathcal{O}_{T=2}$.

The incoherence is then the KL divergence between the current policy distribution, and the distribution prescribed by future returns.

\begin{definition}[$f$-incoherence]
    The incoherence of a policy $\pi(a|s)$ with respect to its $f$-soft Q policy is defined as the $\KL$ divergence over trajectories $\xi^\pi$:
    \[
    \kappa_f(\pi) = \KL_\xi(\pi(\xi) || \pi^{Q, f}(\xi))
    \]
\end{definition}

We note that by a well-known correspondence (see e.g.~\cite{haarnoja_soft_2018} or~\cite{belousov_kl_2017} for an informal write up) the $\KL$ divergence over trajectories can be rewritten as the per-state $\KL$ averaged over occupancy measure.
\begin{proposition}\label{prop:alternative-incoherence}
    If $d^t_\pi$ is the marginalised occupancy measure ${d^t_\pi(s_t) = p_{\xi \sim \pi}(S_t = s_t)}$, we have:
    \[
    \kappa_f(\pi) = \sum_{t = 1}^T \mathbb{E}_{s_t \sim d^t_\pi} \KL(\pi(\cdot|s_t) || \pi^{Q, f}(\cdot|s_t))
    \]
\end{proposition}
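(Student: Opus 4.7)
The plan is to expand both trajectory distributions using the Markov factorization and apply the chain rule for $\KL$ divergence, exploiting the fact that the initial distribution $\mu$ and the transition kernel $\tau$ enter $\pi(\xi)$ and $\pi^{Q,f}(\xi)$ identically and so cancel in the log-ratio.

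First, I would write
\begin{align*}
\pi(\xi) &= \mu(s_0)\prod_{t}\pi(a_t|s_t)\prod_{t}\tau(s_{t+1}|s_t,a_t), \\
\pi^{Q,f}(\xi) &= \mu(s_0)\prod_{t}\pi^{Q,f}(a_t|s_t)\prod_{t}\tau(s_{t+1}|s_t,a_t).
\end{align*}
Subtracting the logs cancels the $\mu$ and $\tau$ factors and leaves a telescoping sum of per-timestep log-ratios of the action distributions. Taking the expectation under $\pi(\xi)$ and swapping expectation with the finite sum gives
\[
\kappa_f(\pi) \;=\; \sum_{t}\mathbb{E}_{\xi\sim\pi}\!\left[\log\frac{\pi(a_t|s_t)}{\pi^{Q,f}(a_t|s_t)}\right].
\]

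Next, for each fixed $t$, I would apply the tower property by conditioning on $s_t$. By definition of the marginalized occupancy measure, the law of $s_t$ under $\xi\sim\pi$ is exactly $d^t_\pi$, and conditionally on $s_t$ the action $a_t$ is drawn from $\pi(\cdot|s_t)$. Hence the inner expectation collapses:
\[
\mathbb{E}_{\xi\sim\pi}\!\left[\log\frac{\pi(a_t|s_t)}{\pi^{Q,f}(a_t|s_t)}\right] = \mathbb{E}_{s_t\sim d^t_\pi}\!\KL\!\bigl(\pi(\cdot|s_t)\,\|\,\pi^{Q,f}(\cdot|s_t)\bigr),
\]
and summing over $t$ produces the claimed identity.

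There is no genuine obstacle here — this is the standard per-step decomposition of trajectory-level $\KL$ under shared dynamics. The only point worth flagging in the write-up is that both expectations are taken with respect to $\pi$ (not $\pi^{Q,f}$), which is precisely what makes the relevant marginal of $s_t$ equal to $d^t_\pi$ rather than $d^t_{\pi^{Q,f}}$; the cancellation of $\tau$ in the log-ratio means one never has to reason about the (different) state marginals that $\pi^{Q,f}$ would induce.
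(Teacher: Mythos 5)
Your argument is correct and is exactly the standard per-step decomposition of trajectory-level $\KL$ under shared dynamics that the paper itself invokes by citation (it gives no explicit proof, deferring to the well-known correspondence): cancel the common $\mu$ and $\tau$ factors in the log-ratio, then condition on $s_t$ so the law of $s_t$ is $d^t_\pi$ and the inner expectation becomes the per-state $\KL$. Your closing remark that both expectations are under $\pi$, so only $d^t_\pi$ (and never the state marginals of $\pi^{Q,f}$) appears, is precisely the right point to flag.
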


We say that a policy is $f$-\emph{coherent}, if $\kappa_f(\pi) = 0$. In particular, coherence does not mean optimality in the usual sense of maximizing the return. Deterministic policies are $f$-coherent for a single functional $f$.
\begin{proposition}
    Given deterministic dynamics $\tau$ and policy $\pi(a|s)$ and an order-respecting $f: \mathbb{R}^{|A|} \to \Delta(|A|)$, such that for $x$ having a unique maximum, $f(x)$ is an $\arg\max$ indicator, $\pi$ is $f$-coherent if and only if it is greedy w.r.t. its own soft‑$Q$ function.
\end{proposition}

We also note that from~\Cref{prop:charqv} we know that:
\[
Q^\pi(a_t, s_t) = \log p_\pi(\mathcal{O}_{t:T} = 1)
\]
so in this case, maximising $J(\pi)$ is equivalent to maximising the probability of optimality $\mathcal{O}_{1:T} = 1$.

What about the general case of stochastic dynamics? First, we show that we can construct the coherent policy iteratively, starting from any prior $p(a|s)$.

\begin{definition}[Iterated $f$-coherence]\label{def:iterated-coherence}
    Given a prior $\pi(a|s)$ and a function $f$, we define a sequence of policies $\pi^\BB_i$ recursively by:
    \begin{align*}
        \pi^\BB_0(a|s) = \pi(a|s) \qquad
        \pi^\BB_{i+1}(a|s) = \pi^{Q, f}(\pi^\BB_i(a|s))
    \end{align*}
\end{definition}
This procedure of iterated $f$-coherence $\pi^\BB$ converges to the $f$-coherent policy after at most $T$ steps.
It is the soft value iteration algorithm using an $f$-transformed soft $Q$-value.
\begin{proposition}
    The policy $\pi^\BB_T$ is $f$-coherent.
\end{proposition}

The coherence is defined up to the choice of the particular function $f$, which dictates how strongly future returns influence the behaviour in a particular step. We will be interested in a special class of coherent policies, given by Boltzmann distributions.

\begin{definition}[Boltzmann rationality]
    Given a policy $\pi$, the Boltzmann-rational policy $\pi^\delta$ with the parameter $\delta > 0$ is given by:
    \[
    \pi^\delta(a|s) \propto\exp\left({ \frac{1}{\delta} Q^\pi(a,s)}\right)
    \]
\end{definition}
It is known that in a single-step MDP setup, Boltzmann-rational policies are the solution to the satisficing maximum entropy problem (see, e.g. \citet[Appendix A]{jeon_reward-rational_2020}). We will not need this perspective here, but we return to it in~\Cref{sec:equivalence}.

\begin{figure}
        \centering
        \begin{tikzpicture}
          \node (root) at (0,0) {${\emptyset}$};
          \node (R) at (2,1) {$\scalebox{2}{\mountain}$};
          \node (Rc) at (2,-1) {$\scalebox{2}{\forest}$};
          \node (RR) at (4,1.5) {$\scalebox{2}{\gold}$};
          \node (RRc) at (4,0.5) {$\scalebox{2}{\skull}$};
          \node (RcR) at (4,-0.5) {$\scalebox{2}{\silver}$};
          \node (RcRc) at (4,-1.5) {$\scalebox{2}{\silver}$};
        
          \draw[->] (root) -- (R) node [midway, above left] {$\frac{4}{7}$};
          \draw[->] (root) -- (Rc) node [midway, below left] {$\frac{3}{7}$};
          \draw[->] (R) -- (RR) node [midway, above left] {$1$};
          \draw[->] (R) -- (RRc) node [midway, below left] {$0$};
          \draw[->] (Rc) -- (RcR) node [midway, above left] {$\frac{1}{2}$};
          \draw[->] (Rc) -- (RcRc) node [midway, below left] {$\frac{1}{2}$};
        
          \node at (6,1.5) {$\Prob{R} = \frac{1}{4}$};
          \node at (6,0.5) {$\Prob{R} = 0$};
          \node at (6,-0.5) {$\Prob{R} = \frac{3}{4}$};
          \node at (6,-1.5) {$\Prob{R} = \frac{3}{4}$};
        \end{tikzpicture}
        \caption{The fixed point of iterated $f$-coherence achieved after $T = 2$ iterations.}
        \label{fig:iterated-fixpoint}
    \end{figure}
\begin{example}[Boltzmann-coherent mountain race]
    We revisit the~\Cref{example:two-cards}. We might compute the Boltzmann-coherent policy for $\delta = 1$ by iterating the $f$-coherence from~\Cref{def:iterated-coherence}. After $T = 2$ iterations, we arrive at the fixpoint in~\Cref{fig:iterated-fixpoint}.
\end{example}
\begin{definition}[Boltzmann incoherence]\label{def:boltzmann-incoherence}
    Given a policy $\pi$ and a parameter $\delta > 0$, denoting $g(x) = \exp(x/\delta)$, we define the Boltzmann incoherence (or simply \emph{incoherence}) as $\kappa_\delta(\pi) = \kappa_{g}(\pi)$.
\end{definition}
Unrolling the definition, we have $\kappa_\delta(\pi) = \KL(\pi(a|s)||\pi^\delta(a|s))$.
In other words, a policy $\pi$ is Boltzmann-coherent with parameter $\delta$, if it is coherent with respect to its $f$-soft Q policy for $g$, i.e. softmax with the parameter $\delta$.

\begin{proposition}\label{lemma:optimal-maxent}
    Given any prior $\pi(a|s)$, there exists a policy $\pi^*(a|s)$ such that we have convergence in distribution:
    \[
        \lim_{\delta \to 0} \pi^\delta \to \pi^*
    \]
    where $\pi^\delta$ is defined with respect to the soft Q function induced by the prior $\pi$. Moreover, the policy $\pi^*$ can be explicitly characterised as the uniform distribution over $A^* := \{a^* \in \A: Q^\pi(s, a^*) = \max_{a \in \A} Q^\pi(s, a)\}$.
\end{proposition}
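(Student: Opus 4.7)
The statement is essentially the zero-temperature limit of the softmax, applied pointwise at each state. The crucial observation is that, unlike in iterated $f$-coherence, the soft $Q$-function here is induced by the \emph{fixed} prior $\pi$, so $Q^\pi(s,a)$ does not depend on $\delta$; only the exponential weighting does. I would fix a state $s$, set $M(s) := \max_{a}Q^\pi(s,a)$ and $A^*(s) := \{a \in \A : Q^\pi(s,a) = M(s)\}$, and argue pointwise in $a$.

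Next, I would factor out the maximum in both numerator and denominator to rewrite
\[
\pi^\delta(a|s) \;=\; \frac{\exp\!\bigl((Q^\pi(s,a)-M(s))/\delta\bigr)}{\sum_{a'\in\A}\exp\!\bigl((Q^\pi(s,a')-M(s))/\delta\bigr)}.
\]
Every summand in the denominator with $a' \notin A^*(s)$ has a strictly negative exponent and thus tends to $0$ as $\delta \to 0^+$; every summand with $a' \in A^*(s)$ is identically $1$. Hence the denominator converges to $|A^*(s)|$ while the numerator converges to $\mathbf{1}[a \in A^*(s)]$, giving $\pi^\delta(a|s) \to \tfrac{1}{|A^*(s)|}\mathbf{1}[a \in A^*(s)] = \pi^*(a|s)$. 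Since $\A$ is discrete, pointwise convergence of the probability mass functions is equivalent to weak convergence of the action distributions, which is exactly the convergence in distribution claimed.

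The only genuine obstacle is the $-\infty$ edge case. Because rewards may take the value $-\infty$, some soft $Q$-values can equal $-\infty$; this is harmless whenever $M(s) > -\infty$, since terms $\exp((-\infty)/\delta) = 0$ simply drop out of both sums, so $A^*(s)$ is well-defined and the limit argument goes through unchanged. The only degenerate situation is $Q^\pi(s,\cdot) \equiv -\infty$ on all of $\A$, i.e.\ no trajectory from $s$ has positive probability of reaching $\OO_{t:T} = 1$; there the Boltzmann policy is $0/0$ and requires a convention (for example, defining $\pi^*(\cdot|s)$ to agree with the prior $\pi(\cdot|s)$). I would mention this as a brief caveat and otherwise work in the standard extended-real arithmetic.
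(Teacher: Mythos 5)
Your proposal is correct and follows essentially the same route as the paper's proof: factor (or divide) out the maximal $Q$-value in the softmax, note that off-maximum terms vanish as $\delta \to 0^+$ while the maximizing terms contribute $|A^*(s)|$ in the denominator, yielding the uniform distribution on $A^*(s)$. Your additional handling of the $-\infty$ reward edge case is a reasonable refinement the paper does not spell out, but it does not change the argument.
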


We can now tie together the notions of \emph{coherence} and \emph{optimality}.

\begin{corollary}
    If the prior $p$ is an optimal policy for the Markov chain, then limiting policy $\pi^*$ from \Cref{lemma:optimal-maxent} is also optimal, which is not necessarily true for non-optimal priors.
\end{corollary}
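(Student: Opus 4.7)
The plan is to prove the positive half of the corollary via a backward induction establishing $V^{\pi^*} = V^p$ state-wise, and to prove the negative half by instantiating the construction on \Cref{example:two-cards} with a uniform prior.

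For the positive direction, I would first invoke \Cref{prop:charqv} to identify $Q^p(a,s) = \log p_p(\mathcal{O}_{t:T}=1|s,a)$. Optimality of $p$ means that $V^p(s)$ coincides with the optimal soft value $V^*(s)$ at every state; the soft Bellman optimality equation then yields $V^p(s) = \max_a Q^p(a,s)$, and since $V^p(s) = \log \mathbb{E}_{a \sim p}[\exp Q^p(a,s)]$, this log-sum-exp collapses to the max only when $p(\cdot|s)$ is supported on $A^*(s) := \arg\max_a Q^p(a,s)$. I would then show by backward induction on $t$ that $V^{\pi^*}(s_t) = V^p(s_t)$ for every $s_t$: the base case is immediate, and inductively, equality at $t+1$ yields $Q^{\pi^*}(a,s_t) = Q^p(a,s_t)$ via \Cref{def:softvq}; then both $\pi^*(\cdot|s_t)$ (by construction) and $p(\cdot|s_t)$ (by the support argument above) are supported on $A^*(s_t)$, so both log-sum-exp values collapse to $\max_a Q^p(a,s_t) = V^p(s_t)$. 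Hence $\pi^*$ attains the optimum.

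For the negative direction, in \Cref{example:two-cards} with uniform $p$, direct computation gives $Q^p(\mountain, \emptyset) = \log \tfrac{1}{2}$ and $Q^p(\forest, \emptyset) = \log \tfrac{3}{4}$, so $A^*(\emptyset) = \{\searrow\}$; the limit $\pi^*$ therefore commits deterministically to the $\forest$ branch with expected reward $\tfrac{3}{4}$, whereas the genuinely optimal policy reaches $\gold$ with reward $1$. The uniform prior is itself suboptimal, and this suboptimality is inherited by $\pi^*$.

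The main subtlety lies in disentangling the soft return $V^\pi$ from the ordinary return $J(\pi)$: the Bellman-support argument above is clean in the soft-value formulation, and in the deterministic dynamics of the counterexample the two notions agree on deterministic policies, so the overall statement is consistent. The key lemma driving the induction is that an optimal $p$ must be supported on $\arg\max_a Q^p(\cdot, s)$ at every state, which is precisely what lets $V^{\pi^*} = V^p$ propagate down the tree; without optimality of $p$, this support property fails and $\pi^*$ can latch onto an action whose $Q^p$ value reflects the prior's suboptimal future rather than its own.
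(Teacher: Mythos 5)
The paper never actually writes out a proof of this corollary: it is treated as an immediate consequence of \Cref{lemma:optimal-maxent} (whose characterisation of $\pi^*$ as uniform over $\arg\max_a Q^p(s,\cdot)$ does all the work), with \Cref{example:two-cards} implicitly supplying the non-optimal-prior half. Your argument is correct and is essentially that intended argument made explicit: the support observation (optimality of $p$ forces $p(\cdot|s)$ onto $A^*(s)$, since $\log\mathbb{E}_{a\sim p}[\exp Q^p(s,a)]$ attains $\max_a Q^p(s,a)$ only when the support lies in the argmax set) plus the backward induction $V^{\pi^*}=V^p$ is a careful soft-value rendering of ``greedy with respect to an optimal policy's $Q$-function is optimal,'' and your counterexample is exactly the paper's mountain race, where $Q^p(\emptyset,\nearrow)=\log\tfrac12<\log\tfrac34=Q^p(\emptyset,\searrow)$ makes $\pi^*$ commit to the suboptimal branch.

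One caveat is worth making explicit rather than leaving to your closing remark: your positive direction reads ``optimal'' as optimality for the success-probability (soft-value) objective at every state, i.e.\ $V^p=V^*$. Under stochastic dynamics this is not equivalent to maximising the ordinary return $J(\pi)=\mathbb{E}[\sum_t r(s_t,a_t)]$: a $J$-optimal prior need not maximise $\mathbb{E}[\exp\sum_t r]$, and one can build small two-step stochastic examples where the $\arg\max$ of $Q^p$ for a $J$-optimal $p$ selects a $J$-suboptimal action, so under that reading the positive half would actually fail. The paper glosses over this by asserting that maximising $J$ and maximising $p(\mathcal{O}_{1:T}=1)$ coincide, which is safe only for deterministic dynamics; your proof is sound precisely because you adopt the soft-value reading (or restrict to deterministic dynamics), and stating that choice up front would close the only real gap in the write-up.
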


What about other policies? We can still use the limiting distance to the Boltzmann-rational policies to determine the optimality.
\begin{corollary}\label{prop:kappa-optimal}
    A necessary condition for policy $\pi(a|s)$ to be optimal is that:
    \[
    \lim_{\delta \to 0} \kappa_\delta(\pi) < \infty
    \]
\end{corollary}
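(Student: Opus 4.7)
The idea is to reduce the question to a per-state KL analysis via \Cref{prop:alternative-incoherence}, and then to invoke a one-step policy-improvement argument. Writing
\[
\kappa_\delta(\pi) = \sum_{t=1}^T \mathbb{E}_{s_t \sim d_\pi^t}\bigl[\KL(\pi(\cdot|s_t) \,\|\, \pi^\delta(\cdot|s_t))\bigr],
\]
and using \Cref{lemma:optimal-maxent}, $\pi^\delta(\cdot|s)$ converges (monotonically in each coordinate) to the uniform distribution on $A^*(s) := \arg\max_a Q^\pi(s,a)$. Thus $\pi^\delta(a|s) \to 0$ for $a \notin A^*(s)$, so the per-state KL blows up whenever $\pi(\cdot|s)$ places positive mass outside $A^*(s)$, and otherwise decreases monotonically to the finite limit $\log|A^*(s)| - H(\pi(\cdot|s))$. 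Combining Fatou in one direction with dominated convergence in the other, finiteness of $\lim_{\delta \to 0}\kappa_\delta(\pi)$ is equivalent to the statement that at $d_\pi^t$-almost every state $s$, $\pi(\cdot|s)$ is supported on $A^*(s)$, i.e., $\pi$ is greedy with respect to its own soft $Q$-function along its occupancy.

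It therefore suffices to prove that every optimal policy is greedy in this sense. I would argue by contraposition: if $\pi(a'|s) > 0$ for some state $s$ with $d_\pi^t(s) > 0$ and some $a' \notin A^*(s)$, define $\pi'$ to agree with $\pi$ everywhere except at $s$, where $\pi'(\cdot|s)$ puts unit mass on an element of $A^*(s)$. Unrolling \Cref{def:softvq},
\[
V^{\pi'}(s) = \max_a Q^\pi(s,a) > \log \mathbb{E}_{a\sim \pi(\cdot|s)}\bigl[\exp Q^\pi(s,a)\bigr] = V^\pi(s),
\]
with strict inequality because log-sum-exp strictly exceeds max whenever positive softmax mass sits off the argmax. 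As the policy is unchanged on trajectories leading into $s$ and $s$ has positive occupancy, this local soft-value gain propagates back to a strict increase of $V^{\pi'}$ at the initial distribution, contradicting optimality.

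The main obstacle is reconciling the notion of ``optimal'' (return-maximizing, as in \Cref{sec:preliminaries}) with the soft $V/Q$ formalism in which the improvement step is phrased. For deterministic transitions $\tau$, soft and hard value functions coincide on deterministic policies, so the soft-value gain translates directly into a strict increase of $J(\pi)$. For stochastic $\tau$ one either has to reinterpret optimality via $V^\pi$ at the initial distribution --- equivalently, as maximization of $\log p_\pi(\OO_{1:T}=1)$, the natural soft-RL target --- or supplement the argument with a standard hard Bellman policy-improvement step; this is essentially the same over-optimistic-dynamics subtlety flagged by the authors immediately after \Cref{prop:charqv}.
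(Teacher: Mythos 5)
Your reduction is the same as the paper's: decompose $\kappa_\delta$ with \Cref{prop:alternative-incoherence}, pass to the limit via \Cref{lemma:optimal-maxent}, and read off finiteness as a per-state support condition between $\pi(\cdot|s)$ and the uniform distribution on $A^*(s)=\arg\max_a Q^\pi(s,a)$. Incidentally, your reading of that condition --- $\text{supp}(\pi(\cdot|s))\subseteq A^*(s)$, i.e.\ $\pi$ greedy with respect to its own soft $Q$ along its occupancy --- is the correct one; the paper's appendix states the reverse inclusion and concludes that $\pi$ must put mass on \emph{every} maximizer, which is neither what finiteness of $\KL(\pi\,\|\,\pi^*)$ requires nor a necessary condition for optimality. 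Where you genuinely diverge is the last step, which the paper disposes of in one line (``necessary for optimality, by backwards induction on $t$'') and you attempt to prove by a policy-improvement contraposition.

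That step has a gap. Your improvement argument shows that a $\pi$ placing mass off $A^*(s)$ at a positively-occupied state admits a $\pi'$ with strictly larger \emph{soft} value at the initial distribution, i.e.\ larger $\log p_\pi(\OO_{1:T}=1)$; this contradicts soft-optimality, not optimality of $J$ as defined in \Cref{sec:preliminaries}, and your patch (``soft and hard values coincide on deterministic policies'') does not apply because neither the optimal $\pi$ (which may mix over tied actions) nor $\pi'$ need be deterministic. The bridge you need for deterministic $\tau$ is a short lemma: the pathwise return is bounded by the best trajectory return $R^*$, so $J(\pi)=R^*$ forces the return to equal $R^*$ $\pi$-almost surely, hence $\pi$ also maximizes $\mathbb{E}[\exp(\text{return})]=\exp V^\pi(s_0)$, and then your strict soft improvement yields the contradiction. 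For stochastic $\tau$ the defect is not repairable by ``a standard hard Bellman policy-improvement step'': since the soft $Q$ is risk-seeking with respect to transition noise (the over-optimism flagged after \Cref{prop:charqv}), one can build a two-step MDP in which the unique return-optimal action is not a maximizer of $Q^\pi$, so the limiting incoherence of the return-optimal policy is infinite; only your first alternative (reinterpreting optimality as the soft objective) survives there, and the paper's own one-line induction glosses over exactly this point.
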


We note that the above~\Cref{prop:kappa-optimal} does not extend to a sufficient condition. To see this, it is enough to consider a case of $T=2$ MDP, such as e.g.~\Cref{example:counter}. If the prior $\pi$ puts zero weight on the action $\nearrow$ in the state $\emptyset$. Then, even though in state $\mountain$, $Q^\pi$ is independent of $\pi$ and therefore being $Q^\pi_t$-greedy is equivalent to optimality, it does not extend to $t=0$.

We also note that the Boltzmann coherence is a combination of being soft-conditioned and coherent. In the case of MDPs with time horizon $T = 1$, the coherence requirement disappears, and the condition comes down to simply optimising a reinforcement learning problem with a KL penalty. We follow~\cite{korbak_rl_2022} in stating the following.
\begin{proposition}[RL with KL penalties]\label{prop:rlwithkl}
    Given an MDP with time horizon $T = 1$, a reward $r(s,a)$ and a prior $p(a|s)$, the Boltzmann-coherent policy $\pi(a|s)$ can be derived by maximising the KL-regularised return $J(\pi) = \Expect{}{r(s,a)} - \KL(\pi(a|s)||p(a|s))$. The resulting policy is of a form:
    \[
    \pi^*(a|s)\ \propto\ p(a|s)\exp(r(s,a))
    \]
\end{proposition}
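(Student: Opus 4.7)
The key observation is that for $T = 1$ the soft $Q$-function of~\Cref{def:softvq} collapses to $Q^\pi(s,a) = r(s,a)$, since there is no successor state from which $V^\pi$ can contribute; so the coherence fixed-point condition no longer involves $\pi$ on the right-hand side and the problem reduces, as the preceding paragraph notes, to a straightforward KL-regularised RL optimisation. The plan is therefore to solve $\max_\pi J(\pi)$ directly and show that its unique maximiser has the claimed Gibbs form.

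Both the expected reward and the KL decompose over the initial state, so it suffices to work pointwise at each $s \in \S$. I would introduce the normalising constant $Z(s) = \int_\A p(a|s) \exp(r(s,a))\,da$, which is finite by the non-positivity of $r$, and the candidate distribution $q(a|s) = p(a|s)\exp(r(s,a))/Z(s)$. The central computation is the identity
\[
J_s(\pi) = \int \pi(a|s)\bigl[r(s,a) + \log p(a|s) - \log \pi(a|s)\bigr]\,da = -\KL\bigl(\pi(\cdot|s) \,\|\, q(\cdot|s)\bigr) + \log Z(s),
\]
obtained by recognising $r(s,a) + \log p(a|s) = \log q(a|s) + \log Z(s)$. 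Since $\log Z(s)$ does not depend on $\pi$ and the KL divergence is non-negative with equality iff $\pi = q$ almost everywhere on the support of $p$, the unique maximiser is $\pi^*(a|s) = q(a|s) \propto p(a|s)\exp(r(s,a))$.

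There is no serious obstacle here — the argument is the standard Gibbs variational derivation, and a Lagrangian calculation (stationarity of $\pi(a|s) \mapsto r(s,a) + \log p(a|s) - \log \pi(a|s) - \lambda(s)$) could alternatively be used to exhibit the optimiser via first-order conditions. The only mild care needed is for actions with $p(a|s) = 0$ or $r(s,a) = -\infty$, but both $\pi$ and $q$ inherit the same support from $p$, so the identification $\pi^* = q$ remains well-defined in those cases.
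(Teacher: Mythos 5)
Your proof is correct and is essentially the expected argument: the paper gives no separate proof of this proposition, deferring to the cited KL-regularised-RL literature (Korbak et al.), whose derivation is exactly the Gibbs-variational computation you carry out, rewriting $J_s(\pi) = -\KL\bigl(\pi(\cdot|s)\,\|\,q(\cdot|s)\bigr) + \log Z(s)$ with $q(a|s) \propto p(a|s)\exp(r(s,a))$ and invoking non-negativity of the KL divergence. Your preliminary remark that for $T=1$ the soft $Q$-function is just $r(s,a)$, so the coherence fixed-point no longer involves $\pi$, matches the paper's own framing of why the $T=1$ case reduces to plain KL-penalised RL.
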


\section{REMOVING INCOHERENCE}\label{sec:removing-incoherence}
Having established the notions of incoherence, we now turn to the topic of removing it through the process of fine-tuning policies on their own actions. We first establish the notion of goal-conditioning a prior over actions.

\begin{definition}[Goal conditioning]\label{def:goal-conditioning}
    Given transition dynamics $\tau(s_{t+1}|a_{t}, s_{t})$, a prior $p(a_t|s_t)$ and a non-positive reward $r(s, a)$, we say that a policy $\pi$ is \emph{given by goal conditioning}, if we have that:
    \[
    \pi(a_t|s_t) = p(a_t|s_t, \OO_{t:T} = 1)
    \]
    where the right-hand side is defined through the joint distribution of $(\tau, p)$ and the auxiliary \emph{optimality variable} $\OO_t$ has a Bernoulli distribution of:
    \[
    p(\OO_t = 1|s_t,a_t) = e^{r(s_t,a_t)}
    \]
\end{definition}

Simply conditioning on the optimality does not guarantee coherence.

\begin{proposition}
    A policy given by goal conditioning is not necessarily coherent for any choice of $\delta$.
\end{proposition}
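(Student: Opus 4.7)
The plan is to exhibit a concrete counterexample; the Mountain Race of \Cref{example:two-cards} with the uniform prior $p(a|s) = 1/2$ already does the job. First I would complete the goal-conditioning computation begun in the example: at the root one has $\pi(\nearrow|\emptyset) = 2/5$ and $\pi(\searrow|\emptyset) = 3/5$; at $\mountain$ the posterior concentrates on $\nearrow$ because the losing branch has reward $0$, giving $\pi(\nearrow|\mountain) = 1$; at $\forest$ the two continuations are symmetric, giving $\pi(\cdot|\forest)$ uniform.

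Next, using the characterisation of \Cref{prop:charqv}, I would compute the soft $Q$-function of the resulting $\pi$ at the root by pushing the \emph{posterior} policy forward rather than the prior. Under $\pi$, the $\mountain$ branch reaches $\gold$ almost surely, hence $Q^\pi(\emptyset, \nearrow) = 0$; the $\forest$ branch still gives the silver reward either way, hence $Q^\pi(\emptyset, \searrow) = \log(3/4)$. Coherence at the two interior states is automatic for every $\delta > 0$: $\mountain$ has an infinite $Q$-gap with a degenerate $\pi$, and $\forest$ has a tied $Q$ with a uniform $\pi$, so each agrees with its own Boltzmann policy.

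The main step is then to check that $\pi \neq \pi^\delta$ at $\emptyset$ for every temperature. The Boltzmann ratio is
\[
\frac{\pi^\delta(\nearrow|\emptyset)}{\pi^\delta(\searrow|\emptyset)} = \left(\frac{4}{3}\right)^{1/\delta} > 1
\]
for all $\delta > 0$, whereas the goal-conditioned ratio is $2/3 < 1$. By \Cref{prop:alternative-incoherence}, which expresses incoherence as an expected per-state $\KL$, a positive per-state gap at $\emptyset$ (which has positive occupancy under $\pi$) is enough to conclude $\kappa_\delta(\pi) > 0$ for every $\delta$. There is no serious obstacle beyond the arithmetic; the conceptual content is that goal-conditioning at the root scores $\mountain$ using the \emph{prior}, which discounts it by the $1/2$ probability of later choosing $\nearrow$, while Boltzmann coherence scores it under the posterior $\pi$, which already knows $\nearrow$ will be chosen with probability $1$. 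No single temperature can flip the resulting ordering of the two branches.
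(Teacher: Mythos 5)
Your proposal is correct and takes the same route as the paper, which simply points to the Mountain Race counterexample (\Cref{example:two-cards}): the goal-conditioned policy prefers $\searrow$ at the root (ratio $2/3$) while its own soft-$Q$ Boltzmann policy prefers $\nearrow$ (ratio $(4/3)^{1/\delta}>1$) for every $\delta>0$, giving positive per-state $\KL$ at a state of full occupancy. Your write-up merely makes explicit the arithmetic the paper leaves implicit, and the computations check out.
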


Indeed, the counterexample is given by~\Cref{example:two-cards}. However, we might hope to fix this by iterating the goal conditioning, that is, retraining a policy on its own actions. To formalise this process, we give the following definition.

\begin{definition}[Control-as-inference]
    We define the control-as-inference operator $\GG$ on the space of probability distributions $p(a_t|s_t)$ as 
    \[
    \GG(p) = p(a_t|s_t, \OO_{t:T} = 1)
    \]
    where the left-hand side is given jointly by $p, \tau, \OO$. This gives a sequence of policies $\pi^\GG_i$ defined recursively as:
    \begin{align*}
        \pi^\GG_0(a_t|s_t) = p(a_t|s_t) \qquad \pi^\GG_{k + 1} = \GG(\pi^\GG_{k})
    \end{align*}
\end{definition}

We note that this process does not condition the dynamics $\tau$ of the MDP. In other words, it might be thought to be implemented as iteratively changing the joint by collecting rollouts from the conditioned policy, both those successful and unsuccessful, without rejection sampling (conditioned on optimality). Even still, we have the following property:

\begin{proposition}[Strong return improvement lemma]\label{prop:strong-improvement}
    The sequence of policies $(\pi^\GG_i)_{i = 0, 1, \dots}$ given by the control-by-inference improves its return monotonically, that is:
    \[
    J(\pi^\GG_{i + 1}) \geq J(\pi^\GG_{i}) 
    \]
\end{proposition}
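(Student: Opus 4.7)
The plan is to put the operator $\GG$ in closed form via Bayes' rule and \Cref{prop:charqv}, and then argue monotonicity by a trajectory-level FKG inequality in the deterministic case and by backward induction on time in the general case. Using the identifications $\log p_{\pi^\GG_i}(\OO_{t:T}=1 \mid s_t, a_t) = Q^{\pi^\GG_i}(s_t, a_t)$ and $\log p_{\pi^\GG_i}(\OO_{t:T}=1 \mid s_t) = V^{\pi^\GG_i}(s_t)$, Bayes gives
\[
\pi^\GG_{i+1}(a_t \mid s_t) \;\propto\; \pi^\GG_i(a_t \mid s_t)\, \exp\!\bigl(Q^{\pi^\GG_i}(s_t, a_t)\bigr),
\]
so $\GG$ acts as a per-state multiplicative reweighting by a monotone function of the downstream soft $Q$-value under the current policy.

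In deterministic dynamics, the earlier observation that the per-state posterior coincides with the trajectory-level posterior gives $p_{\pi^\GG_{i+1}}(\xi) = p_{\pi^\GG_i}(\xi \mid \OO_{1:T}=1) \propto p_{\pi^\GG_i}(\xi)\, e^{R(\xi)}$ with $R(\xi) = \sum_t r(s_t, a_t)$, and hence
\[
J(\pi^\GG_{i+1}) - J(\pi^\GG_i) \;=\; \frac{\mathrm{Cov}_{p_{\pi^\GG_i}}\!\bigl(R(\xi),\, e^{R(\xi)}\bigr)}{\mathbb{E}_{p_{\pi^\GG_i}}\!\bigl[e^{R(\xi)}\bigr]} \;\ge\; 0
\]
by the FKG/Chebyshev sum inequality, since $R$ and $e^R$ are comonotone. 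For general dynamics this identification fails (it would over-optimistically condition the transition kernel, as flagged after \Cref{prop:charqv}), so I would instead run a backward induction on $t$ to show the pointwise bound $J^{\pi^\GG_{i+1}}_t(s) \ge J^{\pi^\GG_i}_t(s)$ for the ordinary return-to-go $J^\pi_t(s) := \mathbb{E}_\pi[\sum_{k \ge t} r(s_k, a_k) \mid s_t = s]$. The base case $t = T$ reduces to $\mathrm{Cov}_{\pi^\GG_i}(e^{r(s,\cdot)}, r(s,\cdot)) \ge 0$; the inductive step unrolls the ordinary Bellman equation for $J^{\pi^\GG_{i+1}}_t$, applies the induction hypothesis to the tail, and reduces to the one-step improvement
\[
\mathbb{E}_{a \sim \pi^\GG_{i+1}(\cdot \mid s_t)}\!\bigl[r(s_t, a) + \mathbb{E}_{s' \sim \tau(s_t, a)}[J^{\pi^\GG_i}_{t+1}(s')]\bigr] \;\ge\; J^{\pi^\GG_i}_t(s_t).
\]

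The hard part is this one-step inequality in the general case: the update kernel reweights actions by $\exp Q^{\pi^\GG_i}$ (the \emph{soft} backup, sensitive to the tails of $\tau$) while the target is the \emph{standard} linear backup, so the required condition is equivalent to the pointwise bound
\[
\mathrm{Cov}_{a \sim \pi^\GG_i(\cdot \mid s_t)}\!\Bigl(\exp Q^{\pi^\GG_i}(s_t, a),\; r(s_t, a) + \mathbb{E}_{s' \sim \tau(s_t, a)}\bigl[J^{\pi^\GG_i}_{t+1}(s')\bigr]\Bigr) \;\ge\; 0,
\]
asserting that the soft and standard $Q$-functions order actions consistently under $\pi^\GG_i(\cdot \mid s_t)$. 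My intended route is to strengthen the inductive invariant to track comonotonicity between the soft and standard value-to-go functions on successor states, so that the two backups stay comonotone in $a$ at every step of the induction and the covariance is controlled by a pointwise Chebyshev argument; failing that, one can transfer the clean deterministic argument through the folded-reward equivalence of \Cref{def:folded,def:folded-orig}.
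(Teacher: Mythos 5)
Your closed form for the update, $\pi^\GG_{i+1}(a\mid s)\ \propto\ \pi^\GG_i(a\mid s)\exp Q^{\pi^\GG_i}(s,a)$, is exactly the paper's starting point, and your deterministic-case argument (trajectory-level exponential tilting $p_{\pi^\GG_{i+1}}(\xi)\ \propto\ p_{\pi^\GG_i}(\xi)e^{R(\xi)}$ plus the Chebyshev/FKG covariance bound) is correct and is a genuinely different, arguably cleaner, route to monotonicity of the ordinary return in that case. The problem is the stochastic case, which is where your plan has a real gap. The one-step inequality you reduce to,
\[
\mathrm{Cov}_{a \sim \pi^\GG_i(\cdot\mid s_t)}\Bigl(\exp Q^{\pi^\GG_i}(s_t,a),\; r(s_t,a)+\mathbb{E}_{s'\sim\tau(s_t,a)}\bigl[J^{\pi^\GG_i}_{t+1}(s')\bigr]\Bigr)\ \ge\ 0,
\]
is false in general: the soft backup $\log\mathbb{E}_{s'}[\exp V]$ and the linear backup $\mathbb{E}_{s'}[J_{t+1}]$ can order actions oppositely under stochastic transitions. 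Concretely, compare a safe action leading deterministically to success probability $\tfrac12$ with a risky action giving success probability $0.99$ with chance $0.6$ and success probability $\varepsilon$ with chance $0.4$: the risky action has larger $\exp Q^{\mathrm{soft}}$ (success probability $\approx 0.594$) but its expected log-reward tends to $-\infty$ as $\varepsilon\to 0$, so reweighting toward it by $\exp Q^{\mathrm{soft}}$ strictly decreases the ordinary return-to-go. Hence the comonotonicity invariant you hope to propagate cannot hold, and the fallback of transferring the deterministic argument through the folded-reward equivalence does not help either, since $\pi^\FF_k=\pi^\GG_k$ holds for stochastic $\tau$ without making the trajectory posterior a tilt of the prior trajectory measure.

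The paper's proof avoids this entirely by never touching the ordinary linear backup: it proves, by backward induction, that the \emph{soft} value improves pointwise, $V^{\pi^\GG_{i+1}}(s)\ \ge\ V^{\pi^\GG_i}(s)$, using only the identity $\mathbb{E}_{a\sim\pi'}[\exp Q^\pi]=\mathbb{E}_{a\sim\pi}[(\exp Q^\pi)^2]/\mathbb{E}_{a\sim\pi}[\exp Q^\pi]\ \ge\ \mathbb{E}_{a\sim\pi}[\exp Q^\pi]$ (Cauchy--Schwarz, which is your covariance inequality applied to $e^{Q}$ against itself rather than against a different backup). In other words, the monotone quantity is the success probability $p_\pi(\OO_{1:T}=1)=\mathbb{E}_\pi[e^{R(\xi)}]$, not $\mathbb{E}_\pi[R(\xi)]$; in stochastic dynamics these need not improve together (your own ``hard part'' is precisely where they come apart, as the counterexample shows). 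So you should either restate what you are proving in terms of the soft value/success probability and run the paper's induction, or restrict the ordinary-return claim to deterministic dynamics, where your tilting argument already settles it.
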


The fact that the policies monotonically improve gives a reason to suspect that the limiting policy (which exists by a compactness argument) will be an optimal policy for the reward $r$. Indeed, we can show the following.

\begin{proposition}[{\citep[Theorem 3.6]{douglas2024limitations}}]\label{prop:limitations-thm}
    There exists some policy $\pi^\GG$ that the sequence $\pi^\GG_i$ converges to. That is, we have: 
    \[
        \lim_{i \to \infty} \KL(\pi^\GG_i||\pi^\GG) = 0
    \]
    Moreover, if the prior $p(a|s)$ has full support over all actions $a$ in all states $s$, then $\pi^\GG$ is optimal.
\end{proposition}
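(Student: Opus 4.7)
My plan proceeds in two parts: first, establish the existence of the limit $\pi^\GG$ via compactness and a Lyapunov-style argument; second, identify the limit as optimal under the full-support assumption.

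For convergence, the key observation is the closed form
\[
\GG(p)(a|s)\ \propto\ p(a|s)\exp Q^p(s,a),
\]
which follows by applying Bayes' rule to~\Cref{def:goal-conditioning} and using the characterisation of~\Cref{prop:charqv}. This makes $\GG$ a continuous self-map on the compact space of kernels $\S \to \Delta(\A)$. By the Strong Return Improvement Lemma (\Cref{prop:strong-improvement}), $J(\pi^\GG_i)$ is monotone non-decreasing and bounded above, hence convergent; compactness then yields cluster points, which by continuity of $\GG$ are all fixed points realising the limiting return. To upgrade cluster-point convergence to convergence of the whole sequence, I would exploit the Lyapunov identity obtained by direct calculation from the exponential form of $\GG$:
\[
\KL\!\bigl(\pi^\star\,\|\,\pi^\GG_{k+1}\bigr) - \KL\!\bigl(\pi^\star\,\|\,\pi^\GG_{k}\bigr) = V^{\pi^\GG_k}(s) - \mathbb{E}_{a \sim \pi^\star}\bigl[Q^{\pi^\GG_k}(s,a)\bigr],
\]
for any candidate fixed point $\pi^\star$ and any state $s$. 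Averaging over the occupancy measure of $\pi^\star$ (using \Cref{prop:alternative-incoherence} together with a soft performance-difference identity) and telescoping yields $\KL(\pi^\star\|\pi^\GG_k)\to 0$, and hence convergence of $(\pi^\GG_i)$ to a unique fixed point.

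For optimality under full support, I note that the explicit form of $\GG$ preserves the support of $p(\cdot|s)$ whenever the reward is finite, so each iterate $\pi^\GG_k$ inherits full support from the prior. At the fixed point, the equation $\pi(a|s) \propto \pi(a|s)\exp Q^\pi(s,a)$ forces $Q^{\pi^\GG}(s,a) = V^{\pi^\GG}(s)$ on the support of $\pi^\GG(\cdot|s)$. A contradiction argument then concludes the proof: if $\pi^\GG$ were not optimal, there would be a state $s$ reached with positive probability under $\pi^\GG$ and an action $a'$ with $Q^{\pi^\GG}(s,a') > V^{\pi^\GG}(s)$; but since $a'$ has positive prior probability, the multiplicative factor $\exp(Q-V)>1$ in the iteration would amplify its weight at each step, forcing $a'$ into the support of the limit and hence $Q^{\pi^\GG}(s,a') = V^{\pi^\GG}(s)$, a contradiction.

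The main obstacle is establishing and averaging the Lyapunov identity: because the soft $Q$-function, the occupancy measure, and the policy all evolve simultaneously along the sequence, one must carefully separate the per-state KL increment (driven by the multiplicative update) from the change in $V$ (controlled by a soft performance-difference identity). A secondary subtlety is the direction of the KL appearing in the statement---$\KL(\pi^\GG_i\|\pi^\GG)$ requires the support of $\pi^\GG$ to contain that of each iterate, which must be reconciled with the fact that the limit may concentrate on optimal-action support; the natural resolution is to first establish convergence in the reverse KL via the Lyapunov identity above, then invoke Pinsker's inequality combined with the full-support assumption to obtain the stated direction.
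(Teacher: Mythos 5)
Your route is genuinely different from the paper's: the paper does not prove this proposition directly at all, but obtains it as a corollary of the equivalence in \Cref{theorem:equivalence-thrice} (re-training equals posterior-folding, and, for deterministic dynamics, temperature annealing $\alpha(k)\to\infty$, whose limit is handled as in \Cref{lemma:optimal-maxent}), deferring the direct argument to the cited prior work. Your direct fixed-point/Lyapunov plan is a legitimate alternative in spirit, and your ingredients are partly sound: the multiplicative form $\GG(p)(a|s)\propto p(a|s)\exp Q^{p}(s,a)$ is exactly the identity used in the proof of \Cref{prop:strong-improvement}, and the per-state increment identity
\[
\KL\bigl(\pi^\star(\cdot|s)\,\|\,\pi^\GG_{k+1}(\cdot|s)\bigr)-\KL\bigl(\pi^\star(\cdot|s)\,\|\,\pi^\GG_{k}(\cdot|s)\bigr)=V^{\pi^\GG_k}(s)-\mathbb{E}_{a\sim\pi^\star}\bigl[Q^{\pi^\GG_k}(s,a)\bigr]
\]
is correct, since $V^{\pi^\GG_k}$ is precisely the log-normaliser of the update.

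However, the step that carries the whole convergence claim has a genuine gap. Telescoping that identity does not ``yield $\KL(\pi^\star\|\pi^\GG_k)\to 0$.'' First, the increments are not obviously $\le 0$: a comparator $\pi^\star$ that is optimal with respect to its \emph{own} soft $Q$ need not satisfy $\mathbb{E}_{a\sim\pi^\star}[Q^{\pi^\GG_k}(s,a)]\ge V^{\pi^\GG_k}(s)$ for finite $k$, because $Q^{\pi^\GG_k}$ is evaluated under the current (suboptimal) continuation, so monotonicity of the Lyapunov function requires an argument you have not supplied. Second, even granting monotonicity, a bounded monotone sequence gives only that $\KL(\pi^\star\|\pi^\GG_k)$ converges and that the increments vanish; it converges to $\KL(\pi^\star\|\pi^\GG)$, which is zero only if $\pi^\star$ happens to equal the limit — but identifying the limit is exactly what you are trying to prove, so the argument is circular as stated (the actual limit is the prior renormalised on the argmax set, not an arbitrary optimal policy). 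Third, your repair of the KL direction does not work: Pinsker bounds total variation \emph{by} KL, not conversely, and the stated quantity $\KL(\pi^\GG_i\|\pi^\GG)$ is $+\infty$ whenever an iterate puts mass outside $\mathrm{supp}(\pi^\GG)$, so controlling the reverse divergence plus total variation cannot deliver the forward statement; this support issue needs to be confronted head-on (e.g.\ by showing the off-support mass decays geometrically and tracking the divergence directly), not routed through Pinsker. Your final optimality argument (geometric amplification of an action with $Q^{\pi^\GG}(s,a')>V^{\pi^\GG}(s)$ under a full-support prior) is essentially fine once convergence is in hand, but as it stands the core of the proposition — convergence of the whole sequence to a single $\pi^\GG$ in the stated divergence — is not established. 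The paper sidesteps all of this by invoking the equivalence with the annealed-temperature/folded-reward sequences, whose limiting behaviour is explicit.
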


This property is a consequence of the equivalence we develop in~\Cref{sec:equivalence}; a direct proof can be found in~\citep[Appendix A]{douglas2024limitations}.

Another formulation of the sequence of increasingly improving policies is given by increasing the temperature of the energy distribution of the optimality variable $\OO$.

\begin{definition}[RL with temperature]\label{def:temperature}
    Given a prior $p(a_t|s_t)$, a reward $r(s_t,a_t)$ and an \emph{(inverse) temperature} function $\alpha: \mathbb{N} \to \mathbb{R}_+$, we define a sequence of policies:
    \[
    \pi_{\alpha(k)}(a_t|s_t) = p(a_t|s_t, \OO^{\alpha(k)}_{t:T} = 1) 
    \]
    where:
    \[
    p(\OO^{\alpha(k)}_t|s_t,a_t) = \exp\left({\alpha(k)\cdot{r(s_t,a_t)}}\right)
    \]
\end{definition}

We note that the process of increasing temperature in this way behaves differently from what taking $\delta \to 0$ in we explored in the last section's ~\Cref{def:boltzmann-incoherence}. There, we took a fixed $Q$ function defined by a particular policy $\pi$, and applied softmax. Here, the policy itself is changing, so we see progressively more precise $Q$ functions of a form $Q^{\pi_{\alpha(k)}}$, as (inverse) temperature $\alpha(k)$ increases.

Finally, we talk about folding the posterior into the reward. \citet{levine_reinforcement_2018} discusses a trick for folding \emph{the prior} into the reward. If soft $Q$ and $V$ functions are computed as in~\Cref{def:softvq}, one can disregard a prior over actions $p(a_t|s_t)$ by modifying the reward function $r(s_t, a_t)$ (specifying the Bernoulli distribution of $\mathcal{O}_t$) to be:
\[
\hat{r}(s_t, a_t) = r(s_t, a_t) + \log p(a_t|s_t)
\]
This defines an equivalent probabilistic model in which the prior $p(a_t|s_t)$ is uniform over actions, simplifying the calculations. Our point of interest lies in the fact that the coherent definition of the soft $V$ function uses \emph{the posterior} instead of prior:
\[
V^\pi(s_t) = \log \mathbb{E}_{a_t \sim \pi}[\exp Q^\pi(a_t, s_t)]
\]
as we discussed in the introduction, pointing to the difference between Questions (1) and (2).

Because the expectation is over the policy $\pi(a_t|s_t) = p(a_t|s_t, \mathcal{O}_{t:T})$ and \emph{not} the prior $p(a_t|s_t)$, it is no longer that simple to fold it into the reward. Since the reward (and therefore, the distributions of $\mathcal{O}_t$'s) changed, it has a downstream effect of changing the posterior. Thus, iterative process is needed.

\begin{definition}[Folded sequence]\label{def:folded-orig}
    For a prior $p(a_t|s_t)$ and a reward $r(s_t,a_t)$ we define a sequence of policies $\pi^\FF_k$ and rewards $r_k$ recursively, to be
    \[
        r_0 = r \qquad \pi^\FF_0 = p
    \]
    \[
     r_{k + 1}(s_t,a_t) = r_0(s_t,a_t) + \log p(a_t|s_t, \OO^{(k)}_{t:T} = 1)
    \]
    \[
     \pi^\FF_{k+1}(a_t|s_t) = p(a_t|s_t, \OO^{(k)}_{t:T} = 1)
    \]
    \[
    p(\OO^{(k)}_{t} = 1|a_t,s_t) = \exp\left(r_k(a_t, s_t)\right)
    \]
\end{definition}

It turns out that this iterative process is an equivalent way of formulating the re-training the model on its own output. We can also fold the reward recursively into the previous reward:
\begin{definition}[Folded sequence, cumulative]\label{def:folded}
    For a prior $p(a_t|s_t)$ and a reward $r(s_t,a_t)$ we define a sequence of policies $\pi^\HH_k$, the optimality variables $\mathcal{O}$ as in~\Cref{def:folded-orig}, with the only change being $r_{k+1}$:
    \[
     r_{k + 1}(s_t,a_t) = r_k(s_t,a_t) + \log p(a_t|s_t, \OO^{(k)}_{t:T} = 1)
    \]
\end{definition}
This version is, however, only equivalent to the other modes of retraining under the assumption of deterministic transition dynamics $\tau$, as we prove in the next~\Cref{sec:equivalence}.

\subsection{Equivalence and corollaries}\label{sec:equivalence}

After introducing the three ways of constructing a sequence of policies: fine-tuning policies on their own actions, folding the posterior into the reward, and conditioning on the reward with an increased temperature, we are now able to connect those dynamics. In case of deterministic dynamics, all three coincide, giving the same policy trajectories. In case of stochastic dynamics, it is only true for the first two.

\begin{theorem}[Optimisation pressure, thrice]\label{theorem:equivalence-thrice}
    For all priors $p(a|s)$, all non-positive reward functions $r(s_t,a_t)$ and deterministic transition dynamics $\tau(s_{t+1}|s_t, a_t)$, for all $k \in \mathbb{N}_+$ and for $\alpha(k) = 2^k$, we have:
    \[
    \pi_{\alpha(k)} = \pi^\GG_{2^k} = \pi^\FF_{2^k} = \pi^\HH_k
    \]
    Additionally, for stochastic transition dynamics we have:
    \[
    \pi^\FF_k = \pi^\mathcal{G}_{k}
    \]
\end{theorem}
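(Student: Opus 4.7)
The plan is to reduce all four re-training procedures to a common statement about the induced trajectory distributions. The engine is the earlier proposition that in deterministic dynamics $p(\xi|\OO_{1:T}=1)=\hat p(\xi)$, so conditioning any policy $\pi$ on optimality simply re-weights its induced trajectory distribution by $\exp R(\xi)$, where $R(\xi)=\sum_t r(s_t,a_t)$.

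I proceed in three steps. First, I characterise $\pi_{\alpha}$ and $\pi^\GG_k$ directly: from the definition of $\pi_\alpha$, $p_{\pi_\alpha}(\xi)\propto p(\xi)\exp(\alpha R(\xi))$; and by induction using the re-weighting, $p_{\pi^\GG_{k+1}}(\xi)\propto p_{\pi^\GG_k}(\xi)\exp R(\xi)$, so $p_{\pi^\GG_k}(\xi)\propto p(\xi)\exp(k R(\xi))$, which at $k=2^n$ matches $\pi_{2^n}$. Second, I identify $\pi^\FF_k$ with $\pi^\GG_k$ via the fold-into-reward trick of~\citet{levine_reinforcement_2018}: conditioning the base prior on an optimality variable with reward $r+\log q$ coincides with conditioning prior $q$ on the original reward $r$. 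Unrolling the non-cumulative folded recursion then shows $\pi^\FF_{k+1}$ is the one-step goal-conditioning of $\pi^\FF_k$ under $r$, which is exactly $\pi^\GG_{k+1}$. This is also the identification that carries over to stochastic dynamics, because it ultimately reduces to the single-step soft-$Q$ identity $Q^{p,\,r+\log\pi}(a,s)=\log\pi(a|s)+Q^{\pi,\,r}(a,s)$ (modulo state-functions absorbed in normalisation), giving $p(a|s)\exp Q^{p,\,r+\log\pi}(a,s)\,\propto\,\pi(a|s)\exp Q^{\pi,\,r}(a,s)$, i.e.\ the $\pi^\GG$ update rule. Third, for the cumulative $\pi^\HH_k$ an induction shows that the accumulated reward $r^\HH_k$ equals $2^k r$, up to functions of state alone that disappear after normalisation: each step adds $\log\pi^\HH_{k+1}$ to $r^\HH_k$, and the deterministic telescoping identity $\sum_t(Q^\pi(a_t,s_t)-V^\pi(s_t))=R(\xi)-V^\pi(s_0)$ converts that added log-policy term into another full copy of the current effective reward, doubling the exponent at every step.

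The hardest part is this telescoping step, and it is precisely what prevents $\pi^\HH$ from joining the equivalence in the stochastic case: once $Q^\pi(a,s)=r(s,a)+\log\mathbb{E}_{s'}\exp V^\pi(s')$ no longer splits additively along a trajectory, $\log\pi^\HH_{k+1}$ fails to collapse into a multiple of $R(\xi)$ and the doubling breaks down, which is why the statement is restricted to deterministic $\tau$. Careful bookkeeping of which terms depend only on $s_t$ and therefore cancel after dividing by the state-marginal to extract the policy is the recurring piece of technical care throughout.
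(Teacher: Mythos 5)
Your proposal is correct in substance and its three-way decomposition mirrors the paper's, but two of the three legs are argued by a genuinely different route. The paper proves (i) $\pi^\HH_k=\pi_{2^k}$ and (iii) $\pi^\FF_k=\pi_{\alpha(k)=k}$ by per-time-step backwards induction on the folded rewards (claiming $r_k\equiv 2^k r_0$, resp.\ $r_k\equiv k\,r_0$, up to constants), and (ii) $\pi^\FF_k=\pi^\GG_k$ by exactly the single-step identity you state, namely that conditioning the (uniform) base prior on reward $r_0+\log\pi^\GG_k$ reproduces $\pi^\GG_k(a|s)\exp Q^{\pi^\GG_k}(s,a)$ up to a per-step constant --- so your Step~2 coincides with the paper's stochastic-safe lemma. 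Your Step~1, by contrast, replaces the paper's reward-folding induction with the trajectory-level reweighting $p_{\GG(\pi)}(\xi)\propto p_\pi(\xi)e^{R(\xi)}$ (valid by the $p(\xi|\OO_{1:T})=\hat p(\xi)$ proposition, deterministic case only), giving $p_{\pi^\GG_k}(\xi)\propto p(\xi)e^{kR(\xi)}$ directly; and your Step~3 handles the cumulative doubling by the telescoping $\sum_t(Q-V)=R(\xi)-V(s_{\mathrm{start}})$, which is arguably cleaner than the paper's induction and in fact repairs a small imprecision there: the paper's inductive hypothesis $r_k=2^k r_0$ is only true modulo state-dependent potential terms, which is precisely what your telescoping absorbs. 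Two points need tightening. First, in Step~2 the phrase ``modulo state-functions absorbed in normalisation'' is too cavalier for the stochastic claim: a discrepancy in $Q$ at time $t+1$ that genuinely depends on $s_{t+1}$ cannot be pulled out of $\log\mathbb{E}_{s_{t+1}}[\cdot]$ and would not be absorbed; the argument works because (with the prior folded in, hence uniform) the discrepancy is the state-independent constant $-(T-t)\log|A|$ per step, and you should say so explicitly, since ``functions of state cancel'' is exactly the reasoning that fails in stochastic dynamics for the other two legs. Second, equality of induced trajectory distributions only pins the policies down at states visited with positive probability (relevant because rewards may be $-\infty$ and priors need not have full support); to conclude equality at every state, either run your trajectory argument in the sub-MDP rooted at each state--time pair (all the definitions are independent of the initial distribution) or revert to the paper's per-state induction. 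Neither issue changes the architecture of your proof, and your indexing vagueness is harmless given that the paper's own statement has an off-by-one between $\pi^\HH_k$ and the exponent $2^k$ versus $2^{k-1}$.
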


\begin{example}\label{example:counter}
    Transition dynamics such that $\pi_{\alpha(k)} \neq \pi^\GG_k$ are simple to construct. For example, take a three-state, two action Markov decision process with initial state $\emptyset$ and a uniform prior policy, with transition dynamics depicted on the diagram below:
\[\begin{tikzcd}
    {s_1} & \emptyset & {s_2}
    \arrow["{\frac{3}{4}}"', color={rgb,255:red,0;green,108;blue,238}, curve={height=12pt}, dashed, from=1-2, to=1-1]
    \arrow["{\frac{1}{2}}", curve={height=-12pt}, from=1-2, to=1-1]
    \arrow["{\frac{1}{4}}", color={rgb,255:red,0;green,108;blue,238}, curve={height=-12pt}, dashed, from=1-2, to=1-3]
    \arrow["{\frac{1}{2}}"', curve={height=12pt}, from=1-2, to=1-3]
\end{tikzcd}\]
where red dotted lines correspond to action $a_1$, and solid black lines to action $a_2$. We also assume that:
\[
r(s_1) = \log \frac{1}{3} \qquad r(s_2) = \log \frac{2}{3}
\]
Computing $\pi^\mathcal{G}_1(\cdot|\emptyset)$ and $\pi_{\alpha(2)}(\cdot|\emptyset)$ we get:
\[
\frac{25}{61} = \pi^\mathcal{G}_1(a_1|\emptyset) \neq \pi_{\alpha(2)}(a_1|\emptyset) =\frac{7}{17} 
\]
\end{example}
From this equivalence, and additional properties of each of the ways of constructing the sequence, we can easily derive results that would require laborious proofs otherwise. For example, we might easily show that the policies converge to an optimal policy, and explicitly calculate the rate of convergence, using the causal entropy regularisation characterisation of $\pi_\alpha(k)$.

\begin{corollary}[Return improvement rate]\label{corr:rate-of-convergence}
    In case of deterministic dynamics $\tau$, given a sequence of policies $\pi^\GG_i$, for $k \in \mathbb{N}_+$ have that:
    \[
    J(\pi^\GG_{k}) - J(\pi^\GG_{k-1}) = \frac{1}{{k}} \cdot \frac{J'(\pi^\GG_{k}) \cdot \hat{\HH}'(\pi^\GG_{k})}{J''(\pi^\GG_k) + \frac{1}{{k}} \hat{\HH}''(\pi^\GG_{k})} + O\left(\frac{1}{{k}^2}\right)
    \]
    where $\hat{\HH}(\pi)$ denotes the causal entropy of policy $\pi$, and the derivatives are taken with respect to the temperature $\alpha$.
\end{corollary}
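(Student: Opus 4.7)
The plan is to use Theorem (the three-way equivalence) to identify the sequence $\pi^\GG_k$ with a one-parameter family of Boltzmann-coherent policies $\pi_\alpha$ indexed by an inverse-temperature parameter $\alpha$, and then derive the stated expression as a Taylor expansion of the smooth map $\alpha\mapsto J(\pi_\alpha)$. Under deterministic dynamics, $\pi_\alpha$ is the unique maximiser of the MaxEnt objective $L(\pi,\alpha) = J(\pi) + \tfrac{1}{\alpha}\hat{\HH}(\pi)$; this characterisation follows from the RL-with-KL-penalties proposition applied backward through the soft Bellman recursion, since the causal entropy decomposes along trajectories. The equivalence then says that iterating $\GG$ is equivalent to cranking $\alpha$ upwards along a specific schedule, so $\pi^\GG_k = \pi_{\alpha(k)}$.

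First I would differentiate the first-order optimality condition $J'(\pi_\alpha) + \tfrac{1}{\alpha}\hat{\HH}'(\pi_\alpha) = 0$ (written in the tangent space of the simplex) implicitly with respect to $\alpha$. This gives
\[
\frac{d\pi_\alpha}{d\alpha} \;=\; \Bigl[J''(\pi_\alpha) + \tfrac{1}{\alpha}\hat{\HH}''(\pi_\alpha)\Bigr]^{-1} \cdot \frac{\hat{\HH}'(\pi_\alpha)}{\alpha^2},
\]
with the bracket invertible because $\hat{\HH}$ is strictly concave on the relative interior of the simplex. Applying the chain rule then yields
\[
\frac{dJ(\pi_\alpha)}{d\alpha} \;=\; J'(\pi_\alpha)\cdot\frac{d\pi_\alpha}{d\alpha} \;=\; \frac{1}{\alpha^2}\cdot\frac{J'(\pi_\alpha)\cdot\hat{\HH}'(\pi_\alpha)}{J''(\pi_\alpha)+\tfrac{1}{\alpha}\hat{\HH}''(\pi_\alpha)},
\]
which is precisely the structural form appearing in the corollary. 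A first-order Taylor expansion of $\alpha\mapsto J(\pi_\alpha)$ around $\alpha=k$ then gives $J(\pi_k) - J(\pi_{k-1}) = (dJ/d\alpha)\big|_k + \tfrac{1}{2}(d^2 J/d\alpha^2)\big|_\xi$ for some $\xi\in(k-1,k)$, and a second implicit differentiation shows $d^2 J/d\alpha^2$ is one order smaller in $1/\alpha$, producing the remainder $O(1/k^2)$.

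The main obstacle will be to align the discrete iteration index $k$ with the continuous temperature schedule: Theorem is stated at the dyadic values $\alpha(k)=2^k$, whereas the corollary indexes consecutive integers $k-1,k$, so I need to show that the identification $\pi^\GG_k = \pi_{\alpha(k)}$ extends to a smooth curve on which Taylor analysis applies, and determine the correct prefactor in the stated formula. A subsidiary technical point is that the Hessian inversion must be carried out on the tangent space of the simplex -- either by explicit projection onto the affine hyperplane $\sum_a \pi(a|s)=1$, or by introducing a Lagrange multiplier for normalisation -- and one must verify that the reduced Hessian $J'' + \tfrac{1}{\alpha}\hat{\HH}''$ is uniformly negative definite for all $\alpha$ above some threshold, so that the $O(1/k^2)$ bound is indeed uniform rather than merely pointwise as $\pi_\alpha$ approaches the boundary of the simplex.
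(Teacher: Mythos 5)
Your analytic core is the same as the paper's: the paper also characterises $\pi_\alpha$ as the maximiser of $J(\pi)+\frac{1}{\alpha}\hat{\HH}(\pi)$, and its auxiliary lemma on the derivative of $x\mapsto f\bigl(\arg\max_t(f(t)+x\,g(t))\bigr)$ is precisely your implicit differentiation of the stationarity condition, only written in the variable $x=1/\alpha$ (the paper works with a scalar envelope function $u(x)=J(\pi^*(x))$, which quietly absorbs the simplex/tangent-space issues you raise); the final step is the same first-order Taylor expansion, between $x=1/k$ and $x=1/(k-1)$, i.e.\ with step $\eta=\frac{1}{k(k-1)}$.

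The genuine gap is the one you name but do not close: anchoring the discrete iterates to a temperature schedule at consecutive integers. The paper does not interpolate the dyadic schedule $\alpha(k)=2^k$ of \Cref{theorem:equivalence-thrice} into a smooth curve; it instead uses the finer appendix statements, $\pi^\GG_k=\pi^\FF_k$ for arbitrary dynamics (\Cref{lemma:equivalence-folding-retraining}) together with $\pi^\FF_k=\pi_{\alpha(k)}$ for the \emph{linear} schedule $\alpha(k)=k$ under deterministic dynamics (\Cref{lemma:equivalence-folding-orig}), so that $\pi^\GG_k=\pi_k$ for every $k$ and the Taylor expansion can be taken between consecutive integer temperatures. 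Without this identification your expansion has nothing to attach to, and with the dyadic schedule (step $2^{k-1}$ in $\alpha$) no rate of the stated $1/k$ form would emerge. Your worry about the prefactor is legitimate but likewise left unresolved: your own formula $\frac{dJ(\pi_\alpha)}{d\alpha}=\frac{1}{\alpha^2}\cdot\frac{J'\hat{\HH}'}{J''+\frac{1}{\alpha}\hat{\HH}''}$ gives, over a unit step from $k-1$ to $k$, a $\frac{1}{k(k-1)}$-type leading factor, which must still be reconciled with the $\frac{1}{k}$ appearing in the statement (the paper's proof takes $\eta=\frac{1}{k(k-1)}$ and then records the $\frac{1}{k}$ form); since the corollary is essentially calculus once the identification $\pi^\GG_k=\pi_{k}$ is in hand, that identification plus this bookkeeping is the actual content, and it is missing from your proposal.
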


Incoherence disappears at the limit of retraining.

\begin{corollary}\label{corr:limit}
    We have $\lim_{(\delta, i) \to (0,\infty)} \kappa^\delta(\pi^\GG_i) = 0$.
\end{corollary}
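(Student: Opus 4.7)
The plan is to combine two convergence results proved earlier. By Proposition~\ref{prop:limitations-thm}, the iterate sequence converges $\pi^\GG_i \to \pi^\GG$, and, under the full-support assumption on the prior, the limit $\pi^\GG$ is optimal: at every reachable $s$, its support lies in the argmax set $A^*(s) := \arg\max_a Q^{\pi^\GG}(s,a)$. By Lemma~\ref{lemma:optimal-maxent} applied to $\pi^\GG$, the Boltzmann-rational policy $\pi^{\pi^\GG,\delta}$ converges in distribution, as $\delta \to 0$, to the uniform distribution over $A^*(s)$.

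The crux is to match $\pi^\GG$ with this uniform distribution, so that the Kullback--Leibler discrepancy between them collapses. I would do this through Theorem~\ref{theorem:equivalence-thrice}: in the deterministic case, $\pi^\GG_{2^k}$ equals the low-temperature policy $\pi_{\alpha(k)}$ with inverse temperature $\alpha(k) = 2^k$, which is a softmax applied to the original prior's soft $Q$-function. As $k \to \infty$, this family concentrates exponentially on $A^*(s)$, and a direct argmax-limit argument in the spirit of Lemma~\ref{lemma:optimal-maxent} identifies the limit as uniform on $A^*(s)$. Thus $\pi^\GG$ coincides with the uniform policy on $A^*(s)$.

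With the identification in hand, $\kappa^\delta(\pi^\GG) = \KL(\pi^\GG \| \pi^{\pi^\GG,\delta}) \to 0$ as $\delta \to 0$, by continuity of KL once the second argument has full support. For the joint double limit, I would couple $\delta_i = 2^{-i}$ and exploit joint continuity of $(\pi,\delta) \mapsto \kappa^\delta(\pi)$ on the open region $\delta > 0$; Proposition~\ref{prop:alternative-incoherence} then lifts the per-state limits to the trajectory-level statement, since the per-state KL is taken against the occupancy measure induced by $\pi^\GG_i$, which itself converges.

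The main obstacle is the matching step: optimality alone does not pin down the distribution of $\pi^\GG$ across $A^*(s)$, and the softmax limit $\lim_\delta \pi^{\pi^\GG,\delta}$ is determined by $Q^{\pi^\GG}$ together with how ties are broken. The equivalence theorem is what guarantees that the two agree, and its use---along with some care about the joint rate of $\delta \to 0$ and $i \to \infty$---is essential. In the stochastic case, where only the weaker equivalence $\pi^\FF_k = \pi^\GG_k$ is available, a parallel argument would go via the folded-reward recursion, with the additional complication that $Q^{\pi^\GG_i}$ is no longer simply a tempered version of the prior's soft Q-function.
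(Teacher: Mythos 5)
Your two building blocks (convergence of $\pi^\GG_i$ from \Cref{prop:limitations-thm}, and the $\delta\to 0$ softmax limit from \Cref{lemma:optimal-maxent}) are the right ingredients, but both of the steps you flag as the crux contain genuine gaps. First, the matching step: \Cref{theorem:equivalence-thrice} identifies $\pi^\GG_{2^k}$ with $\pi_{\alpha(k)}$, which is the \emph{prior conditioned} on the $\alpha(k)$-scaled optimality variable, not a softmax of the prior's soft $Q$; its $k\to\infty$ limit concentrates on the argmax set but, on that set, is weighted by the prior (and by the prior probability of optimal continuations), whereas \Cref{lemma:optimal-maxent} gives the \emph{uniform} distribution over $A^*(s)$. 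These coincide only when the maximizer is unique at every reachable state (or the prior is uniform on ties). Indeed, at a state where two actions have identical success probabilities and the prior is, say, $(2/3,1/3)$, goal-conditioning preserves these odds for every $i$, while $\pi^\delta$ assigns $(1/2,1/2)$ for every $\delta$, so $\kappa_\delta(\pi^\GG_i)$ is bounded away from $0$; your claim that the equivalence theorem "guarantees that the two agree" is therefore false in general, and the statement needs a uniqueness/genericity hypothesis that you should make explicit.

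Second, and more importantly, the double limit cannot be obtained from "joint continuity on $\delta>0$": the limit point lies on the boundary $\delta=0$, where the KL is discontinuous. For every fixed $i$, a full-support prior leaves $\pi^\GG_i$ with strictly positive (geometrically decaying) mass on sub-optimal actions of positive success probability, so $\lim_{\delta\to 0}\kappa_\delta(\pi^\GG_i)=+\infty$; hence the unrestricted joint limit is not $0$, and the result can only hold along couplings in which $\delta$ shrinks slowly relative to the decay of that off-argmax mass. Your proposed coupling $\delta_i=2^{-i}$ fails already in \Cref{example:two-cards}: there $\pi^\GG_i(\searrow\mid\emptyset)\asymp(3/4)^i$ and the $Q$-gap at $\emptyset$ is $\log\tfrac{4}{3}$, so the dominant KL term is of order $(3/4)^i\cdot 2^i\log\tfrac43=(3/2)^i\log\tfrac43\to\infty$. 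Note that the coupling actually suggested by \Cref{theorem:equivalence-thrice} is the opposite of yours: iterate $i=2^k$ corresponds to inverse temperature $2^k$, i.e.\ $\delta_i\approx 1/i$, and with $\delta_i\sim 1/i$ the offending term behaves like $(3/4)^i\,i\to 0$. A correct proof should therefore fix such a coupling (or state the limit in an iterated/coupled sense), use the equivalence plus the geometric concentration of $\pi_{\alpha(k)}$ to bound the off-argmax mass $\epsilon_i$, and verify $\epsilon_i\cdot\Delta/\delta_i\to 0$ state by state via \Cref{prop:alternative-incoherence}, rather than appeal to continuity of $(\pi,\delta)\mapsto\kappa_\delta(\pi)$.
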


The significance of the correspondence should be possible to also be understood through the lenses of training-inference trade-off. Decreasing the temperature in the information-bounded RL can be implemented as top-$k$ rejection-sampling based methods (such as Speculative Rejection~\citep{sun_fast_2024}). On the other hand, we might want to pay the cost of inference once, during training, by retraining the model on its own outputs. Prior work has empirically verified that aligning language models with RLHF produces a similar behavior to best-of-$k$ rejection sampling with respect to the reward model~\citep{bai_training_2022, stiennon_learning_2022}. Our results confirm those results, and allow for precise trade-off estimation in the multi-step environment, with no difference on the margin with each phase of re-training equivalent to linearly increasing $k$.

\section{INCOHERENCE AND EFFECTIVE HORIZON}\label{sec:effective}

\begin{figure*}
\begin{subfigure}{0.5\textwidth}
    \includegraphics[width=1\linewidth]{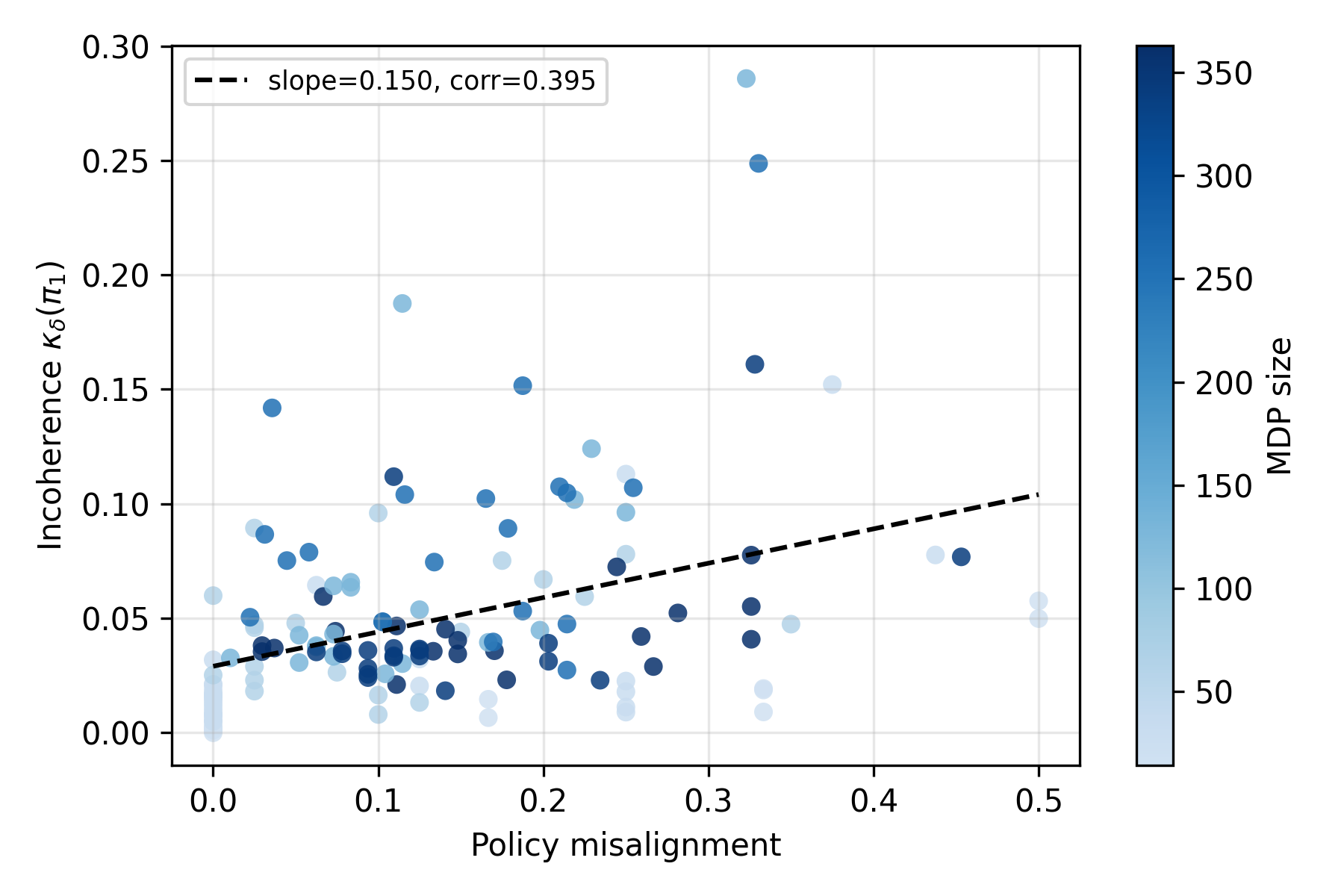}
\caption{\phantom{a}}
\label{fig:misalignment-incoherence-temp1}
\end{subfigure}
\begin{subfigure}{0.5\textwidth}
    \includegraphics[width=1\linewidth]{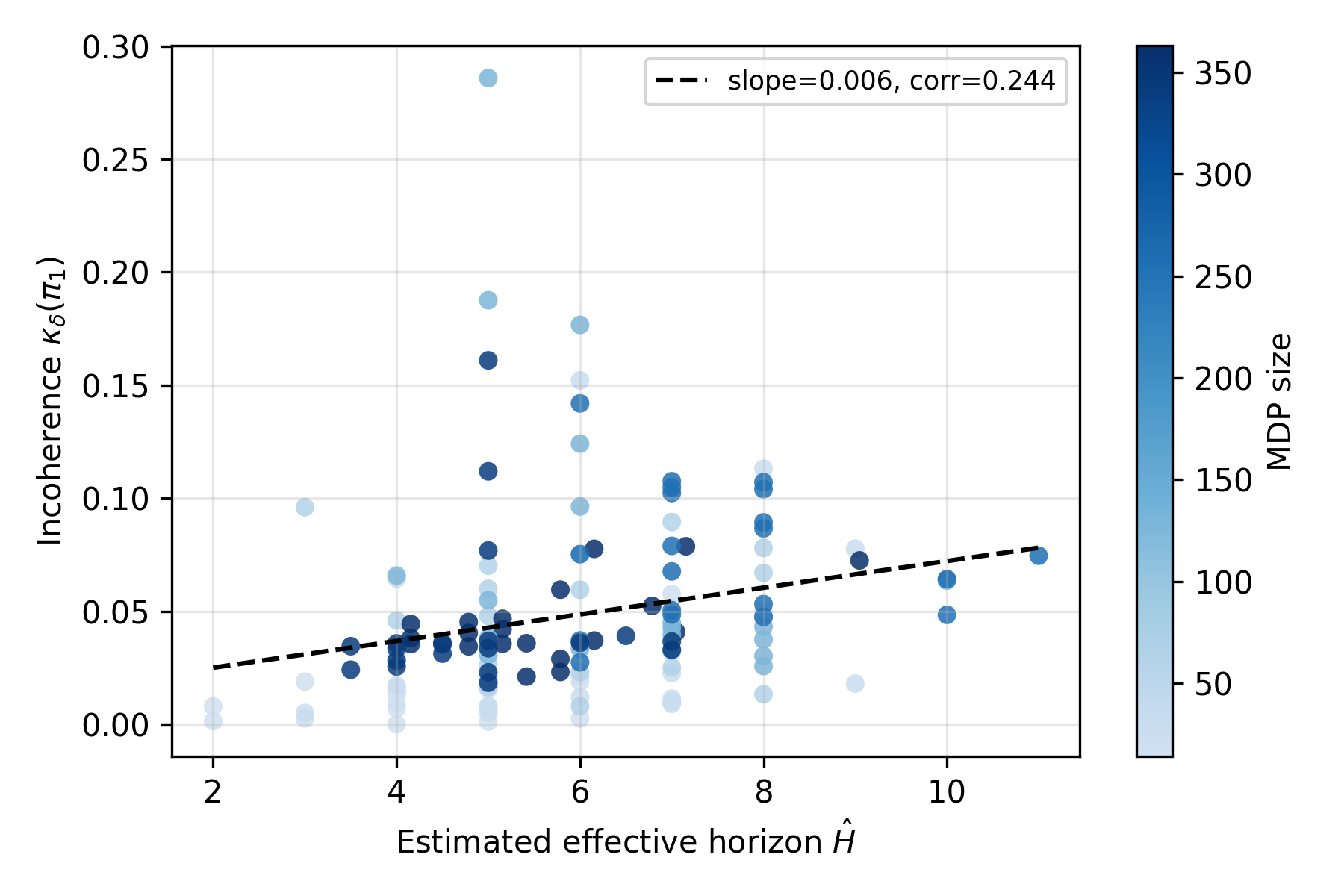}
\caption{\phantom{a}}
\label{fig:eh-incoherence-pi1}
\end{subfigure}

\caption{Empirically observed significant positive correlations between incoherence and (a) misalignment score, and (b) estimated effective horizon, in a dataset of 140 randomly chosen small MDPs, between $4$ and $122$ states.}
\label{fig:image2}
\end{figure*}

We have started the discussion in~\Cref{sec:introduction} by pointing out the difference between Questions (1) and (2), with~\Cref{example:two-cards} showing that the autoregressive  policy derived by conditioning on the reward answers (1). Concurrently with the development of this work, \citet{laidlaw_theoretical_2023,laidlaw_bridging_2024} performed extensive experiments to find a reason for which deep RL works in some environments, but not others. They empirically arrived, and later formalised, the notion \emph{effective horizon} (EH): a quantitative measure of the hardness of a given environment. We quote:

{\leftskip=0.5cm \rightskip=0.5cm
\emph{When actions with the highest Q-values under the random policy also have the highest Q-values under the optimal policy (i.e. when it is optimal to be greedy on the random policy's Q function), deep RL tends to succeed; when they don't, deep RL tends to fail.}
\par}

We note the striking similarity to the setup studied here: we rephrase their finding to state that \emph{deep RL tends to succeed when there is no difference between answers to Questions (1) and (2); it tends to fail otherwise.} This suggests that deep RL algorithms approximate naive autoregressive control-as-inference policies, which provides a theoretical justification of the effective horizon result. EH and incoherence are nevertheless different on a technical level, making a direct theoretical comparison difficult. Instead, we opt to compare them experimentally.

\paragraph{Misalignment and incoherence} In the first approach, we compare incoherence to the \emph{policy misalignment}, which is one operationalisation of the difference between Questions \emph{(1)} and \emph{(2)} aligned with the EH perspective above. Given an MDP, we define the misalignment $m \in [0, 1]$ to be the proportion of actions being different between $\pi_{\text{rand}}$ and $\pi^*$, where $\pi_{\text{rand}}$ selects the greedy action under the random policy’s Q‑function $Q^{\pi_{\mathrm{rand}}}$, averaged over the random policy occupancy measure, motivated by~\cite{laidlaw_bridging_2024}, and $\pi^*$ is a deterministic optimal policy (in our experiments, both are a.s. unique). For a set of 140 randomly chosen small MDP, we computed $m$ and Boltzmann incoherence $\kappa_\delta(\pi^G)$ for the goal‑conditioned uniform prior $\pi^\mathcal{G} = \mathcal{G}(\pi_\text{rand})$, for temperature $\delta = 1$ (for a discussion on other temperatures, please see~\Cref{app:experiments}). We present this in~\Cref{fig:misalignment-incoherence-temp1}. The correlation between $m$ and $\kappa$ is then $0.395$ (95\% CI $[0.245, 0.526]$), and isn't sensitive to outliers. In other words, environments where being greedy on $Q^{\pi_{\mathrm{rand}}}$ is suboptimal (large misalignment) also tend to be those where naive goal‑conditioning produces larger incoherence. This is qualitatively aligned with the hypothesis put forward above.

\paragraph{Effective horizon vs incoherence}

In the second approach, for a more direct comparison, we implemented a simple Monte Carlo-based effective horizon estimator $\hat{H}$ for small deterministic-transition MDPs (for details on $\hat{H}$, please see~\Cref{app:eff-horizon}). We reused the same dataset of 140 small random MDPs from the first experiment, and we measured $\kappa_\delta(\pi)$ for the naively goal‑conditioned policy $\pi^\mathcal{G} = \mathcal{G}(\pi^\text{unif})$ on a uniform prior. We obtained the correlation between $\hat{H}$ and $\kappa_\delta(\pi^\mathcal{G}) = 0.244$ (95\% CI $[0.082, 0.394]$), plotted on~\Cref{fig:eh-incoherence-pi1}. We again checked that the result is not sensitive to outliers. In other words, environments with large effective horizon (where being greedy on $Q^{\pi_{\mathrm{rand}}}$ is misleading) often coincide with those where the conditioned policy is highly incoherent, which is again consistent with the picture put forward at the start of this section.

\section{DISCUSSION}\label{sec:discussion}

\paragraph{Incoherence as a diagnostic metric}
Incoherence, as a metric of the mismatch between the policy's own Q function and its behavior, 
could be used as an additional diagnostic for monitoring RL training progress. To test this, we ran an experiment to measure the correlation between return and incoherence. We note that a priori, a policy can be coherent but not optimal and vice versa, but we expect the increase of coherence to lead to better performance on average. Reusing the same dataset of 140 MDPs, starting from uniformly random policy, we applied the operator $\GG$ 4 times. Each iteration, we computed the expected return $J$ and incoherence $\kappa_\delta$, setting $\delta=1$. The average correlation between $J$ and negative-$\kappa$ for each experiment (then averaged across experiments) was $0.978$. On all trajectories flattened, the average correlation remained high at $0.863$, providing some evidence for its validity.
\paragraph{Limitations and future work}
We have only conducted experiments in small tabular MDP environments, and relied on the already established experimental results from the prior work. Synthetic data experiments with Decision Transformers or other sequence models are a natural next step; by training a model on a game such as chess and controlling the training dataset, it should be possible to verify to what extent do models such as DTs suffer from incoherence, as answers to Questions \emph{(1)} and \emph{(2)} can be computed exactly. On the theory side, extending our treatment to the case of infinite time horizon requires handling discount factor which is known to be more involved~\citep{haarnoja_reinforcement_2017, levine_reinforcement_2018}. We did not propose any methods of regularising models towards coherence during training, without the expensive re-training procedure, for example by including an auxiliary term in the loss, which would be an interesting direction for future work. Finally, our understanding of stochastic environments is currently limited to negative results, but it seems that in practice, RL still behaves well (both from our limited experiments and in~\citet{laidlaw_theoretical_2023}), which points towards understanding this as an important theoretical direction.

\subsubsection*{Acknowledgements}
The authors would like to thank Victoria Krakovna, Cassidy Laidlaw, Joar Skalse, Sam Staton, participants of the SPAR program, and anonymous reviewers for their feedback and comments on various drafts of this paper. The first author was supported by a grant from Open Philanthropy. Some of this work was performed as part of the SERI MATS program.

\bibliography{references}

@misc{shao_deepseekmath_2024,
	title = {{DeepSeekMath}: {Pushing} the {Limits} of {Mathematical} {Reasoning} in {Open} {Language} {Models}},
	shorttitle = {{DeepSeekMath}},
	url = {http://arxiv.org/abs/2402.03300},
	doi = {10.48550/arXiv.2402.03300},
	abstract = {Mathematical reasoning poses a significant challenge for language models due to its complex and structured nature. In this paper, we introduce DeepSeekMath 7B, which continues pre-training DeepSeek-Coder-Base-v1.5 7B with 120B math-related tokens sourced from Common Crawl, together with natural language and code data. DeepSeekMath 7B has achieved an impressive score of 51.7\% on the competition-level MATH benchmark without relying on external toolkits and voting techniques, approaching the performance level of Gemini-Ultra and GPT-4. Self-consistency over 64 samples from DeepSeekMath 7B achieves 60.9\% on MATH. The mathematical reasoning capability of DeepSeekMath is attributed to two key factors: First, we harness the significant potential of publicly available web data through a meticulously engineered data selection pipeline. Second, we introduce Group Relative Policy Optimization (GRPO), a variant of Proximal Policy Optimization (PPO), that enhances mathematical reasoning abilities while concurrently optimizing the memory usage of PPO.},
	urldate = {2025-10-03},
	publisher = {arXiv},
	author = {Shao, Zhihong and Wang, Peiyi and Zhu, Qihao and Xu, Runxin and Song, Junxiao and Bi, Xiao and Zhang, Haowei and Zhang, Mingchuan and Li, Y. K. and Wu, Y. and Guo, Daya},
	month = apr,
	year = {2024},
	note = {arXiv:2402.03300 [cs]},
	keywords = {Computer Science - Artificial Intelligence, Computer Science - Computation and Language, Computer Science - Machine Learning},
}

@article{radford_language_2019,
	title = {Language {Models} are {Unsupervised} {Multitask} {Learners}},
	abstract = {Natural language processing tasks, such as question answering, machine translation, reading comprehension, and summarization, are typically approached with supervised learning on taskspeciﬁc datasets. We demonstrate that language models begin to learn these tasks without any explicit supervision when trained on a new dataset of millions of webpages called WebText. When conditioned on a document plus questions, the answers generated by the language model reach 55 F1 on the CoQA dataset - matching or exceeding the performance of 3 out of 4 baseline systems without using the 127,000+ training examples. The capacity of the language model is essential to the success of zero-shot task transfer and increasing it improves performance in a log-linear fashion across tasks. Our largest model, GPT-2, is a 1.5B parameter Transformer that achieves state of the art results on 7 out of 8 tested language modeling datasets in a zero-shot setting but still underﬁts WebText. Samples from the model reﬂect these improvements and contain coherent paragraphs of text. These ﬁndings suggest a promising path towards building language processing systems which learn to perform tasks from their naturally occurring demonstrations.},
	language = {en},
	journal = {OpenAI blog},
	author = {Radford, Alec and Wu, Jeffrey and Child, Rewon and Luan, David and Amodei, Dario and Sutskever, Ilya},
	year = {2019},
}

@misc{openai_dota_2019,
	title = {Dota 2 with {Large} {Scale} {Deep} {Reinforcement} {Learning}},
	url = {http://arxiv.org/abs/1912.06680},
	doi = {10.48550/arXiv.1912.06680},
	abstract = {On April 13th, 2019, OpenAI Five became the first AI system to defeat the world champions at an esports game. The game of Dota 2 presents novel challenges for AI systems such as long time horizons, imperfect information, and complex, continuous state-action spaces, all challenges which will become increasingly central to more capable AI systems. OpenAI Five leveraged existing reinforcement learning techniques, scaled to learn from batches of approximately 2 million frames every 2 seconds. We developed a distributed training system and tools for continual training which allowed us to train OpenAI Five for 10 months. By defeating the Dota 2 world champion (Team OG), OpenAI Five demonstrates that self-play reinforcement learning can achieve superhuman performance on a difficult task.},
	urldate = {2024-01-22},
	publisher = {arXiv},
	author = {OpenAI and Berner, Christopher and Brockman, Greg and Chan, Brooke and Cheung, Vicki and Debiak, Przemysław and Dennison, Christy and Farhi, David and Fischer, Quirin and Hashme, Shariq and Hesse, Chris and Józefowicz, Rafal and Gray, Scott and Olsson, Catherine and Pachocki, Jakub and Petrov, Michael and Pinto, Henrique P. d O. and Raiman, Jonathan and Salimans, Tim and Schlatter, Jeremy and Schneider, Jonas and Sidor, Szymon and Sutskever, Ilya and Tang, Jie and Wolski, Filip and Zhang, Susan},
	month = dec,
	year = {2019},
	note = {arXiv:1912.06680 [cs, stat]},
	keywords = {Computer Science - Machine Learning, Statistics - Machine Learning},
}

@misc{sun_fast_2024,
	title = {Fast {Best}-of-{N} {Decoding} via {Speculative} {Rejection}},
	url = {http://arxiv.org/abs/2410.20290},
	doi = {10.48550/arXiv.2410.20290},
	abstract = {The safe and effective deployment of Large Language Models (LLMs) involves a critical step called alignment, which ensures that the model's responses are in accordance with human preferences. Prevalent alignment techniques, such as DPO, PPO and their variants, align LLMs by changing the pre-trained model weights during a phase called post-training. While predominant, these post-training methods add substantial complexity before LLMs can be deployed. Inference-time alignment methods avoid the complex post-training step and instead bias the generation towards responses that are aligned with human preferences. The best-known inference-time alignment method, called Best-of-N, is as effective as the state-of-the-art post-training procedures. Unfortunately, Best-of-N requires vastly more resources at inference time than standard decoding strategies, which makes it computationally not viable. In this work, we introduce Speculative Rejection, a computationally-viable inference-time alignment algorithm. It generates high-scoring responses according to a given reward model, like Best-of-N does, while being between 16 to 32 times more computationally efficient.},
	urldate = {2025-10-02},
	publisher = {arXiv},
	author = {Sun, Hanshi and Haider, Momin and Zhang, Ruiqi and Yang, Huitao and Qiu, Jiahao and Yin, Ming and Wang, Mengdi and Bartlett, Peter and Zanette, Andrea},
	month = oct,
	year = {2024},
	note = {arXiv:2410.20290 [cs]},
	keywords = {Computer Science - Computation and Language},
}

@misc{srivastava_training_2021,
	title = {Training {Agents} using {Upside}-{Down} {Reinforcement} {Learning}},
	url = {http://arxiv.org/abs/1912.02877},
	doi = {10.48550/arXiv.1912.02877},
	abstract = {We develop Upside-Down Reinforcement Learning (UDRL), a method for learning to act using only supervised learning techniques. Unlike traditional algorithms, UDRL does not use reward prediction or search for an optimal policy. Instead, it trains agents to follow commands such as "obtain so much total reward in so much time." Many of its general principles are outlined in a companion report; the goal of this paper is to develop a practical learning algorithm and show that this conceptually simple perspective on agent training can produce a range of rewarding behaviors for multiple episodic environments. Experiments show that on some tasks UDRL's performance can be surprisingly competitive with, and even exceed that of some traditional baseline algorithms developed over decades of research. Based on these results, we suggest that alternative approaches to expected reward maximization have an important role to play in training useful autonomous agents.},
	urldate = {2025-10-02},
	publisher = {arXiv},
	author = {Srivastava, Rupesh Kumar and Shyam, Pranav and Mutz, Filipe and Jaśkowski, Wojciech and Schmidhuber, Jürgen},
	month = sep,
	year = {2021},
	note = {arXiv:1912.02877 [cs]},
	keywords = {Computer Science - Artificial Intelligence, Computer Science - Machine Learning, Computer Science - Robotics},
}

@misc{strupl_upside-down_2022,
	title = {Upside-{Down} {Reinforcement} {Learning} {Can} {Diverge} in {Stochastic} {Environments} {With} {Episodic} {Resets}},
	url = {http://arxiv.org/abs/2205.06595},
	doi = {10.48550/arXiv.2205.06595},
	abstract = {Upside-Down Reinforcement Learning (UDRL) is an approach for solving RL problems that does not require value functions and uses only supervised learning, where the targets for given inputs in a dataset do not change over time. Ghosh et al. proved that Goal-Conditional Supervised Learning (GCSL) -- which can be viewed as a simplified version of UDRL -- optimizes a lower bound on goal-reaching performance. This raises expectations that such algorithms may enjoy guaranteed convergence to the optimal policy in arbitrary environments, similar to certain well-known traditional RL algorithms. Here we show that for a specific episodic UDRL algorithm (eUDRL, including GCSL), this is not the case, and give the causes of this limitation. To do so, we first introduce a helpful rewrite of eUDRL as a recursive policy update. This formulation helps to disprove its convergence to the optimal policy for a wide class of stochastic environments. Finally, we provide a concrete example of a very simple environment where eUDRL diverges. Since the primary aim of this paper is to present a negative result, and the best counterexamples are the simplest ones, we restrict all discussions to finite (discrete) environments, ignoring issues of function approximation and limited sample size.},
	urldate = {2025-10-02},
	publisher = {arXiv},
	author = {Štrupl, Miroslav and Faccio, Francesco and Ashley, Dylan R. and Schmidhuber, Jürgen and Srivastava, Rupesh Kumar},
	month = may,
	year = {2022},
	note = {arXiv:2205.06595 [stat]},
	keywords = {Computer Science - Artificial Intelligence, Computer Science - Machine Learning, Statistics - Machine Learning},
}

@misc{yang_large_2024,
	title = {Large {Language} {Models} as {Optimizers}},
	url = {http://arxiv.org/abs/2309.03409},
	doi = {10.48550/arXiv.2309.03409},
	abstract = {Optimization is ubiquitous. While derivative-based algorithms have been powerful tools for various problems, the absence of gradient imposes challenges on many real-world applications. In this work, we propose Optimization by PROmpting (OPRO), a simple and effective approach to leverage large language models (LLMs) as optimizers, where the optimization task is described in natural language. In each optimization step, the LLM generates new solutions from the prompt that contains previously generated solutions with their values, then the new solutions are evaluated and added to the prompt for the next optimization step. We first showcase OPRO on linear regression and traveling salesman problems, then move on to our main application in prompt optimization, where the goal is to find instructions that maximize the task accuracy. With a variety of LLMs, we demonstrate that the best prompts optimized by OPRO outperform human-designed prompts by up to 8\% on GSM8K, and by up to 50\% on Big-Bench Hard tasks. Code at https://github.com/google-deepmind/opro.},
	urldate = {2025-10-02},
	publisher = {arXiv},
	author = {Yang, Chengrun and Wang, Xuezhi and Lu, Yifeng and Liu, Hanxiao and Le, Quoc V. and Zhou, Denny and Chen, Xinyun},
	month = apr,
	year = {2024},
	note = {arXiv:2309.03409 [cs]},
	keywords = {Computer Science - Artificial Intelligence, Computer Science - Computation and Language, Computer Science - Machine Learning},
}

@misc{paster_you_2022,
	title = {You {Can}'t {Count} on {Luck}: {Why} {Decision} {Transformers} and {RvS} {Fail} in {Stochastic} {Environments}},
	shorttitle = {You {Can}'t {Count} on {Luck}},
	url = {http://arxiv.org/abs/2205.15967},
	doi = {10.48550/arXiv.2205.15967},
	abstract = {Recently, methods such as Decision Transformer that reduce reinforcement learning to a prediction task and solve it via supervised learning (RvS) have become popular due to their simplicity, robustness to hyperparameters, and strong overall performance on offline RL tasks. However, simply conditioning a probabilistic model on a desired return and taking the predicted action can fail dramatically in stochastic environments since trajectories that result in a return may have only achieved that return due to luck. In this work, we describe the limitations of RvS approaches in stochastic environments and propose a solution. Rather than simply conditioning on the return of a single trajectory as is standard practice, our proposed method, ESPER, learns to cluster trajectories and conditions on average cluster returns, which are independent from environment stochasticity. Doing so allows ESPER to achieve strong alignment between target return and expected performance in real environments. We demonstrate this in several challenging stochastic offline-RL tasks including the challenging puzzle game 2048, and Connect Four playing against a stochastic opponent. In all tested domains, ESPER achieves significantly better alignment between the target return and achieved return than simply conditioning on returns. ESPER also achieves higher maximum performance than even the value-based baselines.},
	urldate = {2025-10-02},
	publisher = {arXiv},
	author = {Paster, Keiran and McIlraith, Sheila and Ba, Jimmy},
	month = nov,
	year = {2022},
	note = {arXiv:2205.15967 [cs]},
	keywords = {Computer Science - Artificial Intelligence, Computer Science - Machine Learning},
}

@inproceedings{peters_relative_2010,
	address = {Atlanta, Georgia},
	series = {{AAAI}'10},
	title = {Relative entropy policy search},
	abstract = {Policy search is a successful approach to reinforcement learning. However, policy improvements often result in the loss of information. Hence, it has been marred by premature convergence and implausible solutions. As first suggested in the context of covariant policy gradients (Bagnell and Schneider 2003), many of these problems may be addressed by constraining the information loss. In this paper, we continue this path of reasoning and suggest the Relative Entropy Policy Search (REPS) method. The resulting method differs significantly from previous policy gradient approaches and yields an exact update step. It works well on typical reinforcement learning benchmark problems.},
	urldate = {2025-10-02},
	booktitle = {Proceedings of the {Twenty}-{Fourth} {AAAI} {Conference} on {Artificial} {Intelligence}},
	publisher = {AAAI Press},
	author = {Peters, Jan and Mülling, Katharina and Altün, Yasemin},
	month = jul,
	year = {2010},
	pages = {1607--1612},
}

@misc{mccoy_embers_2023,
	title = {Embers of {Autoregression}: {Understanding} {Large} {Language} {Models} {Through} the {Problem} {They} are {Trained} to {Solve}},
	shorttitle = {Embers of {Autoregression}},
	url = {http://arxiv.org/abs/2309.13638},
	doi = {10.48550/arXiv.2309.13638},
	abstract = {The widespread adoption of large language models (LLMs) makes it important to recognize their strengths and limitations. We argue that in order to develop a holistic understanding of these systems we need to consider the problem that they were trained to solve: next-word prediction over Internet text. By recognizing the pressures that this task exerts we can make predictions about the strategies that LLMs will adopt, allowing us to reason about when they will succeed or fail. This approach - which we call the teleological approach - leads us to identify three factors that we hypothesize will influence LLM accuracy: the probability of the task to be performed, the probability of the target output, and the probability of the provided input. We predict that LLMs will achieve higher accuracy when these probabilities are high than when they are low - even in deterministic settings where probability should not matter. To test our predictions, we evaluate two LLMs (GPT-3.5 and GPT-4) on eleven tasks, and we find robust evidence that LLMs are influenced by probability in the ways that we have hypothesized. In many cases, the experiments reveal surprising failure modes. For instance, GPT-4's accuracy at decoding a simple cipher is 51\% when the output is a high-probability word sequence but only 13\% when it is low-probability. These results show that AI practitioners should be careful about using LLMs in low-probability situations. More broadly, we conclude that we should not evaluate LLMs as if they are humans but should instead treat them as a distinct type of system - one that has been shaped by its own particular set of pressures.},
	urldate = {2025-10-02},
	publisher = {arXiv},
	author = {McCoy, R. Thomas and Yao, Shunyu and Friedman, Dan and Hardy, Matthew and Griffiths, Thomas L.},
	month = sep,
	year = {2023},
	note = {arXiv:2309.13638 [cs]},
	keywords = {Computer Science - Artificial Intelligence, Computer Science - Computation and Language},
}

@misc{deng_residual_2020,
	title = {Residual {Energy}-{Based} {Models} for {Text} {Generation}},
	url = {http://arxiv.org/abs/2004.11714},
	doi = {10.48550/arXiv.2004.11714},
	abstract = {Text generation is ubiquitous in many NLP tasks, from summarization, to dialogue and machine translation. The dominant parametric approach is based on locally normalized models which predict one word at a time. While these work remarkably well, they are plagued by exposure bias due to the greedy nature of the generation process. In this work, we investigate un-normalized energy-based models (EBMs) which operate not at the token but at the sequence level. In order to make training tractable, we first work in the residual of a pretrained locally normalized language model and second we train using noise contrastive estimation. Furthermore, since the EBM works at the sequence level, we can leverage pretrained bi-directional contextual representations, such as BERT and RoBERTa. Our experiments on two large language modeling datasets show that residual EBMs yield lower perplexity compared to locally normalized baselines. Moreover, generation via importance sampling is very efficient and of higher quality than the baseline models according to human evaluation.},
	urldate = {2025-10-02},
	publisher = {arXiv},
	author = {Deng, Yuntian and Bakhtin, Anton and Ott, Myle and Szlam, Arthur and Ranzato, Marc'Aurelio},
	month = apr,
	year = {2020},
	note = {arXiv:2004.11714 [cs]},
	keywords = {Computer Science - Computation and Language, Computer Science - Machine Learning},
}

@misc{hao_training_2024,
	title = {Training {Large} {Language} {Models} to {Reason} in a {Continuous} {Latent} {Space}},
	url = {http://arxiv.org/abs/2412.06769},
	doi = {10.48550/arXiv.2412.06769},
	abstract = {Large language models (LLMs) are restricted to reason in the "language space", where they typically express the reasoning process with a chain-of-thought (CoT) to solve a complex reasoning problem. However, we argue that language space may not always be optimal for reasoning. For example, most word tokens are primarily for textual coherence and not essential for reasoning, while some critical tokens require complex planning and pose huge challenges to LLMs. To explore the potential of LLM reasoning in an unrestricted latent space instead of using natural language, we introduce a new paradigm Coconut (Chain of Continuous Thought). We utilize the last hidden state of the LLM as a representation of the reasoning state (termed "continuous thought"). Rather than decoding this into a word token, we feed it back to the LLM as the subsequent input embedding directly in the continuous space. Experiments show that Coconut can effectively augment the LLM on several reasoning tasks. This novel latent reasoning paradigm leads to emergent advanced reasoning patterns: the continuous thought can encode multiple alternative next reasoning steps, allowing the model to perform a breadth-first search (BFS) to solve the problem, rather than prematurely committing to a single deterministic path like CoT. Coconut outperforms CoT in certain logical reasoning tasks that require substantial backtracking during planning, with fewer thinking tokens during inference. These findings demonstrate the promise of latent reasoning and offer valuable insights for future research.},
	urldate = {2025-10-02},
	publisher = {arXiv},
	author = {Hao, Shibo and Sukhbaatar, Sainbayar and Su, DiJia and Li, Xian and Hu, Zhiting and Weston, Jason and Tian, Yuandong},
	month = dec,
	year = {2024},
	note = {arXiv:2412.06769 [cs]},
	keywords = {Computer Science - Computation and Language},
}

@misc{abdolmaleki_maximum_2018,
	title = {Maximum a {Posteriori} {Policy} {Optimisation}},
	url = {http://arxiv.org/abs/1806.06920},
	doi = {10.48550/arXiv.1806.06920},
	abstract = {We introduce a new algorithm for reinforcement learning called Maximum aposteriori Policy Optimisation (MPO) based on coordinate ascent on a relative entropy objective. We show that several existing methods can directly be related to our derivation. We develop two off-policy algorithms and demonstrate that they are competitive with the state-of-the-art in deep reinforcement learning. In particular, for continuous control, our method outperforms existing methods with respect to sample efficiency, premature convergence and robustness to hyperparameter settings while achieving similar or better final performance.},
	urldate = {2025-10-02},
	publisher = {arXiv},
	author = {Abdolmaleki, Abbas and Springenberg, Jost Tobias and Tassa, Yuval and Munos, Remi and Heess, Nicolas and Riedmiller, Martin},
	month = jun,
	year = {2018},
	note = {arXiv:1806.06920 [cs]},
	keywords = {Computer Science - Artificial Intelligence, Computer Science - Information Theory, Computer Science - Machine Learning, Computer Science - Robotics, Mathematics - Information Theory, Statistics - Machine Learning},
}

@misc{belousov_kl_2017,
	title = {{KL} between trajectory distributions vs {KL} between policies · {Boris} {Belousov}},
	url = {http://www.boris-belousov.net/2017/04/16/KL-between-trajectories-and-policies/},
	urldate = {2025-09-06},
	author = {Belousov, Boris},
	year = {2017},
}

@inproceedings{douglas2024limitations,
	title = {Limitations of agents simulated by predictive models},
	url = {https://openreview.net/forum?id=4gcoAjKaLf},
	booktitle = {{ICLR} 2024 workshop on large language model ({LLM}) agents},
	author = {Douglas, Raymond and Karwowski, Jacek and Bae, Chan and Draguns, Andis and Krakovna, Victoria},
	year = {2024},
}

@misc{yang_auto-gpt_2023,
	title = {Auto-{GPT} for {Online} {Decision} {Making}: {Benchmarks} and {Additional} {Opinions}},
	shorttitle = {Auto-{GPT} for {Online} {Decision} {Making}},
	url = {http://arxiv.org/abs/2306.02224},
	doi = {10.48550/arXiv.2306.02224},
	abstract = {Auto-GPT is an autonomous agent that leverages recent advancements in adapting Large Language Models (LLMs) for decision-making tasks. While there has been a growing interest in Auto-GPT stypled agents, questions remain regarding the effectiveness and flexibility of Auto-GPT in solving real-world decision-making tasks. Its limited capability for real-world engagement and the absence of benchmarks contribute to these uncertainties. In this paper, we present a comprehensive benchmark study of Auto-GPT styled agents in decision-making tasks that simulate real-world scenarios. Our aim is to gain deeper insights into this problem and understand the adaptability of GPT-based agents. We compare the performance of popular LLMs such as GPT-4, GPT-3.5, Claude, and Vicuna in Auto-GPT styled decision-making tasks. Furthermore, we introduce the Additional Opinions algorithm, an easy and effective method that incorporates supervised/imitation-based learners into the Auto-GPT scheme. This approach enables lightweight supervised learning without requiring fine-tuning of the foundational LLMs. We demonstrate through careful baseline comparisons and ablation studies that the Additional Opinions algorithm significantly enhances performance in online decision-making benchmarks, including WebShop and ALFWorld.},
	urldate = {2024-05-22},
	publisher = {arXiv},
	author = {Yang, Hui and Yue, Sifu and He, Yunzhong},
	month = jun,
	year = {2023},
	note = {arXiv:2306.02224 [cs]},
	keywords = {Computer Science - Artificial Intelligence, Computer Science - Machine Learning},
}

@misc{significant_gravitas_autogpt_2024,
	title = {{AutoGPT}},
	copyright = {MIT},
	url = {https://github.com/Significant-Gravitas/AutoGPT},
	abstract = {AutoGPT is the vision of accessible AI for everyone, to use and to build on. Our mission is to provide the tools, so that you can focus on what matters.},
	urldate = {2024-05-22},
	author = {{Significant Gravitas}},
	month = may,
	year = {2024},
	note = {original-date: 2023-03-16T09:21:07Z},
}

@article{browne_survey_2012,
	title = {A {Survey} of {Monte} {Carlo} {Tree} {Search} {Methods}},
	volume = {4},
	issn = {1943-0698},
	url = {https://ieeexplore.ieee.org/document/6145622},
	doi = {10.1109/TCIAIG.2012.2186810},
	abstract = {Monte Carlo tree search (MCTS) is a recently proposed search method that combines the precision of tree search with the generality of random sampling. It has received considerable interest due to its spectacular success in the difficult problem of computer Go, but has also proved beneficial in a range of other domains. This paper is a survey of the literature to date, intended to provide a snapshot of the state of the art after the first five years of MCTS research. We outline the core algorithm's derivation, impart some structure on the many variations and enhancements that have been proposed, and summarize the results from the key game and nongame domains to which MCTS methods have been applied. A number of open research questions indicate that the field is ripe for future work.},
	number = {1},
	urldate = {2024-05-22},
	journal = {IEEE Transactions on Computational Intelligence and AI in Games},
	author = {Browne, Cameron B. and Powley, Edward and Whitehouse, Daniel and Lucas, Simon M. and Cowling, Peter I. and Rohlfshagen, Philipp and Tavener, Stephen and Perez, Diego and Samothrakis, Spyridon and Colton, Simon},
	month = mar,
	year = {2012},
	note = {Conference Name: IEEE Transactions on Computational Intelligence and AI in Games},
	keywords = {Artificial intelligence, Artificial intelligence (AI), Computers, Decision theory, Game theory, Games, Markov processes, Monte Carlo methods, Monte Carlo tree search (MCTS), bandit-based methods, computer Go, game search, upper confidence bounds (UCB), upper confidence bounds for trees (UCT)},
	pages = {1--43},
}

@inproceedings{ziebart_maximum_2008,
	address = {Chicago, Illinois},
	series = {{AAAI}'08},
	title = {Maximum entropy inverse reinforcement learning},
	isbn = {978-1-57735-368-3},
	abstract = {Recent research has shown the benefit of framing problems of imitation learning as solutions to Markov Decision Problems. This approach reduces learning to the problem of recovering a utility function that makes the behavior induced by a near-optimal policy closely mimic demonstrated behavior. In this work, we develop a probabilistic approach based on the principle of maximum entropy. Our approach provides a well-defined, globally normalized distribution over decision sequences, while providing the same performance guarantees as existing methods. We develop our technique in the context of modeling real-world navigation and driving behaviors where collected data is inherently noisy and imperfect. Our probabilistic approach enables modeling of route preferences as well as a powerful new approach to inferring destinations and routes based on partial trajectories.},
	urldate = {2024-05-22},
	booktitle = {Proceedings of the 23rd national conference on {Artificial} intelligence - {Volume} 3},
	publisher = {AAAI Press},
	author = {Ziebart, Brian D. and Maas, Andrew and Bagnell, J. Andrew and Dey, Anind K.},
	month = jul,
	year = {2008},
	pages = {1433--1438},
}

@misc{anthony_thinking_2017,
	title = {Thinking {Fast} and {Slow} with {Deep} {Learning} and {Tree} {Search}},
	url = {http://arxiv.org/abs/1705.08439},
	doi = {10.48550/arXiv.1705.08439},
	abstract = {Sequential decision making problems, such as structured prediction, robotic control, and game playing, require a combination of planning policies and generalisation of those plans. In this paper, we present Expert Iteration (ExIt), a novel reinforcement learning algorithm which decomposes the problem into separate planning and generalisation tasks. Planning new policies is performed by tree search, while a deep neural network generalises those plans. Subsequently, tree search is improved by using the neural network policy to guide search, increasing the strength of new plans. In contrast, standard deep Reinforcement Learning algorithms rely on a neural network not only to generalise plans, but to discover them too. We show that ExIt outperforms REINFORCE for training a neural network to play the board game Hex, and our final tree search agent, trained tabula rasa, defeats MoHex 1.0, the most recent Olympiad Champion player to be publicly released.},
	urldate = {2024-05-22},
	publisher = {arXiv},
	author = {Anthony, Thomas and Tian, Zheng and Barber, David},
	month = dec,
	year = {2017},
	note = {arXiv:1705.08439 [cs]},
	keywords = {Computer Science - Artificial Intelligence},
}

@misc{brandfonbrener_when_2023,
	title = {When does return-conditioned supervised learning work for offline reinforcement learning?},
	url = {http://arxiv.org/abs/2206.01079},
	doi = {10.48550/arXiv.2206.01079},
	abstract = {Several recent works have proposed a class of algorithms for the offline reinforcement learning (RL) problem that we will refer to as return-conditioned supervised learning (RCSL). RCSL algorithms learn the distribution of actions conditioned on both the state and the return of the trajectory. Then they define a policy by conditioning on achieving high return. In this paper, we provide a rigorous study of the capabilities and limitations of RCSL, something which is crucially missing in previous work. We find that RCSL returns the optimal policy under a set of assumptions that are stronger than those needed for the more traditional dynamic programming-based algorithms. We provide specific examples of MDPs and datasets that illustrate the necessity of these assumptions and the limits of RCSL. Finally, we present empirical evidence that these limitations will also cause issues in practice by providing illustrative experiments in simple point-mass environments and on datasets from the D4RL benchmark.},
	urldate = {2024-05-22},
	publisher = {arXiv},
	author = {Brandfonbrener, David and Bietti, Alberto and Buckman, Jacob and Laroche, Romain and Bruna, Joan},
	month = jan,
	year = {2023},
	note = {arXiv:2206.01079 [cs]},
	keywords = {Computer Science - Machine Learning},
}

@article{schrittwieser_mastering_2020,
	title = {Mastering {Atari}, {Go}, {Chess} and {Shogi} by {Planning} with a {Learned} {Model}},
	volume = {588},
	issn = {0028-0836, 1476-4687},
	url = {http://arxiv.org/abs/1911.08265},
	doi = {10.1038/s41586-020-03051-4},
	abstract = {Constructing agents with planning capabilities has long been one of the main challenges in the pursuit of artificial intelligence. Tree-based planning methods have enjoyed huge success in challenging domains, such as chess and Go, where a perfect simulator is available. However, in real-world problems the dynamics governing the environment are often complex and unknown. In this work we present the MuZero algorithm which, by combining a tree-based search with a learned model, achieves superhuman performance in a range of challenging and visually complex domains, without any knowledge of their underlying dynamics. MuZero learns a model that, when applied iteratively, predicts the quantities most directly relevant to planning: the reward, the action-selection policy, and the value function. When evaluated on 57 different Atari games - the canonical video game environment for testing AI techniques, in which model-based planning approaches have historically struggled - our new algorithm achieved a new state of the art. When evaluated on Go, chess and shogi, without any knowledge of the game rules, MuZero matched the superhuman performance of the AlphaZero algorithm that was supplied with the game rules.},
	number = {7839},
	urldate = {2024-05-22},
	journal = {Nature},
	author = {Schrittwieser, Julian and Antonoglou, Ioannis and Hubert, Thomas and Simonyan, Karen and Sifre, Laurent and Schmitt, Simon and Guez, Arthur and Lockhart, Edward and Hassabis, Demis and Graepel, Thore and Lillicrap, Timothy and Silver, David},
	month = dec,
	year = {2020},
	note = {arXiv:1911.08265 [cs, stat]},
	keywords = {Computer Science - Machine Learning, Statistics - Machine Learning},
	pages = {604--609},
}

@inproceedings{laidlaw_theoretical_2023,
	title = {A {Theoretical} {Explanation} of {Deep} {RL} {Performance} in {Stochastic} {Environments}},
	url = {https://openreview.net/forum?id=5ES5Hdlbxw},
	urldate = {2024-05-21},
	booktitle = {The {Twelfth} {International} {Conference} on {Learning} {Representations}},
	author = {Laidlaw, Cassidy and Zhu, Banghua and Russell, Stuart and Dragan, Anca},
	year = {2023},
}

@misc{gleave_primer_2022,
	title = {A {Primer} on {Maximum} {Causal} {Entropy} {Inverse} {Reinforcement} {Learning}},
	url = {http://arxiv.org/abs/2203.11409},
	abstract = {Inverse Reinforcement Learning (IRL) algorithms infer a reward function that explains demonstrations provided by an expert acting in the environment. Maximum Causal Entropy (MCE) IRL is currently the most popular formulation of IRL, with numerous extensions. In this tutorial, we present a compressed derivation of MCE IRL and the key results from contemporary implementations of MCE IRL algorithms. We hope this will serve both as an introductory resource for those new to the field, and as a concise reference for those already familiar with these topics.},
	urldate = {2024-02-20},
	publisher = {arXiv},
	author = {Gleave, Adam and Toyer, Sam},
	month = mar,
	year = {2022},
	note = {arXiv:2203.11409 [cs]},
	keywords = {Computer Science - Artificial Intelligence, Computer Science - Machine Learning, I.2.6},
}

@article{silver_mastering_2017,
	title = {Mastering the game of {Go} without human knowledge},
	volume = {550},
	copyright = {2017 Macmillan Publishers Limited, part of Springer Nature. All rights reserved.},
	issn = {1476-4687},
	url = {https://www.nature.com/articles/nature24270},
	doi = {10.1038/nature24270},
	abstract = {A long-standing goal of artificial intelligence is an algorithm that learns, tabula rasa, superhuman proficiency in challenging domains. Recently, AlphaGo became the first program to defeat a world champion in the game of Go. The tree search in AlphaGo evaluated positions and selected moves using deep neural networks. These neural networks were trained by supervised learning from human expert moves, and by reinforcement learning from self-play. Here we introduce an algorithm based solely on reinforcement learning, without human data, guidance or domain knowledge beyond game rules. AlphaGo becomes its own teacher: a neural network is trained to predict AlphaGo’s own move selections and also the winner of AlphaGo’s games. This neural network improves the strength of the tree search, resulting in higher quality move selection and stronger self-play in the next iteration. Starting tabula rasa, our new program AlphaGo Zero achieved superhuman performance, winning 100–0 against the previously published, champion-defeating AlphaGo.},
	language = {en},
	number = {7676},
	urldate = {2024-02-02},
	journal = {Nature},
	author = {Silver, David and Schrittwieser, Julian and Simonyan, Karen and Antonoglou, Ioannis and Huang, Aja and Guez, Arthur and Hubert, Thomas and Baker, Lucas and Lai, Matthew and Bolton, Adrian and Chen, Yutian and Lillicrap, Timothy and Hui, Fan and Sifre, Laurent and van den Driessche, George and Graepel, Thore and Hassabis, Demis},
	month = oct,
	year = {2017},
	note = {Number: 7676
Publisher: Nature Publishing Group},
	keywords = {Computational science, Computer science, Reward},
	pages = {354--359},
}

@misc{laidlaw_bridging_2024,
	title = {Bridging {RL} {Theory} and {Practice} with the {Effective} {Horizon}},
	url = {http://arxiv.org/abs/2304.09853},
	abstract = {Deep reinforcement learning (RL) works impressively in some environments and fails catastrophically in others. Ideally, RL theory should be able to provide an understanding of why this is, i.e. bounds predictive of practical performance. Unfortunately, current theory does not quite have this ability. We compare standard deep RL algorithms to prior sample complexity bounds by introducing a new dataset, BRIDGE. It consists of 155 deterministic MDPs from common deep RL benchmarks, along with their corresponding tabular representations, which enables us to exactly compute instance-dependent bounds. We choose to focus on deterministic environments because they share many interesting properties of stochastic environments, but are easier to analyze. Using BRIDGE, we find that prior bounds do not correlate well with when deep RL succeeds vs. fails, but discover a surprising property that does. When actions with the highest Q-values under the random policy also have the highest Q-values under the optimal policy (i.e. when it is optimal to be greedy on the random policy's Q function), deep RL tends to succeed; when they don't, deep RL tends to fail. We generalize this property into a new complexity measure of an MDP that we call the effective horizon, which roughly corresponds to how many steps of lookahead search would be needed in that MDP in order to identify the next optimal action, when leaf nodes are evaluated with random rollouts. Using BRIDGE, we show that the effective horizon-based bounds are more closely reflective of the empirical performance of PPO and DQN than prior sample complexity bounds across four metrics. We also find that, unlike existing bounds, the effective horizon can predict the effects of using reward shaping or a pre-trained exploration policy. Our code and data are available at https://github.com/cassidylaidlaw/effective-horizon},
	urldate = {2024-01-29},
	publisher = {arXiv},
	author = {Laidlaw, Cassidy and Russell, Stuart and Dragan, Anca},
	month = jan,
	year = {2024},
	note = {arXiv:2304.09853 [cs, stat]},
	keywords = {Computer Science - Machine Learning, Statistics - Machine Learning},
}

@article{vinyals_grandmaster_2019,
	title = {Grandmaster level in {StarCraft} {II} using multi-agent reinforcement learning},
	volume = {575},
	copyright = {2019 The Author(s), under exclusive licence to Springer Nature Limited},
	issn = {1476-4687},
	url = {https://www.nature.com/articles/s41586-019-1724-z},
	doi = {10.1038/s41586-019-1724-z},
	abstract = {Many real-world applications require artificial agents to compete and coordinate with other agents in complex environments. As a stepping stone to this goal, the domain of StarCraft has emerged as an important challenge for artificial intelligence research, owing to its iconic and enduring status among the most difficult professional esports and its relevance to the real world in terms of its raw complexity and multi-agent challenges. Over the course of a decade and numerous competitions1–3, the strongest agents have simplified important aspects of the game, utilized superhuman capabilities, or employed hand-crafted sub-systems4. Despite these advantages, no previous agent has come close to matching the overall skill of top StarCraft players. We chose to address the challenge of StarCraft using general-purpose learning methods that are in principle applicable to other complex domains: a multi-agent reinforcement learning algorithm that uses data from both human and agent games within a diverse league of continually adapting strategies and counter-strategies, each represented by deep neural networks5,6. We evaluated our agent, AlphaStar, in the full game of StarCraft II, through a series of online games against human players. AlphaStar was rated at Grandmaster level for all three StarCraft races and above 99.8\% of officially ranked human players.},
	language = {en},
	number = {7782},
	urldate = {2024-01-22},
	journal = {Nature},
	author = {Vinyals, Oriol and Babuschkin, Igor and Czarnecki, Wojciech M. and Mathieu, Michaël and Dudzik, Andrew and Chung, Junyoung and Choi, David H. and Powell, Richard and Ewalds, Timo and Georgiev, Petko and Oh, Junhyuk and Horgan, Dan and Kroiss, Manuel and Danihelka, Ivo and Huang, Aja and Sifre, Laurent and Cai, Trevor and Agapiou, John P. and Jaderberg, Max and Vezhnevets, Alexander S. and Leblond, Rémi and Pohlen, Tobias and Dalibard, Valentin and Budden, David and Sulsky, Yury and Molloy, James and Paine, Tom L. and Gulcehre, Caglar and Wang, Ziyu and Pfaff, Tobias and Wu, Yuhuai and Ring, Roman and Yogatama, Dani and Wünsch, Dario and McKinney, Katrina and Smith, Oliver and Schaul, Tom and Lillicrap, Timothy and Kavukcuoglu, Koray and Hassabis, Demis and Apps, Chris and Silver, David},
	month = nov,
	year = {2019},
	note = {Number: 7782
Publisher: Nature Publishing Group},
	keywords = {Computer science, Statistics},
	pages = {350--354},
}

@misc{touvron_llama_2023,
	title = {{LLaMA}: {Open} and {Efficient} {Foundation} {Language} {Models}},
	shorttitle = {{LLaMA}},
	url = {http://arxiv.org/abs/2302.13971},
	abstract = {We introduce LLaMA, a collection of foundation language models ranging from 7B to 65B parameters. We train our models on trillions of tokens, and show that it is possible to train state-of-the-art models using publicly available datasets exclusively, without resorting to proprietary and inaccessible datasets. In particular, LLaMA-13B outperforms GPT-3 (175B) on most benchmarks, and LLaMA-65B is competitive with the best models, Chinchilla-70B and PaLM-540B. We release all our models to the research community.},
	urldate = {2024-01-22},
	publisher = {arXiv},
	author = {Touvron, Hugo and Lavril, Thibaut and Izacard, Gautier and Martinet, Xavier and Lachaux, Marie-Anne and Lacroix, Timothée and Rozière, Baptiste and Goyal, Naman and Hambro, Eric and Azhar, Faisal and Rodriguez, Aurelien and Joulin, Armand and Grave, Edouard and Lample, Guillaume},
	month = feb,
	year = {2023},
	note = {arXiv:2302.13971 [cs]},
	keywords = {Computer Science - Computation and Language},
}

@misc{haarnoja_soft_2018,
	title = {Soft {Actor}-{Critic}: {Off}-{Policy} {Maximum} {Entropy} {Deep} {Reinforcement} {Learning} with a {Stochastic} {Actor}},
	shorttitle = {Soft {Actor}-{Critic}},
	url = {http://arxiv.org/abs/1801.01290},
	abstract = {Model-free deep reinforcement learning (RL) algorithms have been demonstrated on a range of challenging decision making and control tasks. However, these methods typically suffer from two major challenges: very high sample complexity and brittle convergence properties, which necessitate meticulous hyperparameter tuning. Both of these challenges severely limit the applicability of such methods to complex, real-world domains. In this paper, we propose soft actor-critic, an off-policy actor-critic deep RL algorithm based on the maximum entropy reinforcement learning framework. In this framework, the actor aims to maximize expected reward while also maximizing entropy. That is, to succeed at the task while acting as randomly as possible. Prior deep RL methods based on this framework have been formulated as Q-learning methods. By combining off-policy updates with a stable stochastic actor-critic formulation, our method achieves state-of-the-art performance on a range of continuous control benchmark tasks, outperforming prior on-policy and off-policy methods. Furthermore, we demonstrate that, in contrast to other off-policy algorithms, our approach is very stable, achieving very similar performance across different random seeds.},
	urldate = {2024-01-22},
	publisher = {arXiv},
	author = {Haarnoja, Tuomas and Zhou, Aurick and Abbeel, Pieter and Levine, Sergey},
	month = aug,
	year = {2018},
	note = {arXiv:1801.01290 [cs, stat]},
	keywords = {Computer Science - Artificial Intelligence, Computer Science - Machine Learning, Statistics - Machine Learning},
}

@inproceedings{macleod_game_2005,
	address = {New York, NY, USA},
	series = {{ACE} '05},
	title = {Game design through self-play experiments},
	isbn = {978-1-59593-110-8},
	url = {https://doi.org/10.1145/1178477.1178572},
	doi = {10.1145/1178477.1178572},
	abstract = {The application of self-play experiments to computer games was pioneered by Thompson in 1982 with his chess machine BELLE. Since then the technique has been widely used in a variety of games to train artificial players employing a range of artificial neural network architectures. Of particular note is the TD-learning Backgammon program of Tesauro developed in 1995. When developing artificial game players that learn by experience, it is generally possible to accelerate the training process through self-play. Compared with training by humans, this confers the advantages of greater speed and a precise control of playing strength through parameter variation. In spite of these potential advantages, the use of self-play experiments is considered by many to be a treacherous road fraught with problems. The value of such experiments is unclear and the threshold of learning that can be achieved through self-play alone is unknown. There is the common-sense perception that only limited playing skill can be achieved through machine self-play, a notion that is challenged here. A new application that is immune from the problems associated with machine learning is the use of self-play experiments to test the integrity and fairness of games and modify the rules accordingly. We will show how the rules of a particular game, Perudo, can be analysed for fairness and how the excessive positive feedback that arises when forces become unbalanced can be curbed. We use the notion of fair in the same sense as in a soccer game - if a team loses a goal, neglecting psychological effects, the chance of losing a second goal is not significantly changed. It is recognised that the cumulative growth in advantage is part of many games and that it is inappropriate to alter the rules in these cases. However the rate at which advantages grow can be moderated by rule alterations. We will also consider the application of the technique to a range of traditional games. In chess, for example, White is considered to have an advantage over Black. The imbalance can be determined for different playing strengths and extrapolated. We will show that the principles can be extended to the more complex situations of computer games and propose that the development of unintelligent agents to explore game play is advantageous.},
	urldate = {2024-01-03},
	booktitle = {Proceedings of the 2005 {ACM} {SIGCHI} {International} {Conference} on {Advances} in computer entertainment technology},
	publisher = {Association for Computing Machinery},
	author = {Macleod, Alasdair},
	month = jun,
	year = {2005},
	keywords = {computer games, game design, perudo, reinforcement learning, self-play experiments},
	pages = {421--428},
}

@article{silver_general_2018,
	title = {A general reinforcement learning algorithm that masters chess, shogi, and {Go} through self-play},
	volume = {362},
	url = {https://www.science.org/doi/full/10.1126/science.aar6404},
	doi = {10.1126/science.aar6404},
	abstract = {The game of chess is the longest-studied domain in the history of artificial intelligence. The strongest programs are based on a combination of sophisticated search techniques, domain-specific adaptations, and handcrafted evaluation functions that have been refined by human experts over several decades. By contrast, the AlphaGo Zero program recently achieved superhuman performance in the game of Go by reinforcement learning from self-play. In this paper, we generalize this approach into a single AlphaZero algorithm that can achieve superhuman performance in many challenging games. Starting from random play and given no domain knowledge except the game rules, AlphaZero convincingly defeated a world champion program in the games of chess and shogi (Japanese chess), as well as Go.},
	number = {6419},
	urldate = {2024-01-04},
	journal = {Science},
	author = {Silver, David and Hubert, Thomas and Schrittwieser, Julian and Antonoglou, Ioannis and Lai, Matthew and Guez, Arthur and Lanctot, Marc and Sifre, Laurent and Kumaran, Dharshan and Graepel, Thore and Lillicrap, Timothy and Simonyan, Karen and Hassabis, Demis},
	month = dec,
	year = {2018},
	note = {Publisher: American Association for the Advancement of Science},
	pages = {1140--1144},
}

@misc{schulman_proximal_2017,
	title = {Proximal {Policy} {Optimization} {Algorithms}},
	url = {http://arxiv.org/abs/1707.06347},
	doi = {10.48550/arXiv.1707.06347},
	abstract = {We propose a new family of policy gradient methods for reinforcement learning, which alternate between sampling data through interaction with the environment, and optimizing a "surrogate" objective function using stochastic gradient ascent. Whereas standard policy gradient methods perform one gradient update per data sample, we propose a novel objective function that enables multiple epochs of minibatch updates. The new methods, which we call proximal policy optimization (PPO), have some of the benefits of trust region policy optimization (TRPO), but they are much simpler to implement, more general, and have better sample complexity (empirically). Our experiments test PPO on a collection of benchmark tasks, including simulated robotic locomotion and Atari game playing, and we show that PPO outperforms other online policy gradient methods, and overall strikes a favorable balance between sample complexity, simplicity, and wall-time.},
	urldate = {2024-01-02},
	publisher = {arXiv},
	author = {Schulman, John and Wolski, Filip and Dhariwal, Prafulla and Radford, Alec and Klimov, Oleg},
	month = aug,
	year = {2017},
	note = {arXiv:1707.06347 [cs]},
	keywords = {Computer Science - Machine Learning},
}

@misc{bai_constitutional_2022,
	title = {Constitutional {AI}: {Harmlessness} from {AI} {Feedback}},
	shorttitle = {Constitutional {AI}},
	url = {http://arxiv.org/abs/2212.08073},
	doi = {10.48550/arXiv.2212.08073},
	abstract = {As AI systems become more capable, we would like to enlist their help to supervise other AIs. We experiment with methods for training a harmless AI assistant through self-improvement, without any human labels identifying harmful outputs. The only human oversight is provided through a list of rules or principles, and so we refer to the method as 'Constitutional AI'. The process involves both a supervised learning and a reinforcement learning phase. In the supervised phase we sample from an initial model, then generate self-critiques and revisions, and then finetune the original model on revised responses. In the RL phase, we sample from the finetuned model, use a model to evaluate which of the two samples is better, and then train a preference model from this dataset of AI preferences. We then train with RL using the preference model as the reward signal, i.e. we use 'RL from AI Feedback' (RLAIF). As a result we are able to train a harmless but non-evasive AI assistant that engages with harmful queries by explaining its objections to them. Both the SL and RL methods can leverage chain-of-thought style reasoning to improve the human-judged performance and transparency of AI decision making. These methods make it possible to control AI behavior more precisely and with far fewer human labels.},
	urldate = {2024-01-02},
	publisher = {arXiv},
	author = {Bai, Yuntao and Kadavath, Saurav and Kundu, Sandipan and Askell, Amanda and Kernion, Jackson and Jones, Andy and Chen, Anna and Goldie, Anna and Mirhoseini, Azalia and McKinnon, Cameron and Chen, Carol and Olsson, Catherine and Olah, Christopher and Hernandez, Danny and Drain, Dawn and Ganguli, Deep and Li, Dustin and Tran-Johnson, Eli and Perez, Ethan and Kerr, Jamie and Mueller, Jared and Ladish, Jeffrey and Landau, Joshua and Ndousse, Kamal and Lukosuite, Kamile and Lovitt, Liane and Sellitto, Michael and Elhage, Nelson and Schiefer, Nicholas and Mercado, Noemi and DasSarma, Nova and Lasenby, Robert and Larson, Robin and Ringer, Sam and Johnston, Scott and Kravec, Shauna and Showk, Sheer El and Fort, Stanislav and Lanham, Tamera and Telleen-Lawton, Timothy and Conerly, Tom and Henighan, Tom and Hume, Tristan and Bowman, Samuel R. and Hatfield-Dodds, Zac and Mann, Ben and Amodei, Dario and Joseph, Nicholas and McCandlish, Sam and Brown, Tom and Kaplan, Jared},
	month = dec,
	year = {2022},
	note = {arXiv:2212.08073 [cs]},
	keywords = {Computer Science - Artificial Intelligence, Computer Science - Computation and Language},
}

@misc{andreas_language_2022,
	title = {Language {Models} as {Agent} {Models}},
	url = {http://arxiv.org/abs/2212.01681},
	abstract = {Language models (LMs) are trained on collections of documents, written by individual human agents to achieve specific goals in an outside world. During training, LMs have access only to text of these documents, with no direct evidence of the internal states of the agents that produced them -- a fact often used to argue that LMs are incapable of modeling goal-directed aspects of human language production and comprehension. Can LMs trained on text learn anything at all about the relationship between language and use? I argue that LMs are models of intentional communication in a specific, narrow sense. When performing next word prediction given a textual context, an LM can infer and represent properties of an agent likely to have produced that context. These representations can in turn influence subsequent LM generation in the same way that agents' communicative intentions influence their language. I survey findings from the recent literature showing that -- even in today's non-robust and error-prone models -- LMs infer and use representations of fine-grained communicative intentions and more abstract beliefs and goals. Despite the limited nature of their training data, they can thus serve as building blocks for systems that communicate and act intentionally.},
	urldate = {2024-01-02},
	publisher = {arXiv},
	author = {Andreas, Jacob},
	month = dec,
	year = {2022},
	note = {arXiv:2212.01681 [cs]},
	keywords = {Computer Science - Computation and Language, Computer Science - Multiagent Systems},
}

@misc{odonoghue_making_2020,
	title = {Making {Sense} of {Reinforcement} {Learning} and {Probabilistic} {Inference}},
	url = {http://arxiv.org/abs/2001.00805},
	abstract = {Reinforcement learning (RL) combines a control problem with statistical estimation: The system dynamics are not known to the agent, but can be learned through experience. A recent line of research casts `RL as inference' and suggests a particular framework to generalize the RL problem as probabilistic inference. Our paper surfaces a key shortcoming in that approach, and clarifies the sense in which RL can be coherently cast as an inference problem. In particular, an RL agent must consider the effects of its actions upon future rewards and observations: The exploration-exploitation tradeoff. In all but the most simple settings, the resulting inference is computationally intractable so that practical RL algorithms must resort to approximation. We demonstrate that the popular `RL as inference' approximation can perform poorly in even very basic problems. However, we show that with a small modification the framework does yield algorithms that can provably perform well, and we show that the resulting algorithm is equivalent to the recently proposed K-learning, which we further connect with Thompson sampling.},
	urldate = {2024-01-01},
	publisher = {arXiv},
	author = {O'Donoghue, Brendan and Osband, Ian and Ionescu, Catalin},
	month = nov,
	year = {2020},
	note = {arXiv:2001.00805 [cs]},
	keywords = {Computer Science - Artificial Intelligence, Computer Science - Machine Learning},
}

@misc{jeon_reward-rational_2020,
	title = {Reward-rational (implicit) choice: {A} unifying formalism for reward learning},
	shorttitle = {Reward-rational (implicit) choice},
	url = {http://arxiv.org/abs/2002.04833},
	abstract = {It is often difficult to hand-specify what the correct reward function is for a task, so researchers have instead aimed to learn reward functions from human behavior or feedback. The types of behavior interpreted as evidence of the reward function have expanded greatly in recent years. We've gone from demonstrations, to comparisons, to reading into the information leaked when the human is pushing the robot away or turning it off. And surely, there is more to come. How will a robot make sense of all these diverse types of behavior? Our key insight is that different types of behavior can be interpreted in a single unifying formalism - as a reward-rational choice that the human is making, often implicitly. The formalism offers both a unifying lens with which to view past work, as well as a recipe for interpreting new sources of information that are yet to be uncovered. We provide two examples to showcase this: interpreting a new feedback type, and reading into how the choice of feedback itself leaks information about the reward.},
	urldate = {2024-01-01},
	publisher = {arXiv},
	author = {Jeon, Hong Jun and Milli, Smitha and Dragan, Anca D.},
	month = dec,
	year = {2020},
	note = {arXiv:2002.04833 [cs]},
	keywords = {Computer Science - Artificial Intelligence, Computer Science - Human-Computer Interaction, Computer Science - Machine Learning, Computer Science - Robotics},
}

@inproceedings{todorov_linearly-solvable_2006,
	title = {Linearly-solvable {Markov} decision problems},
	volume = {19},
	url = {https://papers.nips.cc/paper_files/paper/2006/hash/d806ca13ca3449af72a1ea5aedbed26a-Abstract.html},
	abstract = {We introduce a class of MPDs which greatly simplify Reinforcement Learning. They have discrete state spaces and continuous control spaces. The controls have the effect of rescaling the transition probabilities of an underlying Markov chain. A control cost penalizing KL divergence between controlled and uncontrolled transition probabilities makes the minimization problem convex, and allows analytical computation of the optimal controls given the optimal value function. An exponential transformation of the optimal value function makes the minimized Bellman equation linear. Apart from their theoretical signi cance, the new MDPs enable ef cient approximations to traditional MDPs. Shortest path problems are approximated to arbitrary precision with largest eigenvalue problems, yielding an O (n) algorithm. Accurate approximations to generic MDPs are obtained via continuous embedding reminiscent of LP relaxation in integer programming. Offpolicy learning of the optimal value function is possible without need for stateaction values; the new algorithm (Z-learning) outperforms Q-learning. This work was supported by NSF grant ECS0524761.},
	urldate = {2024-01-01},
	booktitle = {Advances in {Neural} {Information} {Processing} {Systems}},
	publisher = {MIT Press},
	author = {Todorov, Emanuel},
	year = {2006},
}

@misc{stiennon_learning_2022,
	title = {Learning to summarize from human feedback},
	url = {http://arxiv.org/abs/2009.01325},
	doi = {10.48550/arXiv.2009.01325},
	abstract = {As language models become more powerful, training and evaluation are increasingly bottlenecked by the data and metrics used for a particular task. For example, summarization models are often trained to predict human reference summaries and evaluated using ROUGE, but both of these metrics are rough proxies for what we really care about -- summary quality. In this work, we show that it is possible to significantly improve summary quality by training a model to optimize for human preferences. We collect a large, high-quality dataset of human comparisons between summaries, train a model to predict the human-preferred summary, and use that model as a reward function to fine-tune a summarization policy using reinforcement learning. We apply our method to a version of the TL;DR dataset of Reddit posts and find that our models significantly outperform both human reference summaries and much larger models fine-tuned with supervised learning alone. Our models also transfer to CNN/DM news articles, producing summaries nearly as good as the human reference without any news-specific fine-tuning. We conduct extensive analyses to understand our human feedback dataset and fine-tuned models We establish that our reward model generalizes to new datasets, and that optimizing our reward model results in better summaries than optimizing ROUGE according to humans. We hope the evidence from our paper motivates machine learning researchers to pay closer attention to how their training loss affects the model behavior they actually want.},
	urldate = {2023-12-31},
	publisher = {arXiv},
	author = {Stiennon, Nisan and Ouyang, Long and Wu, Jeff and Ziegler, Daniel M. and Lowe, Ryan and Voss, Chelsea and Radford, Alec and Amodei, Dario and Christiano, Paul},
	month = feb,
	year = {2022},
	note = {arXiv:2009.01325 [cs]},
	keywords = {Computer Science - Artificial Intelligence, Computer Science - Computation and Language, Computer Science - Machine Learning},
}

@misc{hubinger_conditioning_2023,
	title = {Conditioning {Predictive} {Models}: {Risks} and {Strategies}},
	shorttitle = {Conditioning {Predictive} {Models}},
	url = {http://arxiv.org/abs/2302.00805},
	doi = {10.48550/arXiv.2302.00805},
	abstract = {Our intention is to provide a definitive reference on what it would take to safely make use of generative/predictive models in the absence of a solution to the Eliciting Latent Knowledge problem. Furthermore, we believe that large language models can be understood as such predictive models of the world, and that such a conceptualization raises significant opportunities for their safe yet powerful use via carefully conditioning them to predict desirable outputs. Unfortunately, such approaches also raise a variety of potentially fatal safety problems, particularly surrounding situations where predictive models predict the output of other AI systems, potentially unbeknownst to us. There are numerous potential solutions to such problems, however, primarily via carefully conditioning models to predict the things we want (e.g. humans) rather than the things we don't (e.g. malign AIs). Furthermore, due to the simplicity of the prediction objective, we believe that predictive models present the easiest inner alignment problem that we are aware of. As a result, we think that conditioning approaches for predictive models represent the safest known way of eliciting human-level and slightly superhuman capabilities from large language models and other similar future models.},
	urldate = {2023-12-31},
	publisher = {arXiv},
	author = {Hubinger, Evan and Jermyn, Adam and Treutlein, Johannes and Hudson, Rubi and Woolverton, Kate},
	month = feb,
	year = {2023},
	note = {arXiv:2302.00805 [cs]},
	keywords = {Computer Science - Artificial Intelligence},
}

@misc{shanahan_role-play_2023,
	title = {Role-{Play} with {Large} {Language} {Models}},
	url = {https://arxiv.org/abs/2305.16367v1},
	abstract = {As dialogue agents become increasingly human-like in their performance, it is imperative that we develop effective ways to describe their behaviour in high-level terms without falling into the trap of anthropomorphism. In this paper, we foreground the concept of role-play. Casting dialogue agent behaviour in terms of role-play allows us to draw on familiar folk psychological terms, without ascribing human characteristics to language models they in fact lack. Two important cases of dialogue agent behaviour are addressed this way, namely (apparent) deception and (apparent) self-awareness.},
	language = {en},
	urldate = {2023-12-31},
	journal = {arXiv.org},
	author = {Shanahan, Murray and McDonell, Kyle and Reynolds, Laria},
	month = may,
	year = {2023},
}

@misc{bai_training_2022,
	title = {Training a {Helpful} and {Harmless} {Assistant} with {Reinforcement} {Learning} from {Human} {Feedback}},
	url = {http://arxiv.org/abs/2204.05862},
	doi = {10.48550/arXiv.2204.05862},
	abstract = {We apply preference modeling and reinforcement learning from human feedback (RLHF) to finetune language models to act as helpful and harmless assistants. We find this alignment training improves performance on almost all NLP evaluations, and is fully compatible with training for specialized skills such as python coding and summarization. We explore an iterated online mode of training, where preference models and RL policies are updated on a weekly cadence with fresh human feedback data, efficiently improving our datasets and models. Finally, we investigate the robustness of RLHF training, and identify a roughly linear relation between the RL reward and the square root of the KL divergence between the policy and its initialization. Alongside our main results, we perform peripheral analyses on calibration, competing objectives, and the use of OOD detection, compare our models with human writers, and provide samples from our models using prompts appearing in recent related work.},
	urldate = {2023-12-31},
	publisher = {arXiv},
	author = {Bai, Yuntao and Jones, Andy and Ndousse, Kamal and Askell, Amanda and Chen, Anna and DasSarma, Nova and Drain, Dawn and Fort, Stanislav and Ganguli, Deep and Henighan, Tom and Joseph, Nicholas and Kadavath, Saurav and Kernion, Jackson and Conerly, Tom and El-Showk, Sheer and Elhage, Nelson and Hatfield-Dodds, Zac and Hernandez, Danny and Hume, Tristan and Johnston, Scott and Kravec, Shauna and Lovitt, Liane and Nanda, Neel and Olsson, Catherine and Amodei, Dario and Brown, Tom and Clark, Jack and McCandlish, Sam and Olah, Chris and Mann, Ben and Kaplan, Jared},
	month = apr,
	year = {2022},
	note = {arXiv:2204.05862 [cs]},
	keywords = {Computer Science - Computation and Language, Computer Science - Machine Learning},
}

@misc{chen_decision_2021,
	title = {Decision {Transformer}: {Reinforcement} {Learning} via {Sequence} {Modeling}},
	shorttitle = {Decision {Transformer}},
	url = {http://arxiv.org/abs/2106.01345},
	doi = {10.48550/arXiv.2106.01345},
	abstract = {We introduce a framework that abstracts Reinforcement Learning (RL) as a sequence modeling problem. This allows us to draw upon the simplicity and scalability of the Transformer architecture, and associated advances in language modeling such as GPT-x and BERT. In particular, we present Decision Transformer, an architecture that casts the problem of RL as conditional sequence modeling. Unlike prior approaches to RL that fit value functions or compute policy gradients, Decision Transformer simply outputs the optimal actions by leveraging a causally masked Transformer. By conditioning an autoregressive model on the desired return (reward), past states, and actions, our Decision Transformer model can generate future actions that achieve the desired return. Despite its simplicity, Decision Transformer matches or exceeds the performance of state-of-the-art model-free offline RL baselines on Atari, OpenAI Gym, and Key-to-Door tasks.},
	urldate = {2023-12-31},
	publisher = {arXiv},
	author = {Chen, Lili and Lu, Kevin and Rajeswaran, Aravind and Lee, Kimin and Grover, Aditya and Laskin, Michael and Abbeel, Pieter and Srinivas, Aravind and Mordatch, Igor},
	month = jun,
	year = {2021},
	note = {arXiv:2106.01345 [cs]},
	keywords = {Computer Science - Artificial Intelligence, Computer Science - Machine Learning},
}

@misc{rafailov_direct_2023,
	title = {Direct {Preference} {Optimization}: {Your} {Language} {Model} is {Secretly} a {Reward} {Model}},
	shorttitle = {Direct {Preference} {Optimization}},
	url = {http://arxiv.org/abs/2305.18290},
	doi = {10.48550/arXiv.2305.18290},
	abstract = {While large-scale unsupervised language models (LMs) learn broad world knowledge and some reasoning skills, achieving precise control of their behavior is difficult due to the completely unsupervised nature of their training. Existing methods for gaining such steerability collect human labels of the relative quality of model generations and fine-tune the unsupervised LM to align with these preferences, often with reinforcement learning from human feedback (RLHF). However, RLHF is a complex and often unstable procedure, first fitting a reward model that reflects the human preferences, and then fine-tuning the large unsupervised LM using reinforcement learning to maximize this estimated reward without drifting too far from the original model. In this paper we introduce a new parameterization of the reward model in RLHF that enables extraction of the corresponding optimal policy in closed form, allowing us to solve the standard RLHF problem with only a simple classification loss. The resulting algorithm, which we call Direct Preference Optimization (DPO), is stable, performant, and computationally lightweight, eliminating the need for sampling from the LM during fine-tuning or performing significant hyperparameter tuning. Our experiments show that DPO can fine-tune LMs to align with human preferences as well as or better than existing methods. Notably, fine-tuning with DPO exceeds PPO-based RLHF in ability to control sentiment of generations, and matches or improves response quality in summarization and single-turn dialogue while being substantially simpler to implement and train.},
	urldate = {2023-12-31},
	publisher = {arXiv},
	author = {Rafailov, Rafael and Sharma, Archit and Mitchell, Eric and Ermon, Stefano and Manning, Christopher D. and Finn, Chelsea},
	month = dec,
	year = {2023},
	note = {arXiv:2305.18290 [cs]},
	keywords = {Computer Science - Artificial Intelligence, Computer Science - Computation and Language, Computer Science - Machine Learning},
}

@misc{korbak_rl_2022,
	title = {{RL} with {KL} penalties is better viewed as {Bayesian} inference},
	url = {http://arxiv.org/abs/2205.11275},
	doi = {10.48550/arXiv.2205.11275},
	abstract = {Reinforcement learning (RL) is frequently employed in fine-tuning large language models (LMs), such as GPT-3, to penalize them for undesirable features of generated sequences, such as offensiveness, social bias, harmfulness or falsehood. The RL formulation involves treating the LM as a policy and updating it to maximise the expected value of a reward function which captures human preferences, such as non-offensiveness. In this paper, we analyze challenges associated with treating a language model as an RL policy and show how avoiding those challenges requires moving beyond the RL paradigm. We start by observing that the standard RL approach is flawed as an objective for fine-tuning LMs because it leads to distribution collapse: turning the LM into a degenerate distribution. Then, we analyze KL-regularised RL, a widely used recipe for fine-tuning LMs, which additionally constrains the fine-tuned LM to stay close to its original distribution in terms of Kullback-Leibler (KL) divergence. We show that KL-regularised RL is equivalent to variational inference: approximating a Bayesian posterior which specifies how to update a prior LM to conform with evidence provided by the reward function. We argue that this Bayesian inference view of KL-regularised RL is more insightful than the typically employed RL perspective. The Bayesian inference view explains how KL-regularised RL avoids the distribution collapse problem and offers a first-principles derivation for its objective. While this objective happens to be equivalent to RL (with a particular choice of parametric reward), there exist other objectives for fine-tuning LMs which are no longer equivalent to RL. That observation leads to a more general point: RL is not an adequate formal framework for problems such as fine-tuning language models. These problems are best viewed as Bayesian inference: approximating a pre-defined target distribution.},
	urldate = {2023-12-31},
	publisher = {arXiv},
	author = {Korbak, Tomasz and Perez, Ethan and Buckley, Christopher L.},
	month = oct,
	year = {2022},
	note = {arXiv:2205.11275 [cs, stat]},
	keywords = {Computer Science - Machine Learning, Statistics - Machine Learning},
}

@misc{levine_reinforcement_2018,
	title = {Reinforcement {Learning} and {Control} as {Probabilistic} {Inference}: {Tutorial} and {Review}},
	shorttitle = {Reinforcement {Learning} and {Control} as {Probabilistic} {Inference}},
	url = {http://arxiv.org/abs/1805.00909},
	doi = {10.48550/arXiv.1805.00909},
	abstract = {The framework of reinforcement learning or optimal control provides a mathematical formalization of intelligent decision making that is powerful and broadly applicable. While the general form of the reinforcement learning problem enables effective reasoning about uncertainty, the connection between reinforcement learning and inference in probabilistic models is not immediately obvious. However, such a connection has considerable value when it comes to algorithm design: formalizing a problem as probabilistic inference in principle allows us to bring to bear a wide array of approximate inference tools, extend the model in flexible and powerful ways, and reason about compositionality and partial observability. In this article, we will discuss how a generalization of the reinforcement learning or optimal control problem, which is sometimes termed maximum entropy reinforcement learning, is equivalent to exact probabilistic inference in the case of deterministic dynamics, and variational inference in the case of stochastic dynamics. We will present a detailed derivation of this framework, overview prior work that has drawn on this and related ideas to propose new reinforcement learning and control algorithms, and describe perspectives on future research.},
	urldate = {2023-12-31},
	publisher = {arXiv},
	author = {Levine, Sergey},
	month = may,
	year = {2018},
	note = {arXiv:1805.00909 [cs, stat]},
	keywords = {Computer Science - Artificial Intelligence, Computer Science - Machine Learning, Computer Science - Robotics, Statistics - Machine Learning},
}

@inproceedings{christiano_deep_2017,
	address = {Red Hook, NY, USA},
	series = {{NIPS}'17},
	title = {Deep reinforcement learning from human preferences},
	isbn = {978-1-5108-6096-4},
	abstract = {For sophisticated reinforcement learning (RL) systems to interact usefully with real-world environments, we need to communicate complex goals to these systems. In this work, we explore goals defined in terms of (non-expert) human preferences between pairs of trajectory segments. We show that this approach can effectively solve complex RL tasks without access to the reward function, including Atari games and simulated robot locomotion, while providing feedback on less than 1\% of our agent's interactions with the environment. This reduces the cost of human oversight far enough that it can be practically applied to state-of-the-art RL systems. To demonstrate the flexibility of our approach, we show that we can successfully train complex novel behaviors with about an hour of human time. These behaviors and environments are considerably more complex than any which have been previously learned from human feedback.},
	urldate = {2023-09-28},
	booktitle = {Proceedings of the 31st {International} {Conference} on {Neural} {Information} {Processing} {Systems}},
	publisher = {Curran Associates Inc.},
	author = {Christiano, Paul F. and Leike, Jan and Brown, Tom B. and Martic, Miljan and Legg, Shane and Amodei, Dario},
	month = dec,
	year = {2017},
	pages = {4302--4310},
}

@inproceedings{haarnoja_reinforcement_2017,
	title = {Reinforcement {Learning} with {Deep} {Energy}-{Based} {Policies}},
	url = {https://proceedings.mlr.press/v70/haarnoja17a.html},
	abstract = {We propose a method for learning expressive energy-based policies for continuous states and actions, which has been feasible only in tabular domains before. We apply our method to learning maximum entropy policies, resulting into a new algorithm, called soft Q-learning, that expresses the optimal policy via a Boltzmann distribution. We use the recently proposed amortized Stein variational gradient descent to learn a stochastic sampling network that approximates samples from this distribution. The benefits of the proposed algorithm include improved exploration and compositionality that allows transferring skills between tasks, which we confirm in simulated experiments with swimming and walking robots. We also draw a connection to actor-critic methods, which can be viewed performing approximate inference on the corresponding energy-based model.},
	language = {en},
	urldate = {2023-03-17},
	booktitle = {Proceedings of the 34th {International} {Conference} on {Machine} {Learning}},
	publisher = {PMLR},
	author = {Haarnoja, Tuomas and Tang, Haoran and Abbeel, Pieter and Levine, Sergey},
	month = jul,
	year = {2017},
	note = {ISSN: 2640-3498},
	pages = {1352--1361},
}

@article{brown_language_2020,
	title = {Language {Models} are {Few}-{Shot} {Learners}},
	url = {http://arxiv.org/abs/2005.14165},
	abstract = {Recent work has demonstrated substantial gains on many NLP tasks and benchmarks by pre-training on a large corpus of text followed by fine-tuning on a specific task. While typically task-agnostic in architecture, this method still requires task-specific fine-tuning datasets of thousands or tens of thousands of examples. By contrast, humans can generally perform a new language task from only a few examples or from simple instructions - something which current NLP systems still largely struggle to do. Here we show that scaling up language models greatly improves task-agnostic, few-shot performance, sometimes even reaching competitiveness with prior state-of-the-art fine-tuning approaches. Specifically, we train GPT-3, an autoregressive language model with 175 billion parameters, 10x more than any previous non-sparse language model, and test its performance in the few-shot setting. For all tasks, GPT-3 is applied without any gradient updates or fine-tuning, with tasks and few-shot demonstrations specified purely via text interaction with the model. GPT-3 achieves strong performance on many NLP datasets, including translation, question-answering, and cloze tasks, as well as several tasks that require on-the-fly reasoning or domain adaptation, such as unscrambling words, using a novel word in a sentence, or performing 3-digit arithmetic. At the same time, we also identify some datasets where GPT-3's few-shot learning still struggles, as well as some datasets where GPT-3 faces methodological issues related to training on large web corpora. Finally, we find that GPT-3 can generate samples of news articles which human evaluators have difficulty distinguishing from articles written by humans. We discuss broader societal impacts of this finding and of GPT-3 in general.},
	urldate = {2021-11-30},
	journal = {arXiv:2005.14165 [cs]},
	author = {Brown, Tom B. and Mann, Benjamin and Ryder, Nick and Subbiah, Melanie and Kaplan, Jared and Dhariwal, Prafulla and Neelakantan, Arvind and Shyam, Pranav and Sastry, Girish and Askell, Amanda and Agarwal, Sandhini and Herbert-Voss, Ariel and Krueger, Gretchen and Henighan, Tom and Child, Rewon and Ramesh, Aditya and Ziegler, Daniel M. and Wu, Jeffrey and Winter, Clemens and Hesse, Christopher and Chen, Mark and Sigler, Eric and Litwin, Mateusz and Gray, Scott and Chess, Benjamin and Clark, Jack and Berner, Christopher and McCandlish, Sam and Radford, Alec and Sutskever, Ilya and Amodei, Dario},
	month = jul,
	year = {2020},
	note = {arXiv: 2005.14165},
	keywords = {Computer Science - Computation and Language},
}

\newpage
\section*{Checklist}



\begin{enumerate}

  \item For all models and algorithms presented, check if you include:
  \begin{enumerate}
    \item A clear description of the mathematical setting, assumptions, algorithm, and/or model. [Yes]
    \item An analysis of the properties and complexity (time, space, sample size) of any algorithm. [Yes]
    \item (Optional) Anonymized source code, with specification of all dependencies, including external libraries. [Yes]
  \end{enumerate}

  \item For any theoretical claim, check if you include:
  \begin{enumerate}
    \item Statements of the full set of assumptions of all theoretical results. [Yes]
    \item Complete proofs of all theoretical results. [Yes]
    \item Clear explanations of any assumptions. [Yes]
  \end{enumerate}

  \item For all figures and tables that present empirical results, check if you include:
  \begin{enumerate}
    \item The code, data, and instructions needed to reproduce the main experimental results (either in the supplemental material or as a URL). [Yes]
    \item All the training details (e.g., data splits, hyperparameters, how they were chosen). [Not Applicable]
    \item A clear definition of the specific measure or statistics and error bars (e.g., with respect to the random seed after running experiments multiple times). [Yes]
    \item A description of the computing infrastructure used. (e.g., type of GPUs, internal cluster, or cloud provider). [Not Applicable]
  \end{enumerate}

  \item If you are using existing assets (e.g., code, data, models) or curating/releasing new assets, check if you include:
  \begin{enumerate}
    \item Citations of the creator If your work uses existing assets. [Not Applicable]
    \item The license information of the assets, if applicable. [Not Applicable]
    \item New assets either in the supplemental material or as a URL, if applicable. [Not Applicable]
    \item Information about consent from data providers/curators. [Not Applicable]
    \item Discussion of sensible content if applicable, e.g., personally identifiable information or offensive content. [Not Applicable]
  \end{enumerate}

  \item If you used crowdsourcing or conducted research with human subjects, check if you include:
  \begin{enumerate}
    \item The full text of instructions given to participants and screenshots. [Not Applicable]
    \item Descriptions of potential participant risks, with links to Institutional Review Board (IRB) approvals if applicable. [Not Applicable]
    \item The estimated hourly wage paid to participants and the total amount spent on participant compensation. [Not Applicable]
  \end{enumerate}

\end{enumerate}

\clearpage
\appendix
\thispagestyle{empty}

\onecolumn

\section{PROOFS}\label{appendix:proofs}

\subsection{Proofs for~\Cref{sec:incoherence}}

\begin{proposition}[Characterisation of $V$ and $Q$]
    For any prior $\pi(a|s)$ and any non-positive reward function $r(a,s)$, we have simple expressions for the soft $Q$ and $V$ functions given by:
    \begin{align*}
        Q^\pi(a_t, s_t) &= \log p_\pi(\OO_{t:T} = 1|s_t,a_t) \\
        V^\pi(s_t) &= \log p_\pi(\OO_{t:T} = 1|s_t)
    \end{align*}
    where we define the auxiliary \emph{optimality variables} $\OO_t$ to have Bernoulli distributions with:
    \[
    p_\pi(\OO_t = 1|s_t,a_t) = e^{r(s_t,a_t)}
    \]
\end{proposition}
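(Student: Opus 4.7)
The plan is to prove both identities simultaneously by backward induction on $t$, running from $t = T$ down to $t = 0$, matching the mutual recursion defining $Q^\pi$ and $V^\pi$ in \Cref{def:softvq}. The key tool is simply to unfold the definition of $\OO_t$ as an auxiliary Bernoulli variable whose distribution factors cleanly through the Markov structure.

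For the base case at time $T$, I would treat the recursion at the horizon: $Q^\pi(a_T, s_T) = r(s_T, a_T)$ (the expectation term is vacuous or can be taken as zero by convention), which equals $\log e^{r(s_T, a_T)} = \log p_\pi(\OO_T = 1 \mid s_T, a_T)$ by the definition of $\OO_T$. Then $V^\pi(s_T) = \log \mathbb{E}_{a_T \sim \pi}[\exp Q^\pi(s_T, a_T)] = \log \mathbb{E}_{a_T \sim \pi}[p_\pi(\OO_T = 1 \mid s_T, a_T)] = \log p_\pi(\OO_T = 1 \mid s_T)$, where the last equality is marginalisation over $a_T$ under $\pi(\cdot \mid s_T)$.

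For the inductive step, assume the $V$ identity at time $t+1$. Then
\begin{align*}
Q^\pi(a_t, s_t)
&= r(s_t, a_t) + \log \mathbb{E}_{s_{t+1} \sim \tau}\bigl[\exp V^\pi(s_{t+1})\bigr] \\
&= \log p_\pi(\OO_t = 1 \mid s_t, a_t) + \log \sum_{s_{t+1}} \tau(s_{t+1} \mid s_t, a_t)\, p_\pi(\OO_{t+1:T} = 1 \mid s_{t+1}) \\
&= \log\bigl[ p_\pi(\OO_t = 1 \mid s_t, a_t)\, p_\pi(\OO_{t+1:T} = 1 \mid s_t, a_t) \bigr] \\
&= \log p_\pi(\OO_{t:T} = 1 \mid s_t, a_t),
\end{align*}
where the penultimate step uses the Markov property (the sum is marginalising $s_{t+1}$ out of $p_\pi(s_{t+1}, \OO_{t+1:T} = 1 \mid s_t, a_t)$), and the last step uses conditional independence of $\OO_t$ from $\OO_{t+1:T}$ given $(s_t, a_t)$ (each $\OO_i$ depends only on its own $(s_i, a_i)$). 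The $V$ identity at time $t$ then follows in one line by taking $\log \mathbb{E}_{a_t \sim \pi}[\cdot]$ on both sides and recognising the result as the marginalisation of $a_t$ under $\pi$.

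The only step that demands care — and the one I would write out most explicitly — is the conditional-independence factorisation used to collapse $p_\pi(\OO_t = 1 \mid s_t, a_t)\, p_\pi(\OO_{t+1:T} = 1 \mid s_t, a_t)$ into $p_\pi(\OO_{t:T} = 1 \mid s_t, a_t)$. This is where the policy-dependent subscript $\pi$ matters: the relevant joint is the one induced by $(\pi, \tau)$ together with the Bernoulli $\OO_i$'s, and one must verify that $\pi$ is only used to marginalise actions at times $\geq t+1$, exactly matching the expectation in the non-standard $V^\pi$ definition. Once this is set up, everything else is bookkeeping.
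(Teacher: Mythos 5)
Your proposal is correct and follows essentially the same route as the paper's own proof: backward induction with the base case at $t=T$ and an inductive step that uses the Markov marginalisation over $s_{t+1}$ together with the conditional independence of $\OO_t$ from $\OO_{t+1:T}$ given $(s_t,a_t)$. If anything, your explicit write-out of the marginalisation $\sum_{s_{t+1}}\tau(s_{t+1}\mid s_t,a_t)\,p_\pi(\OO_{t+1:T}=1\mid s_{t+1}) = p_\pi(\OO_{t+1:T}=1\mid s_t,a_t)$ is slightly cleaner than the corresponding line in the paper.
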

\begin{proof}
    We prove this by backward induction. Base case $t = T$, we have:
    \begin{align*}
    Q^\pi(s_T, a_T) &= r(s_T, a_T) = \log (\exp r(s_T, a_T)) \\
    &= \log p_\pi(\OO_{T} = 1|s_T,a_T)
    \end{align*}
    and:
    \begin{align*}
    V^\pi(s_T) &= \log \Expect{a_t \sim \pi(a_T|s_T)}{\exp Q^\pi(s_t, a_T)} \\
    &= \log \Expect{a_t \sim \pi(a_T|s_T)}{p_\pi(\OO_{T} =1|s_T, a_T})\\
    &= \log p_\pi(\OO_{T}=1|s_T)
    \end{align*}
    
    Now, assuming that the hypothesis holds for $t < t' \leq T$, we compute for $t$:
    \begin{align*}
        Q^\pi(&s_{t},a_{t}) = r(s_{t}, a_{t}) + \log \Expect{s_{t+1}}{\exp(V^\pi(s_{t+1})}) \\
        &= \log (p_\pi(\OO_{t} = 1|s_{t},a_{t}) \cdot \Expect{s_{t+1}}{p_\pi(\OO_{t+1:T}|s_{t+1})}) \\
        &= \log \left(p_\pi(\OO_{t} = 1|s_{t},a_{t})  \cdot p_\pi(\OO_{t+1:T}|s_{t+1}, a_{t+1})\right) \\
        &= \log p_\pi(\OO_{t:T} = 1|s_{t},a_{t})
    \end{align*}
    and the proof of the inductive step for $V^\pi(s_t, a_t)$ is the same as the base case shown above.
\end{proof}

\begin{proposition}
    In case of deterministic dynamics, we have that $p(\xi|\OO_{1:T}) = \hat{p}(\xi)$.
\end{proposition}
\begin{proof}
We compute the policy for time steps $t$ and $t+1$:
\begin{align*}
p(a_t|s_t, \OO_{t:T}) &= \frac{p(\OO_{t:T}|a_t, s_t)p(a_t|s_t)}{p(\OO_{t:T}|s_t)} \\
&= \frac{p(\OO_{t:T}|s_{t+1})p(a_t|s_t)}{p(\OO_{t:T}|s_t)}    
\end{align*}
and
\[
p(a_{t+1}|s_{t+1}, \OO_{t+1:T}) = \frac{p(\OO_{t+1:T}|a_{t+1}, s_{t+1})p(a_{t+1}|s_{t+1})}{p(\OO_{t+1:T}|s_{t+1})}
\]
where $\tau(s_{t+1}|s_{t}, a_{t}) = 1$. Thus, considering $\log \hat{p}(\xi)$ we have telescoping series:
\begin{align*}
\log \hat{p}(\xi) &= \sum_{t = 1}^{T} \log p(\OO_{t:T}|s_{t}, a_{t}) - \log p(\OO_{t:T}|s_{t}) + \log p(a_t|s_t) \\
=& \log(\OO_{T:T}|a_T, s_T) - \log p(\OO_{1:T}|s_1) + \sum_{t = 1}^T \log p(a_t|s_t)
\end{align*}
which is exactly:
\[
\log p(\xi|\OO_{1:T}) = \log p(\OO_{1:T}|\xi) - \log(\OO_{1:T}) + \log p(\xi)
\]
\end{proof}
\begin{proposition}
    Given deterministic dynamics $\tau$ and policy $\pi(a|s)$ and an order-respecting $f: \mathbb{R}^{|A|} \to \Delta(|A|)$, such that for $x$ having a unique maximum, $f(x)$ is an $\arg\max$ indicator, $\pi$ is $f$-coherent if and only if it is greedy w.r.t. its own soft‑$Q$ function.
\end{proposition}
\begin{proof}

We first show argmax necessity under order‑respecting $f$: let $x\in\mathbb{R}^{|A|}$ have a unique maximizer $m=\arg\max_i x_i$. If $f$ is order‑respecting and outputs a point mass at $x$, then $f(x)=\delta_m$. Indeed, suppose $f(x)=\delta_i$ with $i\neq m$. Then $x_m>x_i$, but the last condition in the order‑preservation definition demands $f_m(x)\ge f_i(x)=1$, a contradiction.

Now, the forward direction. Assume $\pi$ is $f$-coherent, and fix the state $s$. If $Q^\pi(s,\cdot)$ has a unique maximizer, from what we proved above we know that $f(Q^\pi(s,\cdot)) = \delta_{\arg\max}$, i.e. $\pi(s)$ must be an argmax. If there are ties, coherence still forces $\pi(s)$ to be one of the maximizers, which is consistent with the selector’s tie‑break, as otherwise the per‑state $\KL$ contributes $\infty$. So $\pi(s)\in\arg\max Q^\pi(s,\cdot)$.

For the reverse direction, if $\pi(s)\in\arg\max_a Q^\pi(s,a))$ for all $s$, we pick $f$ to be the $\arg\max$ selector $f(Q_\pi(s,\cdot))=\delta_{\pi(s)}$ for all $s$. Hence $\KL(\pi||\pi^{Q,f})=0$ by~\Cref{prop:alternative-incoherence}, i.e., $\pi$ is $f$-coherent.

\end{proof}


\begin{proposition}
    The policy $\pi^\BB_T$ is $f$-coherent.
\end{proposition}
\begin{proof}
    We prove by backwards induction that $\pi(s_t|a_t)$ is fixed in iteration $t$ and does not change afterwards. In the base case $t = T$, the $Q$ function $Q^\pi(s_T, \cdot)$ does not depend on $\pi$, so it is fixed in the first iteration and remains unchanged. The inductive case follows because $\pi(s_t|a_t)$ only depends on future times $t' > t$.
\end{proof}

\begin{proposition}
    Given any prior $\pi(a|s)$, there exists a policy $\pi^*(a|s)$ such that we have convergence in distribution:
    \[
        \lim_{\delta \to 0} \pi^\delta \to \pi^*
    \]
    where $\pi^\delta$ is defined with respect to the soft Q function induced by the prior $\pi$. Moreover, the policy $\pi^*$ can be explicitly characterised as the uniform distribution over $A^* := \{a^* \in \A: Q^\pi(s, a^*) = \max_{a \in \A} Q^\pi(s, a)\}$.
\end{proposition}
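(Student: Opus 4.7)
The claim is purely pointwise in the state, so I fix $s$ and treat $Q^\pi(s,\cdot)$ as a fixed vector in $\mathbb{R} \cup \{-\infty\}^{|A|}$ which does not depend on $\delta$ (this is the key simplification relative to~\Cref{def:iterated-coherence} or~\Cref{def:temperature}: here the $Q$ function is frozen once $\pi$ is fixed). Let $M := \max_{a \in A} Q^\pi(s,a)$ and $A^* := \{a : Q^\pi(s,a) = M\}$. Since $A$ is finite, the max is attained and $A^*$ is nonempty; I also assume $M > -\infty$, as otherwise $\pi^\delta(\cdot|s)$ is undefined (the prior assigns zero probability to $\mathcal{O}_{t:T} = 1$ from $s$) and the statement is vacuous.

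\textbf{The softmax limit.} I would renormalise by subtracting $M/\delta$ in both the numerator and denominator of the Boltzmann distribution, which is legal because the two cancel:
\begin{equation*}
\pi^\delta(a|s) \;=\; \frac{\exp\!\bigl((Q^\pi(s,a) - M)/\delta\bigr)}{\sum_{a' \in A} \exp\!\bigl((Q^\pi(s,a') - M)/\delta\bigr)}.
\end{equation*}
For $a \in A^*$ the numerator equals $1$ identically in $\delta$. For $a \notin A^*$, the exponent $(Q^\pi(s,a) - M)/\delta$ is either $-\infty$ (if $Q^\pi(s,a) = -\infty$) or a negative real tending to $-\infty$ as $\delta \to 0^+$; in either case the exponential tends to $0$. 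Hence the denominator tends to $|A^*|$, and
\begin{equation*}
\pi^\delta(a|s) \;\longrightarrow\; \frac{\mathbb{1}[a \in A^*]}{|A^*|} \;=:\; \pi^*(a|s),
\end{equation*}
which is the uniform distribution on $A^*$.

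\textbf{From pointwise to convergence in distribution.} Because $A$ is finite, the pointwise convergence of probability mass functions just established is already convergence in total variation, which in particular implies convergence in distribution. Strictly speaking, the statement is about the trajectory distribution $\pi^\delta$, so I would conclude by invoking the fact that pointwise convergence of $\pi^\delta(\cdot|s)$ for every $s$ yields weak convergence of the induced trajectory distributions on the finite product space $(S \times A)^T$ (e.g.\ by dominated convergence applied to any bounded test function of a trajectory, expanding the trajectory measure as the product $\prod_t \tau(s_{t+1}|s_t,a_t)\pi^\delta(a_t|s_t)$). The only subtle point is the degenerate case $M = -\infty$ or $Q^\pi(s,a) = -\infty$ in the denominator, which is why I treated it separately at the outset; aside from that, the argument is a standard softmax-at-zero-temperature calculation, so I do not expect a substantive obstacle.
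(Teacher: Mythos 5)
Your proof is correct and follows essentially the same route as the paper's: subtract the maximum $Q$-value inside the softmax, observe the numerator is $1$ on $A^*$ while off-maximal terms vanish as $\delta \to 0^+$, and conclude the denominator tends to $|A^*|$, giving the uniform limit. Your extra care with the degenerate $M=-\infty$ case and the lift from per-state pointwise convergence to weak convergence of trajectory distributions are minor refinements the paper leaves implicit, not a different argument.
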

\begin{proof}
    Let us denote $A_t^* := \{a^* \in \A: Q^\pi(s_t, a^*) = \max_{a \in \A} Q^\pi(s_t, a)\}$ and $a_t^* \in A^*$ arbitrarily chosen. \\
    For any $1 \leq t \leq T$ we have:
    \begin{align*}
    \lim_{\delta \to 0} \pi^\delta(a_t^*|s_t) &= \lim_{\delta \to 0} \frac{\exp\left(\frac{1}{\delta} Q^\pi(s_t, a_t^*)\right)}{\sum_{a \in A} \exp \left(\frac{1}{\delta} Q^\pi(s_t, a)\right)} \\
    &= \lim_{\delta \to 0} \frac{\exp\left(\frac{1}{\delta} Q^\pi(s_t, a^*) - Q^\pi(s_t, a^*)\right)}{\sum_{a \in A} \exp \left(\frac{1}{\delta} Q^\pi(s_t, a) - Q^\pi(s_t, a^*)\right)} \\
    &= \lim_{\delta \to 0} \frac{1}{\sum_{a \in A} \exp \left(\frac{1}{\delta} Q^\pi(s_t, a) - Q^\pi(s_t, a^*)\right)}
    \end{align*}
    where the denominator now splits into:
    \begin{align*}
    &\lim_{\delta \to 0} \sum_{a \in A_t^*} \exp \left(\frac{1}{\delta} Q^\pi(s_t, a) - Q^\pi(s_t, a^*)\right) = |A^*_t|  \\
    &\lim_{\delta \to 0} \sum_{a \in A\backslash A_t^*} \exp \left(\frac{1}{\delta} Q^\pi(s_t, a) - Q^\pi(s_t, a^*)\right) = 0
    \end{align*}
    giving $\pi^\delta(a|s_t)$ uniform on $A_t^*$.
\end{proof}

\begin{corollary}
    A necessary condition for policy $\pi(a|s)$ to be optimal is that:
    \[
    \lim_{\delta \to 0} \kappa_\delta(\pi) < \infty
    \]
\end{corollary}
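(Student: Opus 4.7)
The plan is to apply Proposition~\ref{prop:alternative-incoherence} to rewrite $\kappa_\delta(\pi)=\sum_{t=1}^T\mathbb{E}_{s_t\sim d^t_\pi}\KL\bigl(\pi(\cdot|s_t)\,\|\,\pi^\delta(\cdot|s_t)\bigr)$, exchange limit and sum, and then show each per‑state KL remains bounded as $\delta\to 0$ under the assumption that $\pi$ is optimal. The occupancy $d^t_\pi$ is fixed in $\delta$, so the only thing that moves is the target $\pi^\delta(\cdot|s_t)$, which by Proposition~\ref{lemma:optimal-maxent} converges to the uniform distribution $\pi^\ast(\cdot|s_t)$ on $A^\ast_t(s_t)=\arg\max_a Q^\pi(s_t,a)$.

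Next I would establish the elementary dichotomy for the per‑state divergence. For any $a\notin A^\ast_t(s_t)$ one has $\pi^\delta(a|s_t)=\Theta\bigl(\exp\{-(V^\ast_t-Q^\pi(s_t,a))/\delta\}\bigr)$, decaying exponentially in $1/\delta$, while for $a\in A^\ast_t(s_t)$ the mass $\pi^\delta(a|s_t)\to 1/|A^\ast_t(s_t)|$. Hence $\lim_{\delta\to 0}\KL(\pi(\cdot|s_t)\,\|\,\pi^\delta(\cdot|s_t))$ is finite if and only if $\mathrm{supp}\,\pi(\cdot|s_t)\subseteq A^\ast_t(s_t)$; otherwise any offending mass $\pi(a^\dagger|s_t)>0$ contributes a term $\pi(a^\dagger|s_t)\cdot\bigl(V^\ast_t-Q^\pi(s_t,a^\dagger)\bigr)/\delta+O(1)$, which diverges.

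The main obstacle is the remaining implication: if $\pi$ is optimal, then at every state $s_t$ with $d^t_\pi(s_t)>0$ the support of $\pi(\cdot|s_t)$ lies in $A^\ast_t(s_t)$. I would argue this by contrapositive, via a local soft policy‑improvement step. Suppose at some visited $s_t$ the policy puts mass $\pi(a^\dagger|s_t)>0$ on an action with $Q^\pi(s_t,a^\dagger)<\max_a Q^\pi(s_t,a)$; define $\pi'$ by reallocating that mass onto any $a^\ast\in A^\ast_t(s_t)$ and leaving $\pi'(\cdot|s)=\pi(\cdot|s)$ for every other $s$. Since $\pi'$ agrees with $\pi$ everywhere except at $s_t$, the soft $Q^\pi$ values on the support of $\pi'$ coincide with those of $\pi$, while the expected $\exp Q^\pi(s_t,\cdot)$ at $s_t$ strictly increases. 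Propagating this upward through the recursion of Definition~\ref{def:softvq} and using the probabilistic reading $V^\pi(s)=\log p_\pi(\mathcal{O}_{t:T}=1\mid s)$ from Proposition~\ref{prop:charqv} gives $V^{\pi'}(s_0)>V^\pi(s_0)$, hence $J(\pi')>J(\pi)$, contradicting optimality.

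Combining these pieces, the support condition lets us substitute $\pi^\delta\rightsquigarrow\pi^\ast$ termwise and bound the limit by $\sum_{t=1}^T\log|A|<\infty$, establishing the necessary condition; the counterexample of priors that are coherent on a suboptimal branch (as discussed after the statement) shows sufficiency fails, consistent with the remark in the main text.
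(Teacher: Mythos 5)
Your first half follows the paper's own route: the occupancy decomposition from \Cref{prop:alternative-incoherence}, the limit $\pi^\delta\to\pi^*$ from \Cref{lemma:optimal-maxent}, and the dichotomy that the per-state limit is finite iff $\mathrm{supp}\,\pi(\cdot|s_t)\subseteq A^*_t:=\arg\max_a Q^\pi(s_t,a)$ at visited states; your explicit rate argument for actions outside $A^*_t$ is, if anything, more careful about exchanging the limit with the $\KL$ than the paper's write-up, and you state the support inclusion in the correct direction. The genuine gap is in the last implication, ``$\pi$ optimal $\Rightarrow$ $\mathrm{supp}\,\pi(\cdot|s_t)\subseteq A^*_t$ at visited states''. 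Your local reallocation step shows that moving mass from $a^\dagger$ to $a^*$ strictly increases the \emph{soft} value, i.e.\ $V^{\pi'}(s_0)>V^{\pi}(s_0)$, which by \Cref{prop:charqv} means $p_{\pi'}(\mathcal{O}_{1:T}=1)>p_{\pi}(\mathcal{O}_{1:T}=1)$. But optimality in the statement refers to the return $J(\pi)=\mathbb{E}\left[\sum_t r(s_t,a_t)\right]$, and $\log\mathbb{E}\left[\exp\sum_t r\right]$ and $\mathbb{E}\left[\sum_t r\right]$ are different objectives: under stochastic dynamics the soft $Q$ rewards high-variance continuations (log-sum-exp versus expectation), so one can build examples where the unique $J$-optimal action at $s_0$ has a deterministic continuation while the soft-$Q$ argmax is a risky suboptimal action. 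Hence ``hence $J(\pi')>J(\pi)$'' does not follow from the soft-value increase, and the contradiction with $J$-optimality is not established; the inference that a $J$-optimal policy is greedy for its \emph{own soft} $Q$ is precisely the nontrivial content you still owe.

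To close the argument you need either (i) the deterministic-dynamics setting, where a $J$-optimal policy puts trajectory mass only on return-maximizing trajectories, and a backward induction gives $Q^\pi(s_t,a)\le Q^{*}(s_t,a)$ (hard optimal) with equality for supported actions at visited states, so $\mathrm{supp}\,\pi(\cdot|s_t)\subseteq A^*_t$ follows; or (ii) a reading of ``optimal'' as maximizing the success probability $p(\mathcal{O}_{1:T}=1)$, under which your improvement step does yield the contradiction directly. The paper itself closes this step by a terse backwards induction linking optimality to the support condition rather than by a soft policy-improvement argument, so your route is different at exactly the point where it is incomplete.
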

\begin{proof}
    From~\Cref{def:boltzmann-incoherence},~\Cref{prop:alternative-incoherence} and ~\Cref{lemma:optimal-maxent}, we write:
    \begin{align*}
    \lim_{\delta \to 0} \kappa_\delta(\pi) &= \lim_{\delta \to 0} \KL_\tau(\pi(\tau)|\pi^\delta(\tau)) \\
    &= \lim_{\delta \to 0} \sum_{t = 1}^T \mathbb{E}_{s_t \sim d^t_\pi} \KL(\pi(\cdot|s_t) || \pi^\delta(\cdot|s_t)) \\
    &= \sum_{t = 1}^T \mathbb{E}_{s_t \sim d^t_\pi} \KL(\pi(\cdot|s_t) || \lim_{\delta \to 0} \pi^\delta(\cdot|s_t)) \\
    &= \sum_{t = 1}^T \mathbb{E}_{s_t \sim d^t_\pi} \KL(\pi(\cdot|s_t) || \pi^*(\cdot|s_t))
    \end{align*}
    Thus, for any $s_t \sim d^t_\pi$, the $\KL(\pi|| \pi^*)$ is finite if and only $\text{supp}(\pi^*) \subseteq \text{supp}(\pi)$. Meaning that $\pi$ must assign positive mass to every action in $A_t^*$ for each $t$. But this is a necessary condition for optimality (by backwards induction on $t$).
\end{proof}

\subsection{Proofs for~\Cref{sec:removing-incoherence}}

\begin{proposition}[Strong return improvement lemma]
    The sequence of policies $(\pi^\GG_i)_{i = 0, 1, \dots}$ given by the control-by-inference improves its return monotonically, that is:
    \[
    J(\pi^\GG_{i + 1}) \geq J(\pi^\GG_{i}) 
    \]
\end{proposition}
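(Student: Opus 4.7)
The plan is to reduce the return-improvement claim to the elementary covariance inequality that any real random variable $X$ is positively correlated with $e^X$. First I would unfold the Bayes-rule definition of $\GG$ together with the characterisation from \Cref{prop:charqv} to obtain the pointwise update
\[
\pi^\GG_{i+1}(a_t|s_t) = \pi^\GG_i(a_t|s_t)\,\exp\!\bigl(Q^{\pi^\GG_i}(s_t,a_t) - V^{\pi^\GG_i}(s_t)\bigr),
\]
so that the trajectory-density ratio factorises as $p_{\pi^\GG_{i+1}}(\xi)/p_{\pi^\GG_i}(\xi) = \prod_t \exp\!\bigl(Q^{\pi^\GG_i}(s_t,a_t) - V^{\pi^\GG_i}(s_t)\bigr)$, since the transition factors $\tau$ cancel.

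In the deterministic-dynamics case, the soft Bellman relation simplifies to $Q^{\pi^\GG_i}(s_t,a_t) = r(s_t,a_t) + V^{\pi^\GG_i}(s_{t+1})$, so the sum of soft advantages along a trajectory telescopes to $R(\xi) - V^{\pi^\GG_i}(s_0)$, where $R(\xi) = \sum_t r(s_t,a_t)$. Exponentiating and using $e^{V^{\pi^\GG_i}(s_0)} = \mathbb{E}_{\pi^\GG_i}[e^{R(\xi)}\mid s_0]$ (from \Cref{prop:charqv}) gives $p_{\pi^\GG_{i+1}}(\xi) = p_{\pi^\GG_i}(\xi)\,e^{R(\xi)}/\mathbb{E}_{\pi^\GG_i}[e^R]$, which matches the per-trajectory posterior $p_{\pi^\GG_i}(\xi \mid \OO_{1:T}=1)$ (consistent with the earlier proposition that deterministic per-state conditioning equals full-trajectory conditioning). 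Hence
\[
J(\pi^\GG_{i+1}) - J(\pi^\GG_i) = \frac{\mathrm{Cov}_{\xi\sim\pi^\GG_i}\!\bigl(R(\xi),\,e^{R(\xi)}\bigr)}{\mathbb{E}_{\pi^\GG_i}[e^R]} \;\geq\; 0
\]
by Chebyshev's sum inequality, since $R$ and $e^R$ are comonotone.

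For stochastic dynamics the telescoping breaks: $\log\mathbb{E}_{s_{t+1}}[e^{V(s_{t+1})}]$ no longer equals $V(s_{t+1})$, and the trajectory ratio depends on $\xi$ through more than $R(\xi)$. Here I would proceed by backward induction on $t$, proving pointwise dominance $\bar V^{\pi^\GG_{i+1}}(s_t) \geq \bar V^{\pi^\GG_i}(s_t)$ for the ordinary (non-soft) value $\bar V$. The base case $t=T$ is a single-state covariance inequality. The inductive step decomposes
\[
\bar V^{\pi^\GG_{i+1}}(s_t) - \bar V^{\pi^\GG_i}(s_t) = \mathbb{E}_{a \sim \pi^\GG_{i+1}}\mathbb{E}_{s_{t+1}}\bigl[\bar V^{\pi^\GG_{i+1}}(s_{t+1}) - \bar V^{\pi^\GG_i}(s_{t+1})\bigr] + \bigl(\mathbb{E}_{\pi^\GG_{i+1}} - \mathbb{E}_{\pi^\GG_i}\bigr)\bigl[\bar Q^{\pi^\GG_i}(s_t,\cdot)\bigr],
\]
where the first term is non-negative by the inductive hypothesis. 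I expect the main obstacle to be the second, per-step reweighting term, which can in fact be negative: $\pi^\GG_{i+1}$ reweights by the soft $Q^{\pi^\GG_i}$, whose ordering of actions can disagree with that of the ordinary $\bar Q^{\pi^\GG_i}$ when high-variance future stochasticity inflates soft values. Resolving this requires either a quantitative inductive hypothesis tracking by how much $\bar V^{\pi^\GG_{i+1}}$ dominates $\bar V^{\pi^\GG_i}$ at successor states (to offset the possible per-step loss), or an appeal to the equivalence $\pi^\GG_k = \pi^\FF_k$ from \Cref{theorem:equivalence-thrice} and proving return ascent on the cleaner folded-reward formulation instead.
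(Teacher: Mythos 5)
Your deterministic-case argument is correct and takes a genuinely different route from the paper's. You lift the update to the trajectory level, use the fact that under deterministic dynamics per-state conditioning coincides with trajectory conditioning to write $p_{\pi^\GG_{i+1}}(\xi)=p_{\pi^\GG_i}(\xi)e^{R(\xi)}/\mathbb{E}_{\pi^\GG_i}[e^{R}]$, and conclude with the comonotonicity (covariance) inequality; the paper instead runs a backward induction over time, comparing values state by state with Cauchy--Schwarz supplying the one-step inequality. Your version has the merit of addressing the ordinary return $\mathbb{E}_\pi[R]$ directly and is arguably cleaner in the deterministic case.

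The gap is the stochastic case, which the proposition does not exclude and which you leave unresolved. Your diagnosis of the obstacle is accurate --- reweighting by $\exp Q^{\pi^\GG_i}$, where $Q$ is the soft (log-sum-exp, hence risk-seeking) value, can shift mass onto actions with lower \emph{ordinary} $\bar{Q}^{\pi^\GG_i}$, so the per-step term in your decomposition can indeed be negative --- but neither of your proposed escapes closes the argument: in particular, appealing to $\pi^\GG_k=\pi^\FF_k$ from \Cref{theorem:equivalence-thrice} cannot help, since by that very equivalence $\pi^\FF$ is the same policy sequence and nothing becomes easier in the folded formulation. The missing idea, and the one the paper's proof uses, is to keep the backward induction entirely in the soft quantities: prove $\exp V^{\pi^\GG_{i+1}}(s_t)\ \ge\ \exp V^{\pi^\GG_i}(s_t)$ with $V,Q$ as in \Cref{def:softvq} and \Cref{prop:charqv}. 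Then the inductive hypothesis enters monotonically through $\mathbb{E}_{s_{t+1}}[\exp V(s_{t+1})]$ with no telescoping (and hence no determinism) needed, and the per-step comparison is exactly $\mathbb{E}_{a\sim\pi^\GG_{i+1}}[\exp Q^{\pi^\GG_i}(s_t,a)]\ \ge\ \mathbb{E}_{a\sim\pi^\GG_i}[\exp Q^{\pi^\GG_i}(s_t,a)]$, which is the same second-moment/covariance step you already used (the paper phrases it via Cauchy--Schwarz). Be aware that this route establishes monotonicity of $\log p_\pi(\OO_{1:T}=1)=\log\mathbb{E}_\pi[e^{R}]$ --- the notion of return the paper's proof actually works with --- rather than of $\mathbb{E}_\pi[R]$; your observation that the update can be misaligned with the ordinary $\bar{Q}$ under stochastic dynamics is substantive, and it is precisely why the ordinary-value induction you sketch does not go through as stated.
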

\begin{proof}    
    We prove this by backwards induction on $T$. First, fix some index $i$ and denote for brevity:
    \[
    \pi = \pi^\GG_i \qquad \pi' = \pi^\GG_{i+1}
    \]
    We also define the future probability of success in a state $s_t$ as in~\Cref{def:softvq}:
    \begin{align*}
    V^\pi(s_t) &= \log p_\pi(\mathcal{O}_{t:T} = 1| s_t) \\
    Q^\pi(s_t, a_t) &= \log  p_\pi(\mathcal{O}_{t:T} = 1| s_t, a_t)
    \end{align*}
    Now, we proceed by induction to show that:
    \[
    V^{\pi'}(s_t) \geq V^{\pi}(s_t)
    \]
    The policy $\pi' = \mathcal{G}(\pi)$ can be written from Bayes theorem as:
    \begin{equation}\label{eq:posterior}
    \pi'(a_t|s_t) = \frac{\pi(a_t|s_t) \exp Q^\pi(a_t|s_t)}{\mathbb{E}_{a_t \sim \pi(\cdot|s_t)}[\exp Q^\pi(a|s_t)]}    
    \end{equation}
    Now:
    \begin{equation}\label{eq:inequality}
    \begin{aligned}
        \mathbb{E}_{a_t \sim \pi'(\cdot|s_t)}[\exp Q^\pi(a|s_t)]
        &= \frac{\mathbb{E}_{a_t \sim \pi(\cdot|s_t)}\left[\left(\exp Q^\pi(a|s_t)\right)^2\right]}{\mathbb{E}_{a_t \sim \pi(\cdot|s_t)}[\exp Q^\pi(a|s_t)]} \\
        & \geq \mathbb{E}_{a_t \sim \pi(\cdot|s_t)}[\exp Q^\pi(a|s_t)]
    \end{aligned}
    \end{equation}
    where the first line follows from~\Cref{eq:posterior}, and the second line follows from Cauchy–Schwarz inequality.
    
    First, assume $t=T$. Then, $Q^\pi(a_T|s_T)$ does not depend on $\pi$ and is simply given by $r(s_t, a_t)$. We then immediately have from~\Cref{eq:inequality} that:
    \[
    V^{\pi'}(\mathcal{O}_{T:T}|s_t) \geq V^{\pi}(\mathcal{O}_{T:T}|s_t)
    \]
    establishing the base of the induction. For the inductive case $t < T$, assume that the above holds for all $t < t' \leq T$ and write:
    \begin{align*}
      \exp V^{\pi'}(s_t) &= p_{\pi'}(\mathcal{O}_{t:T}=1 | s_t)\\
      &= \mathbb{E}_{a_t\sim \pi'(\cdot| s_t)}\left[e^{r(s_t,a_t)}
      \mathbb{E}_{s_{t+1}} \exp V^{\pi'}(s_{t+1})\right] \\
      &\geq \mathbb{E}_{a_t\sim \pi'(\cdot| s_t)}\left[e^{r(s_t,a_t)}
      \mathbb{E}_{s_{t+1}} \exp V^{\pi}(s_{t+1})\right] \\
      &= \mathbb{E}_{a_t\sim \pi'(\cdot| s_t)}\left[\exp Q^\pi(s_t, a_t)\right] \\
      &\geq \mathbb{E}_{a_t\sim \pi(\cdot| s_t)}\left[\exp Q^\pi(s_t, a_t)\right] \\
      &= \exp V^\pi(s_t)
    \end{align*}
    where the first inequality comes from the inductive hypothesis, and the second from~\Cref{eq:inequality}.
    From the Cauchy–Schwarz, inequalities are strict unless $Q^\pi(s_t,\cdot)$ is $\pi(\cdot| s_t)$-almost-surely constant. Intuitively, the update is non‑trivial exactly when the old policy is not already proportional to the frozen‑future success factors.
\end{proof}

\subsection{Proof of~\Cref{sec:equivalence}}

We split the proof of~\Cref{theorem:equivalence-thrice} into several lemmas.

\begin{lemma}~\label{lemma:equivalence-temperature-folding}
    For any prior $p(a|s)$, non-positive reward function $r(s_t,a_t)$, deterministic dynamics $\tau(s_{t+1}|s_t, a_t)$, and $k \in \mathbb{N}_+$ we have $\pi_{\alpha(k)} = \pi^\HH_k$, for $\alpha(k) = 2^k$.
\end{lemma}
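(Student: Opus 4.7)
The plan is to proceed by strong induction on $k \in \mathbb{N}_+$, establishing $\pi^\HH_k = \pi_{2^k}$. For the base case $k=1$, I will unpack the definitions directly: $\pi^\HH_1(a_t|s_t) = p(a_t|s_t, \OO^{(0)}_{t:T}=1)$ with $p(\OO^{(0)}_t=1|s_t,a_t) = e^{r(s_t,a_t)}$, compared with $\pi_2(a_t|s_t) = p(a_t|s_t, \OO^{2}_{t:T}=1)$ with $p(\OO^{2}_t=1|s_t,a_t) = e^{2r(s_t,a_t)}$, and verify the match using the soft-$Q$ characterisation from \Cref{prop:charqv}.

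For the inductive step, assume $\pi^\HH_j = \pi_{2^j}$ for all $1 \le j \le k$. The cumulative recursion gives
\[
r_k(s_t,a_t) \;=\; r(s_t,a_t) \;+\; \sum_{j=1}^{k} \log \pi^\HH_j(a_t|s_t).
\]
Substituting the induction hypothesis together with the identity $\log \pi_\alpha(a|s) = \log p(a|s) + Q^{p, \alpha r}(s, a) - V^{p, \alpha r}(s)$ (from \Cref{prop:charqv}) and the deterministic-dynamics recurrence $Q^{p, \alpha r}(s_t, a_t) = \alpha\, r(s_t, a_t) + V^{p, \alpha r}(s_{t+1})$, the coefficient of $r(s_t, a_t)$ assembles into a geometric series summing to $2^{k+1}$, while the $V$-terms combine into a telescoping potential of the form $\Phi(s_{t+1}) - \Phi(s_t)$ with $\Phi(s) = \sum_{j} V^{p, 2^j r}(s)$.

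The key tool making this step work is invariance of the CAI posterior under potential-based reward shaping in deterministic dynamics: replacing $r$ by $r + \Phi(s') - \Phi(s)$ shifts the induced soft-$Q$ by the state-only quantity $-\Phi(s)$, which cancels in the softmax $\pi(a|s) \propto p(a|s) \exp Q^{p,r}(s, a)$. This lets me discard the telescoping potential and identify the effective reward driving $\pi^\HH_{k+1}$ with $2^{k+1} r$ (modulo the policy-invariant shaping term), giving $\pi^\HH_{k+1} = \pi_{2^{k+1}}$.

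The main obstacle is bookkeeping around the action-dependent $\log p(a_t|s_t)$ factors that appear once per iteration in the expansion of each $\log \pi^\HH_j$: these are not state-only and therefore are not absorbed by the potential $\Phi$ alone. I expect to resolve this through repeated application of the Levine-style fold-prior-into-reward identity (\Cref{sec:removing-incoherence}), recognising that incorporating $\log p$ into the reward is equivalent to swapping $p$ for the uniform prior without changing the CAI posterior. Strong induction is needed because $r_k$ depends on every $\pi^\HH_j$ with $j\le k$ simultaneously, and the doubling of the effective temperature emerges only after summing all prior contributions.
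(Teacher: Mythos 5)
Your overall strategy---expanding $r_k = r + \sum_{j\le k}\log\pi^\HH_j$, invoking the Bayes/soft-$Q$ characterisation of \Cref{prop:charqv}, the deterministic recurrence $Q(s_t,a_t)=\alpha r(s_t,a_t)+V(s_{t+1})$, and invariance of the conditioned policy under per-state (potential) reward shifts---is sound, and it is essentially a reorganisation of the paper's argument: the paper likewise folds the prior into the reward to assume uniformity and shows $r_k \equiv 2^k r_0$ up to action-independent constants, but it runs a backwards induction over the time index $t$ (after settling $T=1$ first), whereas you induct over $k$ and package the state-dependent terms into an explicit telescoping potential $\Phi$. Your shaping-invariance tool is valid for deterministic $\tau$ (with the convention that $\Phi$ vanishes beyond time $T$), so the skeleton would go through.

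However, two concrete points fail as written. First, the arithmetic does not close under your own hypothesis. By \Cref{def:folded}, $\pi^\HH_{k+1}$ is the posterior under $r_k$, so $\pi^\HH_1$ is the temperature-$1$ posterior (single step, uniform prior: $\propto e^{r}$), not the temperature-$2$ one ($\propto e^{2r}$); your base-case ``verification'' cannot succeed. Likewise, with the hypothesis $\pi^\HH_j=\pi_{2^j}$ the coefficient of $r$ in $r_k$ is $1+\sum_{j=1}^{k}2^j=2^{k+1}-1$, not $2^{k+1}$. Both are symptoms of an off-by-one: what your expansion actually yields (and what the paper's calculation $r_k\equiv 2^k r_0$ really establishes) is $\pi^\HH_{k+1}=\pi_{2^{k}}$, i.e.\ the statement with shifted indexing; you need to prove that version (or re-index) rather than assert the match at $k=1$. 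Second, the accumulated $\log p$ factors are not neutralised by ``repeated folding'': after $k$ iterations the effective model is, up to shaping, a uniform prior with reward $2^{k}r+(k{+}1)\log p$, i.e.\ the posterior of the tilted prior $\propto p^{\,k+1}$ under $2^{k}r$ (check the one-step case: $\pi^\HH_2\propto p^2 e^{2r}$ while $\pi_2\propto p\,e^{2r}$). Since $\pi_{\alpha}$ in \Cref{def:temperature} keeps the prior fixed at $p$ and scales only the reward, the claimed identity genuinely requires the uniform-prior reduction up front (equivalently, interpreting the temperature as annealing the full energy $r+\log p$); this is the same reduction the paper makes, but in your plan it must be stated as an assumption/convention, not expected to fall out of the fold-prior identity.
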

\begin{proof}
    Throughout the proof, we will assume that the prior is uniform; otherwise the prior is folded into the reward. Let us first consider the case of $T = 1$. We have that:
    \begin{align*}
        \log p(a|s, \mathcal{O}^{(k)}) &= \log\left(\frac{p(\mathcal{O}^{(k)}|a, s)p(a|s)}{p(\mathcal{O}^{(k)}|s)}\right) \\
        &= r_k(s, a) + C
    \end{align*}
    where the constant
    \[
    C = \log |A| - \log \frac{\sum_{a}\exp r_k(s, a)}{|A|}
    \]
    does not depend on $a$. We also observe that shifting reward by any additive constant does not change the policy (a soft version of the reward shaping lemma): for $r'(s, a) = r(s,a) + C$ we have
    \begin{align*}
        p(a|s, \mathcal{O}') &= \frac{p(a|s) e^{r(s, a) + C}}{\sum_{a'} p(a'|s)e^{r(s,a) + C}} \\
        &= \frac{p(a|s) e^{r(s, a)}}{\sum_{a'} p(a'|s)e^{r(s,a)}}
    \end{align*}
    To ignore the constants, we write $r \equiv r'$ when $r = r' + C$. Thus, the iterative process of adding the posterior to the reward results in a sequence:
    \begin{align*}
    r_0(s, a) &\equiv r(s, a) \\
    r_1(s, a) &\equiv r_0(s, a) + \log p(a|s, \mathcal{O}^{0}) \\
    &\equiv 2r(s, a) + C_1 \equiv 2r(s,a) \\
    \dots\\
    r_{k}(s, a) &\equiv r_{k-1}(s, a) + \log p(a|s, \mathcal{O}^{(k-1)}) \\
    &\equiv 2r_{k-1}(s, a) + C_k \equiv 2^{k}r(s,a)
    \end{align*}
    This proves, by induction on $k$, that $\pi^\HH_k = \pi_{\alpha(k)}$.\\
    The case of $T > 1$ is then done by backwards induction over the time horizon. We are given $1 \leq t < T$ and we assume that the following inductive hypothesis holds for all $t < t'$ and for all $k$:
    \[
    r_k(s, a) = 2^kr_0(s, a) \qquad p(\mathcal{O}_{t':T}^{\alpha(k)}|s_{t'}) = p(\mathcal{O}_{t':T}^{(k)}|s_{t'})
    \]
    In case of the policy obtained by decreasing the temperature, we have:
    \begin{align*}
        \pi_{\alpha(k+1)}(a_t|s_t) &= \log p(a_t|s_t, \mathcal{O}_{t:T}^{\alpha(k+1)}) \\
        & \equiv \log p(\mathcal{O}_{t:T}^{\alpha(k+1)}|a_t, s_t) + \log p(a_t|s_t) - \log p(\OO_{t:T}|s_t)\\
        & \equiv \log p(\mathcal{O}_{t}^{\alpha(k+1)}|a_t, s_t) + \log p(\mathcal{O}_{t+1:T}^{\alpha(k+1)}|a_t, s_t) \\
        &= 2^{k + 1}r_0(a_t, s_t) + \log p(\mathcal{O}_{t+1:T}^{\alpha(k+1)}|s_{t+1})
    \end{align*}
    where the first line follows from the definition, the next one from the Bayes' law, the next one from the assumption that the prior is uniform, and the next one from the definition of the $\mathcal{O}_{1}^{\alpha(k+1)}$ and using the fact that there is a unique $s_{t+1}$ which follows $\tau(a_t, s_t)$.
    
    We look at the policy obtained by folding the posterior into the reward:
    \begin{align*}
        \pi_{(k+1)}^\HH(a_t|s_t) &= \log p(a_t|s_t, \mathcal{O}_{t:T}^{(k+1)}) \\
        & \equiv \log p(\mathcal{O}_{t}^{(k+1)}|a_t, s_t) + \log p(\mathcal{O}_{t+1:T}^{(k+1)}|a_t, s_t) \\
        &= r_{k+1}(a_t, s_t) + \log p(\mathcal{O}_{t+1:T}^{(k+1)}|s_{t+1}) \\
        &= 2^{k+1}r_0(a_t, s_t) + \log p(\mathcal{O}_{t+1:T}^{\alpha(k+1)}|s_{t+1})
    \end{align*}
    where the last line follows from the inductive hypothesis.
\end{proof}

\begin{lemma}~\label{lemma:equivalence-folding-retraining}
    For any prior $p(a|s)$, non-positive reward function $r(s_t,a_t)$, arbitrary dynamics $\tau(s_{t+1}|s_t, a_t)$, and $k \in \mathbb{N}_+$ we have $\pi^\mathcal{G}_{k} = \pi^\FF_k$.
\end{lemma}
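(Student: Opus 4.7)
The proof is a direct induction on $k$, anchored at $\pi^\GG_0 = p = \pi^\FF_0$ and driven by a \emph{policy-into-reward folding identity}: for any policy $\pi$ and reward $r$, the joint $(\pi,\tau,\OO[r])$ restricted to $\{\OO_{1:T}=1\}$ coincides with the joint $(u,\tau,\hat{\OO})$ restricted to $\{\hat{\OO}_{1:T}=1\}$, where $u$ is uniform on $\A$ and $p(\hat{\OO}_t=1|s_t,a_t) = e^{r(s_t,a_t)}\pi(a_t|s_t)$. As a preliminary reduction, I assume without loss of generality that the prior $p$ is uniform by absorbing $\log p(a|s)$ into $r$, following the opening of the proof of~\Cref{lemma:equivalence-temperature-folding}; both $\pi^\GG$ and $\pi^\FF$ are invariant under this reparametrisation because the additive $\log p$ appears identically in every numerator and partition function.

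\textbf{The folding identity.} Both joints factor as $\prod_t (\text{prior})(a_t|s_t)\cdot\tau(s_{t+1}|s_t,a_t)\cdot p(\text{optimality}|s_t,a_t)$, and term-by-term cancellation---the constant $u$ absorbs into $\pi$ in the second joint---shows that the two joint densities agree on $\{\OO_{1:T}=1\}$. Hence the trajectory posteriors $p(\xi|\OO_{1:T}=1)$ coincide, and so do all their marginals; in particular $p(a_t|s_t,\OO_{t:T}=1)$ agrees, where the passage from conditioning on the full sequence $\OO_{1:T}$ to only the future tail $\OO_{t:T}$ is justified by $a_t\perp\OO_{1:t-1}\mid s_t$, which holds in both joints by the Markov property. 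Crucially, $\tau$ appears identically on both sides, so the identity is valid for arbitrary stochastic transition dynamics---this is exactly what makes the $\GG$--$\FF$ equivalence robust where the $\GG$--$\HH$ equivalence of~\Cref{theorem:equivalence-thrice} breaks.

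\textbf{Inductive step and main obstacle.} Assuming $\pi^\GG_k = \pi^\FF_k$, apply the folding identity with $\pi=\pi^\GG_k$: the update $\pi^\GG_{k+1}(a_t|s_t) = p_{\pi^\GG_k}(a_t|s_t,\OO_{t:T}=1)$ equals the marginal posterior under $(u,\tau,\OO^{(k)})$ with $p(\OO^{(k)}_t=1|s_t,a_t) = e^{r(s_t,a_t)}\pi^\GG_k(a_t|s_t) = e^{r(s_t,a_t)+\log\pi^\FF_k(a_t|s_t)}$. Up to an $a$-independent additive constant absorbed in normalisation, the exponent is precisely $r_k(s_t,a_t)$ from~\Cref{def:folded-orig}, so the posterior is exactly $\pi^\FF_{k+1}(a_t|s_t)$. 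The main subtlety is that the identification must be made at the full trajectory level rather than step by step, because $p(a_t|s_t,\OO_{t:T}=1)$ depends recursively on the policy through the soft $V$-function of~\Cref{def:softvq}, not merely through the immediate action distribution; matching the joints wholesale---rather than proving a one-step equality and attempting to iterate it---is what carries the argument through for stochastic $\tau$.
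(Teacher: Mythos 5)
Your proof is correct in substance and has the same skeleton as the paper's: reduce to a uniform prior, then induct on $k$. Where you differ is the inner mechanism. The paper writes both updates as $(\text{prior})\times\exp Q$ and proves, by a backwards induction on $t$, that $p(a_t|s_t)\exp Q^{\FF}_k(s_t,a_t)=C_t\,\pi^\GG_k(a_t|s_t)\exp Q^{\GG}_k(s_t,a_t)$ up to constants; you package the same computation as a single trajectory-level change-of-measure (``folding'') identity: conditioning with prior $\pi$ and likelihood $e^{r}$ is the same as conditioning with a uniform prior and likelihood $e^{r}\pi(a|s)$. Your version avoids threading the constants $C_t$ through the soft-value recursion and makes it transparent why the argument survives stochastic $\tau$ (the dynamics enter both joints identically); the paper's version stays inside the soft $Q/V$ formalism it reuses elsewhere. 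Your reduction from conditioning on $\OO_{1:T}$ to the tail $\OO_{t:T}$ is fine, though the independence actually needed is $a_t\perp\OO_{1:t-1}\mid(s_t,\OO_{t:T})$, which the Markov structure does supply.

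One caveat, which you share with (indeed inherit from) the paper's proof: your justification of the uniform-prior reduction is false as stated. The iterated sequences are \emph{not} invariant under replacing $(p,r)$ by $(\text{uniform},\,r+\log p)$: in a one-step MDP the original sequence is $\pi^\GG_k(a)\propto p(a)e^{k r(a)}$, while after folding it becomes $\propto\bigl(p(a)e^{r(a)}\bigr)^k$, and these differ for $k\ge 2$ whenever $p$ is non-uniform --- only a \emph{single} conditioning step is invariant (that is Levine's trick); the analogous discrepancy appears in $\pi^\FF_k$ under the literal \Cref{def:folded-orig}, whose increment is $\log(\text{posterior})$ rather than $\log(\text{posterior}/\text{prior})$. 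So, strictly, both your argument and the paper's establish the lemma in the uniform-prior (post-folding) parametrisation; covering arbitrary priors as the statement promises would require either a genuine invariance argument or reading \Cref{def:folded-orig} as already having the prior folded into $r$. Since the paper asserts the same ``w.l.o.g.'' without proof, this is not a defect specific to your write-up, but the sentence claiming invariance ``because the additive $\log p$ appears identically in every numerator and partition function'' should be removed or repaired.
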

\begin{proof}
Proof by induction on $k$. Assume without loss of generality uniform prior $p$, otherwise fold it into the reward. For $k=0$ they coincide by definition. Now, assume that $\pi_k^\GG = \pi_k^\FF$, and we show the inductive step for $k+1$. Let us denote the corresponding $Q$ functions by $Q^\GG_k$ and $Q^\FF_k$.
First, exactly as in the proof of~\Cref{prop:strong-improvement}, we know that:
\[
        \pi_{k+1}^\GG(a_t|s_t) = \frac{\pi_{k}^\GG(a_t|s_t) \exp Q^{\GG}_k(a_t|s_t)}{\mathbb{E}_{a_t \sim \pi(\cdot|s_t)}[\exp Q^{\GG}_k(a|s_t)]}\ \propto\ \pi_{k}^\GG(a_t|s_t) \exp Q^{\GG}(a_t|s_t)
\]
and on the other hand, we also know from the definition that:
\[
    \pi_{k+1}^\FF(a_t|s_t) \ \propto \ p(s_t|a_t) \exp Q^{\FF}_k(a_t|s_t)
\]
Thus, we aim to show that: \[
p(s_t|a_t) \exp Q^{\FF}_k(a_t|s_t) = C_t\pi_{k}^\GG(a_t|s_t) \exp Q^{\GG}_k(a_t|s_t)
\]
up to some multiplicative constant $C_t$. We do this again by induction, this time on a time horizon $t$.
Using~\Cref{def:softvq}, we have:
\[
  \exp Q^{\FF}_k(s_T,a_T)=\exp r_k(s_T,a_T)
  =\pi^\GG_k(a_T| s_T)\exp r_0(s_T,a_T)
  =\pi^\GG_k(a_T| s_T)\exp Q^{\GG}_k(s_T,a_T)
\]
which proves this in case $t=T$. Inductive step: assume that the hypothesis holds for $t+1$, and write:
\[
\exp V^\FF(s_{t+1})
=\mathbb{E}_{a_{t+1}\sim p}[\exp Q^{\FF}_k(s_{t+1}(a_{t+1})]
=\frac{C_t}{|A|}\mathbb{E}_{a_{t+1}\sim \pi^\GG_k}[\exp Q^\GG_k(s_{t+1}(a_{t+1})]
\]
Taking $\log$ and bringing the factor outside the expectation shows that:
 \[
  \mathbb{E}_{s_{t+1}}[\exp V^\FF_k(s_{t+1})]
  =\frac{C_t}{|A|}\mathbb{E}_{s_{t+1}}[\exp V^\GG_k(s_{t+1})]
\]
which means:
\[
\exp Q^\FF_k(s_t,a_t)
  =\exp r_k(s_t,a_t)\cdot \frac{C_t}{|A|}
  \mathbb{E}_{s_{t+1}}[\exp V^\GG_k(s_{t+1})]
\]
proving the lemma.
\end{proof}

\begin{lemma}~\label{lemma:equivalence-folding-orig}
    For any prior $p(a|s)$, non-positive reward function $r(s_t,a_t)$, deterministic dynamics $\tau(s_{t+1}|s_t, a_t)$, and $k \in \mathbb{N}_+$ we have $\pi^\FF_{k} = \pi_{\alpha(k)}$ for schedule $\alpha(k) = k$.
\end{lemma}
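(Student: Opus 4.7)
The plan is to reduce to~\Cref{lemma:equivalence-folding-retraining}, which already establishes $\pi^\FF_k = \pi^\GG_k$ for arbitrary dynamics. It will then suffice to show that in the deterministic case $\pi^\GG_k = \pi_{\alpha(k)}$ for the linear schedule $\alpha(k) = k$. As in the previous two lemmas, I would WLOG take the prior $p$ to be uniform by folding $\log p$ into $r_0$.

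Proving $\pi^\GG_k = \pi_k$ proceeds by induction on $k$. The base case $k=1$ is immediate: $\GG(p) = p(a|s, \OO^{(0)})$ with $p(\OO^{(0)}_t|a,s) = \exp r_0(s,a) = p(\OO^{\alpha(1)}_t|a,s)$. For the inductive step, I would use the Bayesian update formula $\pi^\GG_{k+1}(a|s_t)\ \propto\ \pi^\GG_k(a|s_t)\exp Q^{\pi^\GG_k}(s_t,a)$ derived in the proof of~\Cref{prop:strong-improvement}. Substituting the hypothesis $\pi^\GG_k = \pi_k$, expanding $\pi_k(a|s_t)\ \propto\ \exp\!\bigl(k\, r_0(s_t,a) + V^{(k)}_{t+1}(\tau(s_t,a))\bigr)$ via Bayes (writing $V^{(k)}_{t+1}(s) := \log p(\OO^{\alpha(k)}_{t+1:T}|s)$ for the prior marginal log-likelihood), and using $Q^{\pi_k}(s_t,a) = r_0(s_t,a) + V^{\pi_k}(\tau(s_t,a))$ which holds in deterministic dynamics, I obtain
\[
\pi^\GG_{k+1}(a|s_t)\ \propto\ \exp\!\Bigl((k+1)\, r_0(s_t,a) + V^{(k)}_{t+1}(\tau(s_t,a)) + V^{\pi_k}(\tau(s_t,a))\Bigr).
\]
Meanwhile the direct Bayesian expansion gives $\pi_{k+1}(a|s_t)\ \propto\ \exp\!\bigl((k+1) r_0(s_t,a) + V^{(k+1)}_{t+1}(\tau(s_t,a))\bigr)$, so matching the two expressions modulo constants in $a$ reduces the entire inductive step to a single \emph{value identity}:
\[
V^{(k)}_{t+1}(s') + V^{\pi_k}(s') = V^{(k+1)}_{t+1}(s') \qquad \text{for all } s'.
\]

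This value identity is the hard part, and I would prove it by backwards induction on the time index. At the terminal slice it closes by direct algebra: the $\pi_k$-expectation in $V^{\pi_k}$ produces the ratio $\log\sum_a \exp((k+1) r_0) - \log\sum_a \exp(k r_0)$ which exactly cancels the denominator inside $V^{(k)}_T(s) = -\log|A| + \log\sum_a \exp(k r_0(s,a))$, leaving $V^{(k+1)}_T(s)$. The inductive step re-expands $V^{\pi_k}(s_t)$ via~\Cref{def:softvq}, substitutes the Boltzmann form of $\pi_k$ and $Q^{\pi_k}$, and applies the hypothesis for $t+1$ inside the expectation; the normalizing constants telescope thanks to determinism. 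The delicate point is the bookkeeping, since the two summands on the left-hand side are objects of different nature --- a prior-marginal log-likelihood at temperature $k$ on the one hand, and a soft-value under the already-sharpened policy $\pi_k$ with temperature-$1$ optimality on the other --- and recognizing their sum as the prior marginal at temperature $k+1$ is a subtle restatement of the principle that one step of posterior tightening equals a unit increase in temperature.
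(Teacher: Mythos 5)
Your proof is correct, but it takes a genuinely different route from the paper's. The paper proves the statement directly on the folded sequence $\pi^\FF$, mirroring \Cref{lemma:equivalence-temperature-folding}: with a uniform prior (WLOG) and using invariance of the posterior under additive reward constants, the non-cumulative recursion $r_{k}\equiv r_0+\log p(a|s,\OO^{(k-1)})$ telescopes to $r_k\equiv k\,r_0$ (linear growth, versus the doubling in the cumulative $\HH$ case), and the extension to $T>1$ is a backwards induction on the time index carrying the marginal equality $p(\OO^{\alpha(k)}_{t':T}|s_{t'})=p(\OO^{(k)}_{t':T}|s_{t'})$. You instead invoke \Cref{lemma:equivalence-folding-retraining} to reduce to the retraining sequence and prove $\pi^\GG_k=\pi_{\alpha(k)}$ by induction on $k$, using the posterior-update formula from \Cref{prop:strong-improvement} and isolating the value identity $V^{(k)}_{t}(s)+V^{\pi_k}_{t}(s)=V^{(k+1)}_{t}(s)$, itself proved by backwards induction on $t$ (I checked both the terminal slice and the inductive step; the cancellation of normalizers goes through exactly as you describe, and you correctly use determinism both in $Q^{\pi_k}(s_t,a)=r_0(s_t,a)+V^{\pi_k}(\tau(s_t,a))$ and in the Bayes expansion of $\pi_k$, which is where the stochastic case would break, consistent with \Cref{example:counter}). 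The paper's argument is shorter because the folded-reward bookkeeping linearizes immediately; your argument is a bit longer but yields a clean standalone statement of the principle that one retraining step equals a unit increase of inverse temperature, and gives a direct proof of $\pi^\GG_k=\pi_{\alpha(k)}$, which the paper only obtains by composing two lemmas. There is no circularity, since \Cref{lemma:equivalence-folding-retraining} is proved independently of the present statement.
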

\begin{proof}
 The proof follows almost exactly like the one in~\Cref{lemma:equivalence-temperature-folding}. The base case is identical. In the inductive case, we write analogously:
\begin{align*}
    r_0(s, a) &\equiv r(s, a) \\
    r_1(s, a) &\equiv r_0(s, a) + \log p(a|s, \mathcal{O}^{0}) \\
    &\equiv 2r(s, a) + C_1 \equiv 2r(s,a) \\
    \dots\\
    r_{k}(s, a) &\equiv r_0(s, a) + \log p(a|s, \mathcal{O}^{(k-1)}) \\
    &\equiv kr(s, a) + C_k \equiv kr(s,a)
    \end{align*}
\end{proof}
and the rest of the proof follows as before.

\subsection{Proof of~\Cref{corr:rate-of-convergence}}

\begin{lemma}~\label{lemma:argmax-derivative}
Given a function $h(x) = f(\arg\max_t (f(t) + x g(t)))$ for some differentiable functions $f, g$, the derivative of $h(x)$ is:
    \[
        \frac{dh}{dx} = -\frac{f'(t^*(x)) g'(t^*(x))}{f''(t^*(x)) + x g''(t^*(x))}
    \]
    where $t^*(x) = \arg\max_t f(t) + x g(t)$.
\end{lemma}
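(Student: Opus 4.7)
The plan is to treat $t^\ast(x)=\arg\max_t[f(t)+xg(t)]$ as an implicit function of $x$ and then apply the chain rule to $h(x)=f(t^\ast(x))$. Assuming the maximizer is interior and non-degenerate, the implicit function theorem will yield $dt^\ast/dx$ from the first-order optimality condition, and a single chain-rule step will give the claimed expression for $dh/dx$.

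Concretely, I would proceed as follows. First, write the first-order condition that defines the maximizer:
\[
f'(t^\ast(x)) + x\, g'(t^\ast(x)) = 0.
\]
Second, differentiate this identity in $x$, using the chain rule on each term, which gives
\[
\bigl(f''(t^\ast(x)) + x\, g''(t^\ast(x))\bigr)\frac{dt^\ast}{dx} + g'(t^\ast(x)) = 0,
\]
so that
\[
\frac{dt^\ast}{dx} = -\frac{g'(t^\ast(x))}{f''(t^\ast(x)) + x\, g''(t^\ast(x))}.
\]
Third, apply the chain rule to $h(x) = f(t^\ast(x))$ to obtain
\[
\frac{dh}{dx} = f'(t^\ast(x))\cdot\frac{dt^\ast}{dx} = -\frac{f'(t^\ast(x))\, g'(t^\ast(x))}{f''(t^\ast(x)) + x\, g''(t^\ast(x))},
\]
which is exactly the claim.

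The only genuine subtlety is justifying that $t^\ast(x)$ is differentiable in $x$ in the first place. For this I would invoke the implicit function theorem applied to $F(t,x):=f'(t)+xg'(t)$ at the point $(t^\ast(x),x)$: by the second-order sufficient condition for a strict maximum we have $\partial_t F = f''(t^\ast) + x\, g''(t^\ast) \neq 0$, which is also exactly the denominator appearing in the conclusion and so is assumed nonzero implicitly by the statement. Once this regularity is in place, the rest is a line of calculation. The main (minor) obstacle is really just being careful that $t^\ast$ is interior so that the stationarity condition holds with equality; on the interior the remaining manipulations are routine.
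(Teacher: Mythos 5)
Your proposal is correct and follows essentially the same route as the paper's proof: write the first-order condition $f'(t^*(x)) + x g'(t^*(x)) = 0$, differentiate it implicitly to get $dt^*/dx$, and apply the chain rule to $h(x) = f(t^*(x))$. Your explicit appeal to the implicit function theorem for the differentiability of $t^*(x)$ is a small additional justification the paper leaves implicit, but the argument is otherwise identical.
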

\begin{proof}
Let $t^*(x) = \arg\max_t (f(t) + x g(t))$. Therefore, we have:
\[
    \frac{d}{dt} \left( f(t) + x g(t) \right) \bigg|_{t = t^*(x)} = 0 = f'(t^*(x)) + x g'(t^*(x))
\]
Thus:
\[ x = -\frac{f'(t^*(x))}{g'(t^*(x))} \]

To differentiate \( h(x) \), we use the fact that \( h(x) = f(t^*(x)) \). This gives us:
\[ \frac{dh}{dx} = \frac{d}{dx} f(t^*(x)) \]

By the chain rule:
\[ \frac{dh}{dx} = f'(t^*(x)) \cdot \frac{dt^*(x)}{dx} \]

To find \( \frac{dt^*(x)}{dx} \), we differentiate the first-order condition \( f'(t^*(x)) + x g'(t^*(x)) = 0 \) with respect to \( x \):
\[ \frac{d}{dx} \left( f'(t^*(x)) + x g'(t^*(x)) \right) = 0 \]

Applying the chain rule:
\[ f''(t^*(x)) \cdot \frac{dt^*(x)}{dx} + g'(t^*(x)) + x g''(t^*(x)) \cdot \frac{dt^*(x)}{dx} = 0 \]

Rearrange to solve for \( \frac{dt^*(x)}{dx} \):
\[ \left( f''(t^*(x)) + x g''(t^*(x)) \right) \frac{dt^*(x)}{dx} = -g'(t^*(x)) \]
\[ \frac{dt^*(x)}{dx} = -\frac{g'(t^*(x))}{f''(t^*(x)) + x g''(t^*(x))} \]

Finally, substituting this back into the expression for \( \frac{dh}{dx} \):
\[ \frac{dh}{dx} = f'(t^*(x)) \cdot \left( -\frac{g'(t^*(x))}{f''(t^*(x)) + x g''(t^*(x))} \right) \]

\end{proof}

\begin{corollary}[Return improvement rate]
    In case of deterministic dynamics $\tau$, given a sequence of policies $\pi^\GG_i$, for $k \in \mathbb{N}_+$ have that:
    \[
    J(\pi^\GG_{k}) - J(\pi^\GG_{k-1}) = \frac{1}{{k}} \cdot \frac{J'(\pi^\GG_{k}) \cdot \hat{\HH}'(\pi^\GG_{k})}{J''(\pi^\GG_k) + \frac{1}{{k}} \hat{\HH}''(\pi^\GG_{k})} + O\left(\frac{1}{{k}^2}\right)
    \]
    where $\hat{\HH}(\pi)$ denotes the causal entropy of policy $\pi$, and the derivatives are taken with respect to the temperature $\alpha$.
\end{corollary}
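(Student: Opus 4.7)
The plan is to transport the problem from the $\pi^\GG$-sequence, where return‑differences are hard to control directly, onto the temperature path $\alpha \mapsto \pi_\alpha$, where Lemma~\ref{lemma:argmax-derivative} produces a closed‑form derivative.

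First, I would combine Lemmas~\ref{lemma:equivalence-folding-retraining} and~\ref{lemma:equivalence-folding-orig} (both valid under deterministic dynamics) to get $\pi^\GG_k = \pi^\FF_k = \pi_{\alpha(k)}$ with the linear schedule $\alpha(k)=k$. This reduces the finite-difference of returns along the $\pi^\GG$ trajectory to differentiating the smooth curve $\alpha\mapsto J(\pi_\alpha)$. Next, I would invoke a multi-step extension of Proposition~\ref{prop:rlwithkl}: at inverse temperature $\alpha$,
\[
\pi_\alpha \in \arg\max_\pi \bigl(J(\pi) + \tfrac{1}{\alpha}\,\hat{\HH}(\pi)\bigr),
\]
where $\hat{\HH}$ is the causal entropy of the trajectory law induced by $\pi$. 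This is the standard soft-Q variational principle, and the reason it applies here is precisely that our Definition~\ref{def:softvq} uses the posterior in $V^\pi$ (not a fixed prior), so the recursion is the causal-entropy Bellman backup rather than the prior-weighted one discussed in Section~\ref{sec:removing-incoherence}.

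With this characterisation in hand, I would apply Lemma~\ref{lemma:argmax-derivative} with $f = J$, $g = \hat{\HH}$, the policy as the optimisation variable $t$, and $x = 1/\alpha$, obtaining
\[
\frac{d}{dx}J(\pi_{1/x}) = -\frac{J'(\pi_{1/x})\,\hat{\HH}'(\pi_{1/x})}{J''(\pi_{1/x}) + x\,\hat{\HH}''(\pi_{1/x})}.
\]
A first-order Taylor expansion of $\alpha\mapsto J(\pi_\alpha)$ about $\alpha=k$ then converts the step from $\alpha=k-1$ to $\alpha=k$ (i.e.\ $\Delta x = \tfrac{1}{k-1}-\tfrac{1}{k} = \tfrac{1}{k(k-1)}$) into a finite difference of returns. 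Identifying $\pi_{1/x}$ at $x=1/k$ with $\pi^\GG_k$, reading off the leading term, translating the policy-derivatives into $\alpha$-derivatives by the chain rule (the pre-factor $\tfrac{1}{k}$ in the statement is the result of this change of variables), and bounding the second-order term via the Lagrange remainder gives the claimed identity up to $O(1/k^2)$.

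The main obstacle is establishing the max-causal-entropy variational characterisation used above in the multi-step regime. Proposition~\ref{prop:rlwithkl} only handles $T=1$; for $T>1$ I would need to verify that the per-timestep soft Bellman backups aggregate to a single global variational problem in $(\pi,\hat{\HH})$. The posterior-weighted $V^\pi$ of Definition~\ref{def:softvq} is exactly what makes this aggregation succeed, and this is the step where the paper's refinement of the control-as-inference formalism does real work. A secondary technical point is smoothness of $\alpha\mapsto \pi_\alpha$ needed to legitimise the Taylor expansion, which I would obtain from the implicit function theorem applied to the first-order optimality condition of the regularised objective, under the nondegeneracy assumption that the Hessian $J'' + \tfrac{1}{\alpha}\hat{\HH}''$ is invertible along the path.
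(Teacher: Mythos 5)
Your plan follows the paper's own proof essentially step for step: identify $\pi^\GG_k$ with $\pi_{\alpha(k)}$ for the linear schedule $\alpha(k)=k$ via the deterministic-dynamics equivalence lemmas, use the maximum-causal-entropy characterisation $\pi_\alpha=\arg\max_\pi\bigl(J(\pi)+\tfrac{1}{\alpha}\hat{\HH}(\pi)\bigr)$, apply Lemma~\ref{lemma:argmax-derivative} with $f=J$, $g=\hat{\HH}$, $x=1/\alpha$, and Taylor-expand over the step $\tfrac{1}{k-1}-\tfrac{1}{k}$. The only real difference is that the paper takes the multi-step variational characterisation (the step you flag as the main obstacle) as known from \citet{haarnoja_reinforcement_2017,levine_reinforcement_2018} rather than re-deriving it.
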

\begin{proof}
    To prove this, we use an alternative characterization of the maximum entropy  policies~\citep{haarnoja_reinforcement_2017,levine_reinforcement_2018}. For a fixed temperature $\alpha \in \RR_+$, the policy $\pi_\alpha$ maximizes the functional:
    \[
    J_\alpha(\pi) = \mathbb{E}_{s_t, a_t}[r(s_t, a_t) + \frac{1}{\alpha} \mathcal{H}(\pi(\cdot|s_t))]= J(\pi) + \frac{1}{\alpha} \hat{\HH}(\pi)
    \]
    where the expectation is taken over the trajectory induced by the policy $\pi$, $\HH(\pi(\cdot|s))$ denotes the Shannon entropy of the policy in state $s$, and $\hat{\HH}$ denotes the causal entropy of the policy $\pi$. By~\Cref{lemma:equivalence-folding-orig} we know that $\pi^\GG_j = \pi_{\alpha(j)}$ for $\alpha(j) = j$.

    From the Taylor expansion, for any twice-differentiable $u(\alpha)$ we have:
    \[
    u(\alpha + \eta) - u(\alpha) = \eta u'(\alpha) + O(\eta^2)
    \]
    Substituting $u(\alpha) = J(\arg\max_\pi J_{\frac{1}{\alpha}}(\pi))$, we obtain: 
    \begin{align*}
        J(\pi^\GG_{k-1}) - J(\pi^\GG_{k})
        &= J(\pi_{\alpha(k - 1)}) - J(\pi_{\alpha(k)}) \\
        &= u\left(\frac{1}{{k}} + \eta\right) - u\left(\frac{1}{{k}}\right)  \\
        &= \eta u'\left(\frac{1}{{k}}\right) + O\left(\eta^2\right)
    \end{align*}
    for $\eta = \frac{1}{k({k-1})}$. Now, using~\Cref{lemma:argmax-derivative}, we derive:
    \[
    u'(x) = -\frac{J'(\pi^*(x)) \hat{\HH}'(\pi^*(x))}{J''(\pi^*(x)) + x \hat{\HH}''(\pi^*(x))}
    \]
    and substituting $x = \frac{1}{{k}}$, and therefore $\pi^*(x) = \pi_{\alpha(k)} = \pi^\GG_k$, we get:
    \[
    u'(x) = -\frac{J'(\pi_{\alpha(k)}) \hat{\HH}'(\pi_{\alpha(k)})}{J''(\pi^\GG_k) + \frac{1}{{k}} \hat{\HH}''(\pi_{\alpha(k)})}
    \]
    which gives us the final equation:
    \[
    J(\pi^\GG_{k}) - J(\pi^\GG_{k-1}) = \frac{1}{{k}} \frac{J'(\pi^\GG_{k}) \hat{\HH}'(\pi^\GG_{k})}{J''(\pi^\GG_{k})) + \frac{1}{{k}} \hat{\HH}''(\pi^\GG_{k})} + O\left(\frac{1}{k}^2\right)
    \]
\end{proof}

\section{EXAMPLE ~\ref{example:counter} EXPLICIT CALCULATION }

Calculation for the~\Cref{example:counter} in detail.\\
We take a three-state, two action Markov decision process with initial state $\emptyset$ and a uniform prior policy:
\[
\pi(a_1|\emptyset) = \pi(a_2|\emptyset) = \frac{1}{2}
\]
with transition dynamics depicted on the diagram below:
\[\begin{tikzcd}
	{s_1} & \emptyset & {s_2}
	\arrow["{\frac{3}{4}}"', color={rgb,255:red,0;green,108;blue,238}, curve={height=12pt}, dashed, from=1-2, to=1-1]
	\arrow["{\frac{1}{2}}", curve={height=-12pt}, from=1-2, to=1-1]
	\arrow["{\frac{1}{4}}", color={rgb,255:red,0;green,108;blue,238}, curve={height=-12pt}, dashed, from=1-2, to=1-3]
	\arrow["{\frac{1}{2}}"', curve={height=12pt}, from=1-2, to=1-3]
\end{tikzcd}\]
where red dotted lines correspond to action $a_1$, and solid black lines to action $a_2$. We also assume that:
\[
r(s_0, \cdot) = \log 1 \qquad r(s_1, \cdot) = \log \frac{1}{3} \qquad r(s_2, \cdot) = \log \frac{2}{3}
\]
\subsection{Control-as-inference operator}
The first iteration of $\pi^\mathcal{G}_1(\cdot|\emptyset)$ computes the probability of getting reward for action $a_1$:
\[
\pi^\mathcal{G}_1(a_1|\emptyset) = \mathbb{P}(a_1|\mathcal{O}^2_{1:2}, \emptyset) = \frac{\mathbb{P}(\mathcal{O}^2_{1:2}|a_1, \emptyset)\mathbb{P}(a_1|\emptyset)}{\mathbb{P}(\mathcal{O}^2_{1:2})}
= \left(\frac{1}{2}\cdot \frac{3}{4}\cdot\frac{1}{3} + \frac{1}{2}\cdot \frac{1}{4}\cdot\frac{2}{3}\right)/{\mathbb{P}(\mathcal{O}^2_{1:2})}
= \frac{5}{24} / {\mathbb{P}(\mathcal{O}^2_{1:2})}
\]
and for $a_2$:
\[
\pi^\mathcal{G}_1(a_2|\emptyset) = \mathbb{P}(a_2|\mathcal{O}^2_{1:2}, \emptyset) = \frac{\mathbb{P}(\mathcal{O}^2_{1:2}|a_2, \emptyset)\mathbb{P}(a_2|\emptyset)}{\mathbb{P}(\mathcal{O}^2_{1:2})}
= \left(\frac{1}{2}\cdot \frac{1}{2}\cdot\frac{1}{3} + \frac{1}{2}\cdot \frac{1}{2}\cdot\frac{2}{3}\right)/{\mathbb{P}(\mathcal{O}^2_{1:2})}
= \frac{6}{24} / {\mathbb{P}(\mathcal{O}^2_{1:2})}
\]
After normalisation, we get:
\[
\pi^\mathcal{G}_1(a_1|\emptyset) = \frac{5}{11} \qquad \pi^\mathcal{G}_1(a_2|\emptyset) = \frac{6}{11}
\]
Now, applying the same computation second time - for $a_1$:
\[
\pi^\mathcal{G}_2(a_1|\emptyset) = \mathbb{P}(a_1|\mathcal{O}^2_{1:2}, \emptyset) = \frac{\mathbb{P}(\mathcal{O}^2_{1:2}|a_1, \emptyset)\pi^\mathcal{G}_1(a_1|\emptyset)}{\mathbb{P}(\mathcal{O}^2_{1:2})}
= \left(\frac{5}{11}\cdot \frac{3}{4}\cdot\frac{1}{3} + \frac{5}{11}\cdot \frac{1}{4}\cdot\frac{2}{3}\right)/{\mathbb{P}(\mathcal{O}^2_{1:2})}
= \frac{25}{11\cdot 12} / {\mathbb{P}(\mathcal{O}^2_{1:2})}
\]
and for $a_2$:
\[
\pi^\mathcal{G}_2(a_2|\emptyset) = \mathbb{P}(a_1|\mathcal{O}^2_{1:2}, \emptyset) = \frac{\mathbb{P}(\mathcal{O}^2_{1:2}|a_2, \emptyset)\pi^\mathcal{G}_1(a_2|\emptyset)}{\mathbb{P}(\mathcal{O}^2_{1:2})}
= \left(\frac{6}{11}\cdot \frac{3}{4}\cdot\frac{1}{3} + \frac{6}{11}\cdot \frac{1}{4}\cdot\frac{2}{3}\right)/{\mathbb{P}(\mathcal{O}^2_{1:2})}
= \frac{36}{11\cdot 12} / {\mathbb{P}(\mathcal{O}^2_{1:2})}
\]
Again applying normalisation:
\[
\pi^\mathcal{G}_2(a_1|\emptyset) = \frac{25}{61} \qquad \pi^\mathcal{G}_2(a_2|\emptyset) = \frac{36}{61}
\]
\subsection{Raising temperature}
Raising temperature policy $\pi_{\alpha(2)}(\cdot|\emptyset)$ first modifies the MDP by setting the rewards:
\[
r_2(s_1) = \log \frac{1}{9} \qquad r_2(s_2) = \log \frac{4}{9}
\]
and then recomputes the posterior for $a_1$:
\[
\pi_{\alpha(2)}(a_1|\emptyset) = \mathbb{P}(a_1|\mathcal{O}^2_{1:2}, \emptyset) = \frac{\mathbb{P}(\mathcal{O}^2_{1:2}|a_1, \emptyset)\mathbb{P}(a_1|\emptyset)}{\mathbb{P}(\mathcal{O}^2_{1:2})}
= \left(\frac{1}{2}\cdot \frac{3}{4}\cdot\frac{1}{9} + \frac{1}{2}\cdot \frac{1}{4}\cdot\frac{4}{9}\right)/{\mathbb{P}(\mathcal{O}^2_{1:2})}
= \frac{7}{72} / {\mathbb{P}(\mathcal{O}^2_{1:2})}
\]
and $a_2$:
\[
\pi_{\alpha(2)}(a_2|\emptyset) = \mathbb{P}(a_1|\mathcal{O}^2_{1:2}, \emptyset) = \frac{\mathbb{P}(\mathcal{O}^2_{1:2}|a_2, \emptyset)\mathbb{P}(a_2|\emptyset)}{\mathbb{P}(\mathcal{O}^2_{1:2})}
= \left(\frac{1}{2}\cdot \frac{1}{2}\cdot\frac{1}{9} + \frac{1}{2}\cdot \frac{1}{2}\cdot\frac{4}{9}\right)/{\mathbb{P}(\mathcal{O}^2_{1:2})} = \frac{10}{72}/{\mathbb{P}(\mathcal{O}^2_{1:2})}
\]
Again applying normalisation:
\[
\pi_{\alpha(2)}(a_1|\emptyset) = \frac{7}{17} \qquad \pi_{\alpha(2)}(a_2|\emptyset) = \frac{10}{17}
\]

\section{ITERATED BOLTZMANN CONVERGENCE}\label{sec:iterated-boltzmann}
\begin{example}[Boltzmann-coherent mountain race]
    Let us revisit the~\Cref{example:two-cards}. We might compute the Boltzmann-coherent policy by using the construction from~\Cref{def:iterated-coherence}.
    \begin{figure}[h]
        \centering
        \begin{tikzpicture}
          \node (root) at (0,0) {${\emptyset}$};
          \node (R) at (2,1) {$\scalebox{2}{\mountain}$};
          \node (Rc) at (2,-1) {$\scalebox{2}{\forest}$};
          \node (RR) at (4,1.5) {$\scalebox{2}{\gold}$};
          \node (RRc) at (4,0.5) {$\scalebox{2}{\skull}$};
          \node (RcR) at (4,-0.5) {$\scalebox{2}{\silver}$};
          \node (RcRc) at (4,-1.5) {$\scalebox{2}{\silver}$};
        
          \draw[->] (root) -- (R) node [midway, above left] {$\frac{1}{2}$};
          \draw[->] (root) -- (Rc) node [midway, below left] {$\frac{1}{2}$};
          \draw[->] (R) -- (RR) node [midway, above left] {$\frac{1}{2}$};
          \draw[->] (R) -- (RRc) node [midway, below left] {$\frac{1}{2}$};
          \draw[->] (Rc) -- (RcR) node [midway, above left] {$\frac{1}{2}$};
          \draw[->] (Rc) -- (RcRc) node [midway, below left] {$\frac{1}{2}$};
        
          \node at (6,1.5) {$\Prob{R} = \frac{1}{4}$};
          \node at (6,0.5) {$\Prob{R} = \frac{3}{4}$};
          \node at (6,-0.5) {$\Prob{R} = \frac{3}{4}$};
          \node at (6,-1.5) {$\Prob{R} = \frac{3}{4}$};
        \end{tikzpicture}
        \\
        \begin{tikzpicture}
          \node (root) at (0,0) {${\emptyset}$};
          \node (R) at (2,1) {$\scalebox{2}{\mountain}$};
          \node (Rc) at (2,-1) {$\scalebox{2}{\forest}$};
          \node (RR) at (4,1.5) {$\scalebox{2}{\gold}$};
          \node (RRc) at (4,0.5) {$\scalebox{2}{\skull}$};
          \node (RcR) at (4,-0.5) {$\scalebox{2}{\silver}$};
          \node (RcRc) at (4,-1.5) {$\scalebox{2}{\silver}$};
        
          \draw[->] (root) -- (R) node [midway, above left] {$\frac{2}{5}$};
          \draw[->] (root) -- (Rc) node [midway, below left] {$\frac{3}{5}$};
          \draw[->] (R) -- (RR) node [midway, above left] {$1$};
          \draw[->] (R) -- (RRc) node [midway, below left] {$0$};
          \draw[->] (Rc) -- (RcR) node [midway, above left] {$\frac{1}{2}$};
          \draw[->] (Rc) -- (RcRc) node [midway, below left] {$\frac{1}{2}$};
        
          \node at (6,1.5) {$\Prob{R} = \frac{1}{4}$};
          \node at (6,0.5) {$\Prob{R} = \frac{3}{4}$};
          \node at (6,-0.5) {$\Prob{R} = \frac{3}{4}$};
          \node at (6,-1.5) {$\Prob{R} = \frac{3}{4}$};
        \end{tikzpicture}
        \\
        \begin{tikzpicture}
          \node (root) at (0,0) {${\emptyset}$};
          \node (R) at (2,1) {$\scalebox{2}{\mountain}$};
          \node (Rc) at (2,-1) {$\scalebox{2}{\forest}$};
          \node (RR) at (4,1.5) {$\scalebox{2}{\gold}$};
          \node (RRc) at (4,0.5) {$\scalebox{2}{\skull}$};
          \node (RcR) at (4,-0.5) {$\scalebox{2}{\silver}$};
          \node (RcRc) at (4,-1.5) {$\scalebox{2}{\silver}$};
        
          \draw[->] (root) -- (R) node [midway, above left] {$\frac{4}{7}$};
          \draw[->] (root) -- (Rc) node [midway, below left] {$\frac{3}{7}$};
          \draw[->] (R) -- (RR) node [midway, above left] {$1$};
          \draw[->] (R) -- (RRc) node [midway, below left] {$0$};
          \draw[->] (Rc) -- (RcR) node [midway, above left] {$\frac{1}{2}$};
          \draw[->] (Rc) -- (RcRc) node [midway, below left] {$\frac{1}{2}$};
        
          \node at (6,1.5) {$\Prob{R} = \frac{1}{4}$};
          \node at (6,0.5) {$\Prob{R} = \frac{3}{4}$};
          \node at (6,-0.5) {$\Prob{R} = \frac{3}{4}$};
          \node at (6,-1.5) {$\Prob{R} = \frac{3}{4}$};
        \end{tikzpicture}
        \caption{The fixed point of iterated $f$-coherence achieved after $t = 2 = T$ iterations.}
        \label{fig:enter-label}
    \end{figure}
\end{example}
\section{POLICY STABILITY}

In search of the sufficient conditions for the autoregressive goal-conditioned policy to be optimal, one point of focus might be the question of how to \emph{extend} trajectories: we might hope to stitch the optimal trajectory from actions that are optimal at a k-step-lookahead; optimal policies should then be those that behave consistently, in the sense that it would make the same decisions as if it were allowed to take actions looking $k$ steps into the future. Formally, we have the following definition.

\begin{definition}[Policy-stable reasoning]
    Given deterministic dynamics $\tau$ and a prior $p(a_t|s_t)$, we say that a policy $\pi(a_t|s_t) \propto p(\OO_{t:T}|s_t, a_t)$ is \emph{(1-)policy-stable}, if for any actions $a^1_t, a^1_{t+1}, a^2_t, a^2_{t+1}$, with $a^i_{t+1} = \arg\max_{a^i_{t+1}} p(\OO_{t:T}|a^1_{t, t+1}, s_t)$ (where $s^i_{t+1} = \tau(s_t, a^i_t)$), we have that:
    \[
        p(\OO_{t:T}|a^1_{t, t+1}, s_t) > p(\OO_{t:T}|a^2_{t, t+1}, s_t) \implies  
        p(\OO_{t:T}|a^1_{t}, s_t) > p(\OO_{t:T}|a^2_{t}, s_t)
    \]
    In other words: given that that $a^i_{t+1}$  best continues $a^i_t$, if $(a^1_t, a^1_{t+1})$ is preferred to $(a^2_t, a^2_{t+1})$, then $a^1_t$ should be preferred to $a^2_t$.

\end{definition}

\begin{figure}
\centering
    \begin{tikzpicture}
      \node (root) at (0,0) {$s_0$};
      \node (R) at (1.5,1) {$s_1$};
      \node (Rc) at (1.5,-1) {$s'_1$};
      
      \node (RR) at (3,1) {$s_2$};
      \node (RRc) at (3,-1) {$s'_2$};
      
      \node (RcR) at (4.5,1) {$s_3$};
      \node (RcRc) at (4.5,-0.5) {$s'_3$};
      \node (RcRcRc) at (4.5,-1.5) {$s''_3$};
    
      \draw[->] (root) -- (R) node [midway, above left] {$\pi(s_1|s_0)$}; %
      \draw[->] (root) -- (Rc) node [midway, below left] {$\pi(s'_1|s_0)$}; %
      
      \draw[->] (R) -- (RR) node [midway, above ] {$1$};
      \draw[->] (Rc) -- (RRc) node [midway, below ] {$1$};
      \draw[->] (RR) -- (RcR) node [midway, above ] {$1$};
      \draw[->] (RRc) -- (RcRc) node [midway, above left] {$\frac{2}{3}$};
      \draw[->] (RRc) -- (RcRcRc) node [midway, below] {$\frac{1}{3}$};
    
      \node at (6,1) {$\Prob{R} = \frac{1}{2}$};
      \node at (6,-0.5) {$\Prob{R} = 1$};
      \node at (6,-1.5) {$\Prob{R} = 0$};
    \end{tikzpicture}
    \caption{$\pi$ cannot satisfy both 2-policy-stability and 1-policy-stability.}
    \label{fig:not-policy-stable}
\end{figure}

Unfortunately, not only this is not a sufficient condition - in some instances, it is even contradicting optimality. This is because it is impossible to properly extend the the policy-stability over an arbitrary number of actions. To show that, we can introduce a notion of $n$-policy-stable predictor, which says that for any sequences $a_{1:k}, a'_{1:k}$ and actions $a, a'$, such that the sequence $a_{1:k}$ best continues $a$ and sequence $a'_{1:k}$ best continues $a'$, we have that $(a, a_{1:k})$ being preferred over $(a', a'_{1:k})$ implies that $a$ is preferred to $a'$. 

From direct calculation, it can be seen that, if a policy derived by control-as-inference from a prior over the MDP shown in the \Cref{fig:not-policy-stable} is to be 1-policy-stable, then it must be the case that $\pi(s_1|s_0) \geq \pi(s'_1|s_0)$, while considering two actions into the future and then following the prior, moving into $s_2$ is a better choice. However, to be 2-policy-stable, it has to satisfy $\pi(s_1|s_0) < \pi(s'_1|s_0)$, since after considering three moves in the future, moving to $s'_1$ becomes the better choice. Since 1-policy-stable and 2-policy-stable are mutually exclusive for a policy derived by control-as-inference from this prior, one has to make a choice as to which $n$-policy-stability to require, which, in practice, requires the knowledge of the time horizon (and for it to be fixed and finite).

\section{EXPERIMENTAL DETAILS}\label{app:experiments}

\subsection{MDP environments}

We perform all experiments on a suite of 140 small randomly generated tree MDPs of varying sizes. We set the number of actions $|A|$ between 2 and 4, and time horizon $T$ between 3 and 6, resulting in environments ranging in state space sizes between $4$ and $122$, and total state-action sizes between $8$ and $366$. For the misalignment experiment, we used both deterministic and stochastic environments in equal proportions, with transition probabilities and rewards sampled uniformly from relevant distributions. For the estimated effective horizon experiment, we used only deterministic environments, because otherwise the estimator was difficult to compute. Full details of the environments are available in the released code repository available at~\url{https://github.com/jkarwowski/incoherence}, in the config file for each experiment. 

\subsection{Estimating effective horizon}\label{app:eff-horizon}

Effective horizon is a metric that is difficult to compute exactly even for small environments. We approximate $\hat{H}$ using Monte Carlo simulation and , to output $\min_{k} H_k$, where $H_k = k + \log_A m$ for which $\text{GORP}(k,m)$ recovers the optimal policy with probability at least 1/2, as in ~\citet[Definition 5.2]{laidlaw_bridging_2024}. We note this is a different estimator from the one used by the original paper - see Appendix C and \href{https://github.com/cassidylaidlaw/effective-horizon}{their codebase}. This is because they work on much larger environments such as Atari games, for which our approach would be impractical. On the other hand, their upper bounds such as those using \citet[Theorem 5.4]{laidlaw_bridging_2024} turned to be insufficiently tight for our purposes.

\subsection{Misalignment versus incoherence}

The correlation between incoherence and misalignment depends on the specific value of temperature $\delta$. We present the graph of correlations in~\Cref{fig:misalignment-incoherence-global}. For small values of $\delta$, the policy is converges to deterministic, making incoherence very large variance and correlations disappear. On the other hand, in the limit of $\delta \to \infty$, the policy tends to uniform distribution and again rendering incoherence useless. We note that this is only a heuristic explanation, note and we do not have a satisfactory theoretical explanation for the exact shape of the graph.

\begin{figure*}
\begin{subfigure}{0.5\textwidth}
    \includegraphics[width=0.9\linewidth]{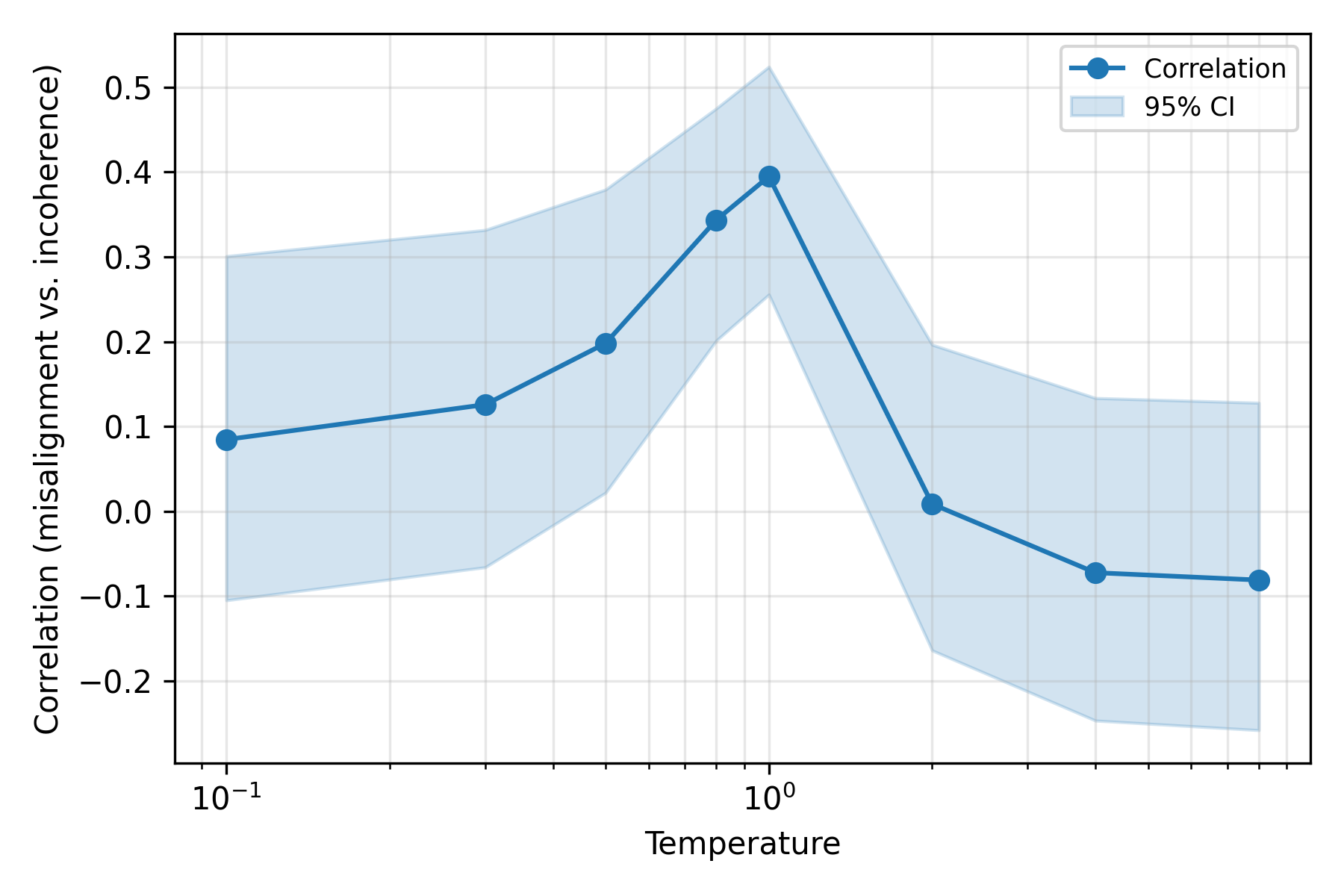}
\caption{\phantom{a}}
\label{fig:misalignment-incoherence-global}
\end{subfigure}
\begin{subfigure}{0.5\textwidth}
    \includegraphics[width=0.9\linewidth]{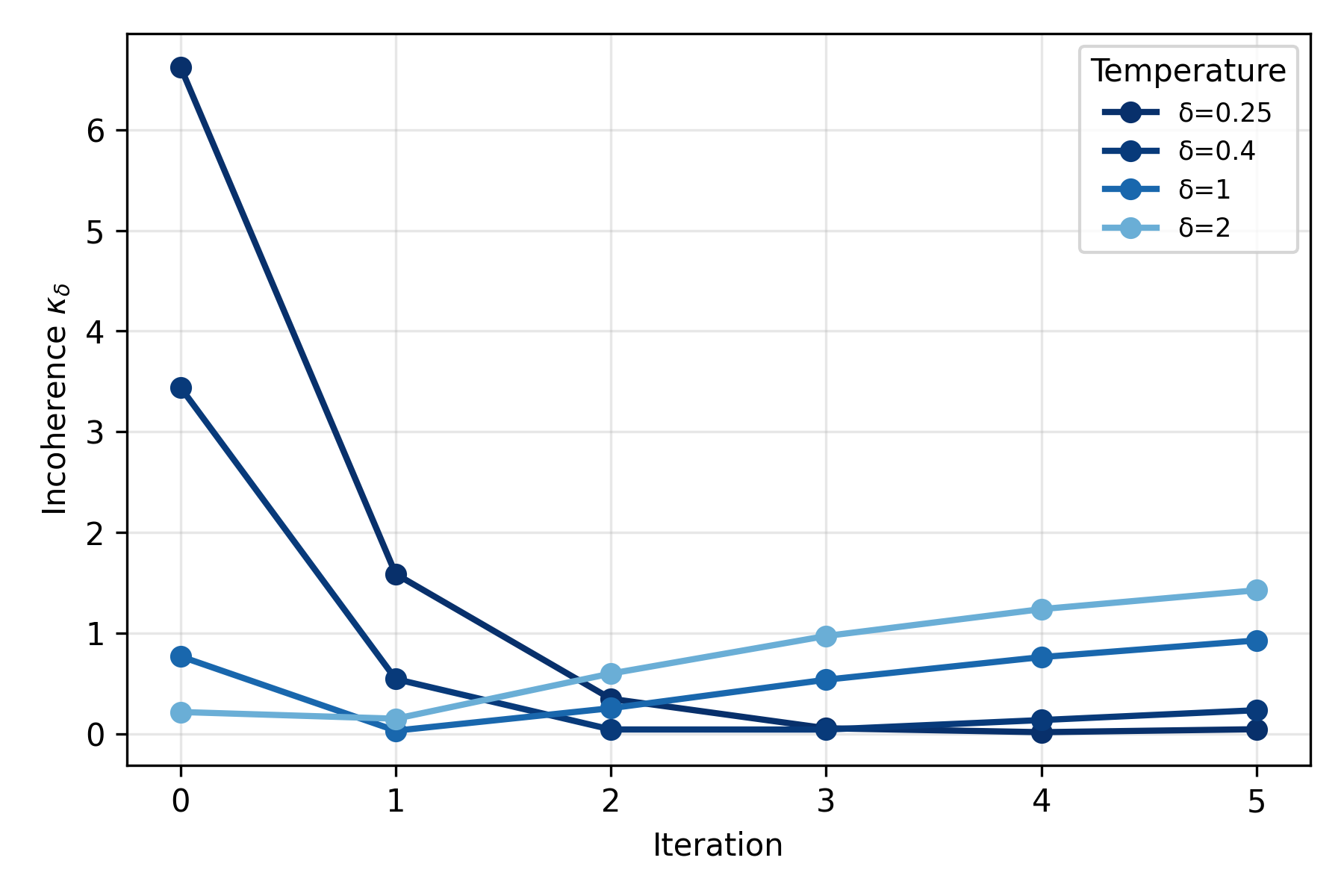}
\caption{\phantom{b}}
\label{fig:cor-5-11}
\end{subfigure}

\caption{(a) An aggregated version of the graph in~\Cref{fig:misalignment-incoherence-temp1}, each point corresponding to a particular choice of temperature. Correlation between misalignment and incoherence is only meaningful in moderate temperatures near $\delta=1$. (b) Experimental illustration of~\Cref{corr:limit}, only in the simultaneous limit of $(\delta, i)\to(0, \infty)$ incoherence vanishes.}
\label{fig:image2}
\end{figure*}

\subsection{Experimental validation of~\Cref{corr:limit} and~\Cref{prop:strong-improvement}}
\Cref{corr:limit} states that in the limit of both $(\delta, i) \to (0, \infty)$, incoherence $\kappa_\delta(\pi^i)$ vanishes. We note that taking both limits simultaneously is essential: indeed, holding one coordinate fixed, we do not get convergence to $0$.
We illustrate this point experimentally in~\Cref{fig:cor-5-11}. Each line shows incoherence for a different choice of temperature $\delta$. Holding $\delta$ fixed and increasing iteration $i$, the incoherence of $\pi^i$ initially decreases, such that at a certain time it dips below every higher temperature, but later starts to increase. This is because~\Cref{prop:strong-improvement} guarantees the improvement in return, which is not equal to the KL-regularised return optimised for by the soft-conditioned policies.

\Cref{prop:strong-improvement} states that regardless of the environment, we should see a consistent improvement in return. We validate this empirically for 5 randomly chosen deterministic-dynamics MDPs and 5 randomly chosen stochastic-dynamics MDPs, $|A| = 2$, $|T| = 4$, both with $i = 5$ iteration steps, with results conforming to the theory.

\begin{figure*}
\begin{subfigure}{0.5\textwidth}
    \includegraphics[width=0.9\linewidth]{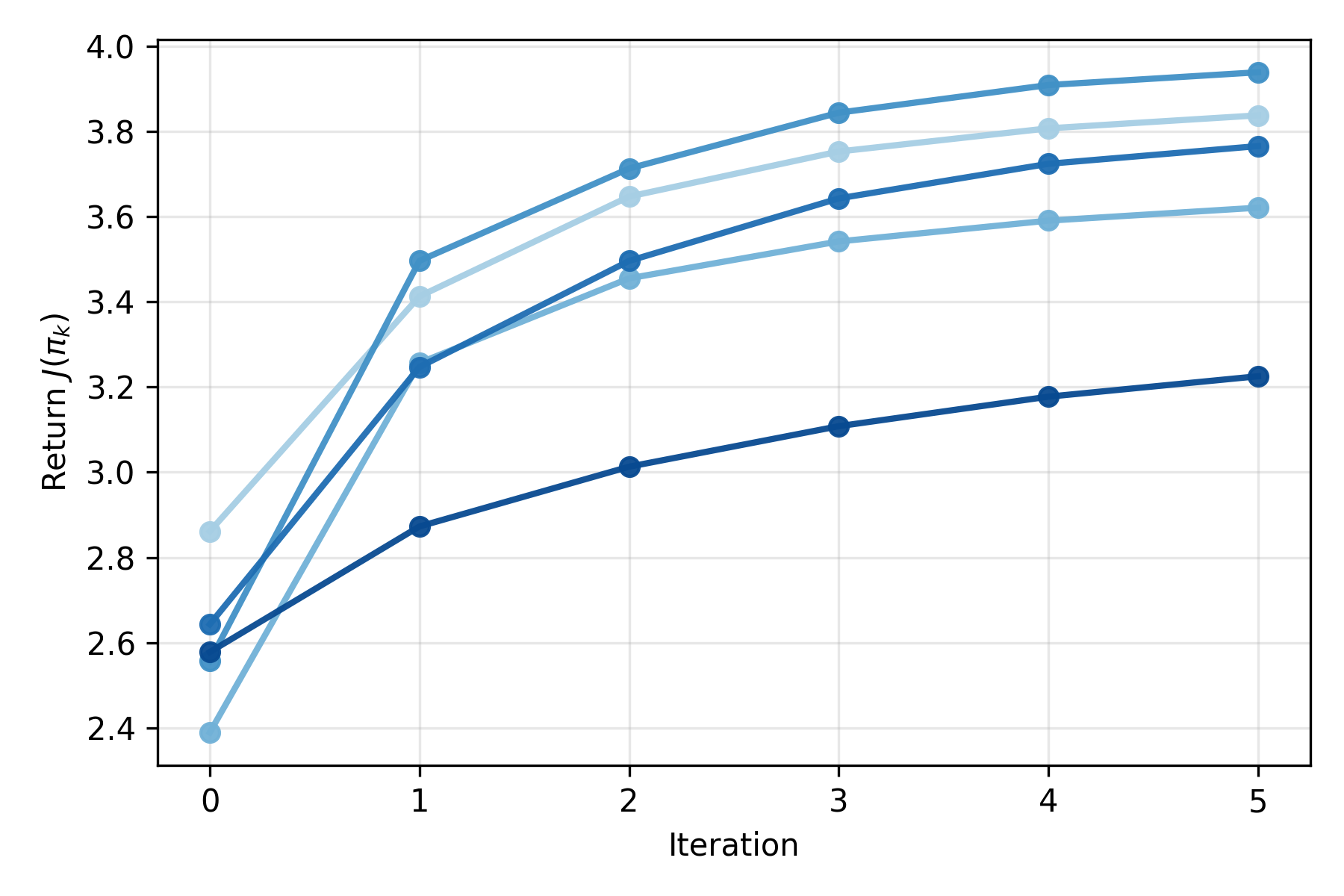}
\caption{\phantom{a}}
\label{fig:subim1}
\end{subfigure}
\begin{subfigure}{0.5\textwidth}
    \includegraphics[width=0.9\linewidth]{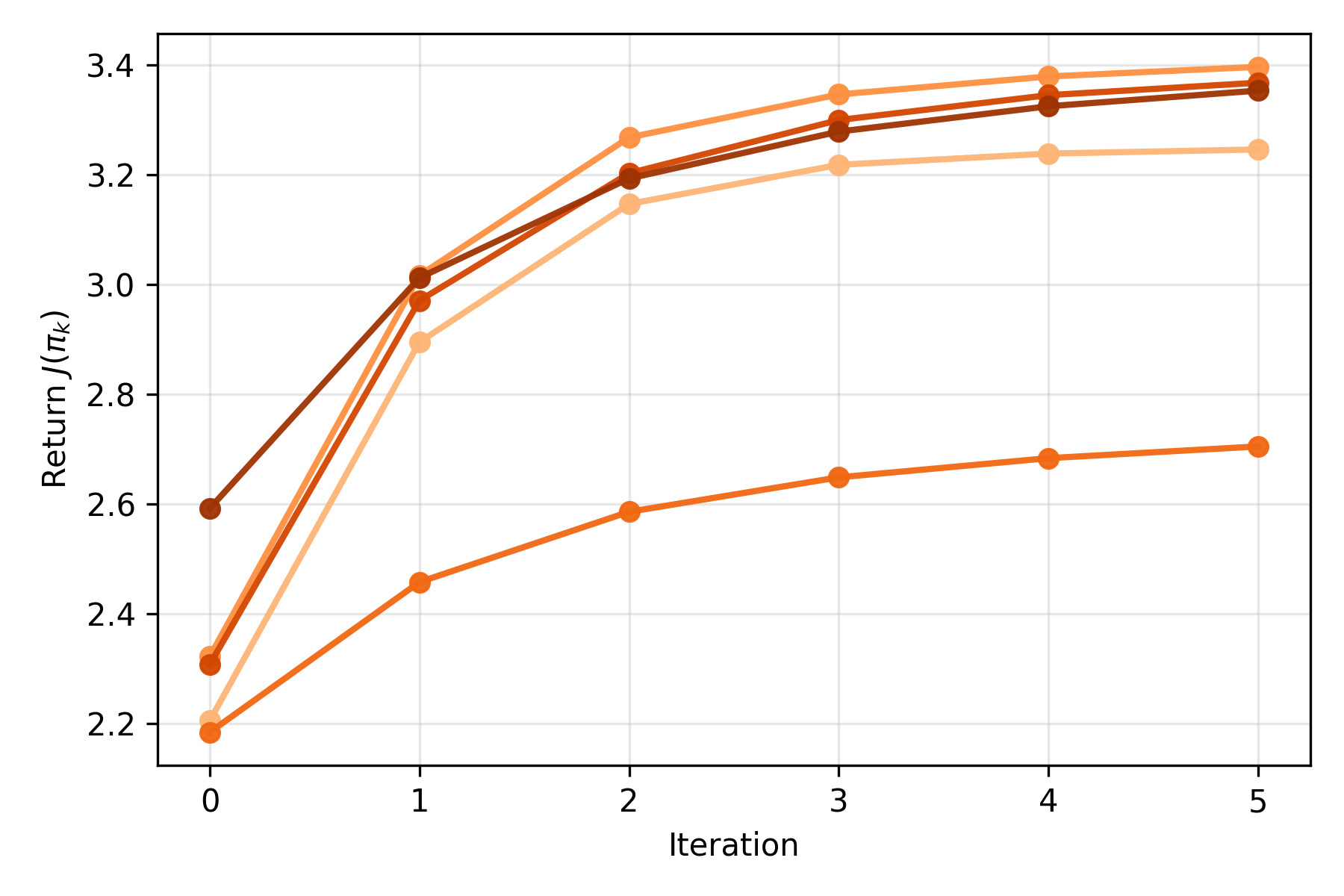}
\caption{\phantom{b}}
\label{fig:subim2}
\end{subfigure}

\caption{Experimental validation of~\Cref{corr:limit}, (a) on $5$ randomly chosen deterministic MDPs, and (b) on 5 randomly chosen stochastic environments.}
\label{fig:image2}
\end{figure*}

\subsection{Confidence intervals}

All confidence intervals in the paper are constructed with the bootstrap method, using $B = 2000$ samples.

\end{document}